\providecommand{\Comments}{0}  %
\newcommand{\mytodo}[1]{\ifnum\Comments=1{#1}\fi}
\newcommand{\tableoftodos}{\ifnum\Comments=1 \listoftodos[Comments/To Do's] \fi}
\renewcommand{\algorithmiccomment}[1]{\bgroup\hfill$\rhd$~#1\egroup}
\definecolor{Gred}{RGB}{219, 50, 54}
\definecolor{Ggreen}{RGB}{60, 186, 84}
\definecolor{Gblue}{RGB}{72, 133, 237}
\definecolor{Gyellow}{RGB}{247, 178, 16}
\definecolor{ToCgreen}{RGB}{0, 128, 0}
\definecolor{myGold}{RGB}{231,141,20}
\definecolor{myBlue}{rgb}{0.19,0.41,.65}
\definecolor{myPurple}{RGB}{175,0,124}
\theoremstyle{plain}
\newtheorem{theorem}{Theorem}[section]
\newtheorem{proposition}[theorem]{Proposition}
\newtheorem{lemma}[theorem]{Lemma}
\newtheorem{corollary}[theorem]{Corollary}
\newtheorem{observation}[theorem]{Observation}
\theoremstyle{definition}
\newtheorem{definition}[theorem]{Definition}
\theoremstyle{remark}
\newcommand{\set}[1]{\left \{ #1 \right \}}
\newcommand{\inparen}[1]{\left ( #1 \right )}
\newcommand{\insquare}[1]{\left [ #1 \right ]}
\newcommand{\cX}{\mathcal{X}}
\newcommand{\cM}{\mathcal{M}}
\newcommand{\cA}{\mathcal{A}}
\newcommand{\ind}{{\bm 1}}
\newcommand{\eps}{\varepsilon}
\newcommand{\N}{\mathbb{N}}
\newcommand{\Z}{\mathbb{Z}}
\newcommand{\R}{\mathbb{R}}
\newcommand{\cD}{\mathcal{D}}
\newcommand{\bv}{{\bm v}}
\newcommand{\bu}{{\bm u}}
\newcommand{\bzero}{\mathbf{0}}
\newcommand{\bx}{{\bm x}}
\newcommand{\bA}{{\bm A}}
\newcommand{\cL}{\mathcal{L}}
\renewcommand{\phi}{\varphi}
\newcommand{\tO}{\widetilde{O}}
\newcommand{\tOmega}{\widetilde{\Omega}}
\newcommand{\cN}{\mathcal{N}}
\newcommand{\cB}{\mathcal{B}}
\newcommand{\cK}{\mathcal{K}}
\newcommand{\oeps}{\overline{\eps}}
\newcommand{\odelta}{\overline{\delta}}
\newcommand{\hbx}{\hat{\bx}}
\newcommand{\htheta}{\widehat{\theta}}
\newcommand{\stable}{\mathrm{stable}}
\newcommand{\poly}{\mathrm{poly}}
\newcommand{\dTV}{d_{\mathrm{tv}}}
\DeclareMathOperator{\argmin}{argmin}
\DeclareMathOperator{\TDLap}{\mathsf{TDLap}}
\DeclareMathOperator{\delsen}{\overline{\Delta}}
\DeclareMathOperator*{\E}{\mathbb{E}}
\renewcommand{\arraystretch}{1.2} %
\newcommand{\DelOutputPert}{\mathsf{DelOutputPert}}
\newcommand{\SCovOutputPert}{\mathsf{SCOutputPert}}
\newcommand{\PhasedERM}{\mathsf{Phased}\text{-}\mathsf{ERM}}
\newcommand{\PhasedSCO}{\mathsf{Phased}\text{-}\mathsf{SCO}}
\newcommand{\StronglyConvexERM}{\mathsf{Strongly}\text{-}\mathsf{Convex}\text{-}\mathsf{ERM}}
\newcommand{\StronglyConvexSCO}{\mathsf{Strongly}\text{-}\mathsf{Convex}\text{-}\mathsf{SCO}}
\icmltitlerunning{On User-Level Private Convex Optimization}
\date{\today}
\begin{document}
	
\twocolumn[
\icmltitle{On User-Level Private Convex Optimization}

\icmlsetsymbol{equal}{*}

\begin{icmlauthorlist}
	\icmlauthor{Badih Ghazi}{equal,google}
	\icmlauthor{Pritish Kamath}{equal,google}
	\icmlauthor{Ravi Kumar}{equal,google}
	\icmlauthor{Raghu Meka}{equal,google,ucla}
	\icmlauthor{Pasin Manurangsi}{equal,googth}
	\icmlauthor{Chiyuan Zhang}{equal,google}
\end{icmlauthorlist}

\icmlaffiliation{google}{Google Research, Mountain View, CA}
\icmlaffiliation{googth}{Google Research, Thailand}
\icmlaffiliation{ucla}{UCLA, Los Angles, CA}

\icmlcorrespondingauthor{Pasin Manurangsi}{pasin@google.com}
\icmlcorrespondingauthor{Pritish Kamath}{pritish@alum.mit.edu}

\icmlkeywords{Machine Learning, Differential Privacy, User-Level}

\vskip 0.3in
]

\printAffiliationsAndNotice{}  %

\begin{abstract}
We introduce a new mechanism for stochastic convex optimization (SCO) with user-level differential privacy guarantees. The convergence rates of this mechanism are similar to those in the prior work of 
\citet{LevySAKKMS21,NarayananME22},
but with two important improvements.  Our mechanism does not require any smoothness assumptions on the loss.  Furthermore, our bounds are also the first where the minimum number of users needed for user-level privacy has no dependence on the dimension and only a logarithmic dependence on the desired excess error. 

The main idea underlying the new mechanism is to show that the optimizers of strongly convex losses have low local deletion sensitivity, along with an output perturbation method for functions with low local deletion sensitivity, which could be of independent interest.
\end{abstract}

\section{Introduction}
Differential Privacy (DP)~\cite{dwork2006calibrating}~is a formal notion that protects the privacy of each user contributing to a dataset when releasing statistics about the dataset. The settings considered in literature have typically involved each user contributing a single ``item'' to the dataset. Thus the most commonly used notion of DP protects the privacy of each item, and we refer to it as \emph{item-level} DP. However, when a dataset contains multiple items contributed by each user, it is essential to simultaneously protect the privacy of all items contributed by any individual user; this notion has come to be known as \emph{user-level} DP.

Convex optimization is one of the most basic and powerful computational tools in statistics and machine learning. In the most abstract setting, each item corresponds to a loss function. The goal is to return a value that achieves as small a loss as possible, either averaged over the data (empirical risk minimization) or the population distribution underlying the data (stochastic convex optimization).

Given its importance, a large body of work has tackled the convex optimization problem under item-level DP (e.g.,~\citet{ChaudhuriM08,ChaudhuriMS11,KiferST12,BassilyST14,BassilyFTT19,WangYX17,FeldmanKT20,AsiFKT21,GopiLL22}) with the optimal risk bounds established in many standard settings, such as when the loss is Lipschitz or strongly convex.
User-level DP has also been studied recently in various learning tasks \cite{LiuSYK020,GhaziKM21}; see also the survey by \citet[][Section  4.3.2]{kairouz2019advances} for its relevance in federated learning, where the question of determining trade-offs between item-level and user-level DP is highlighted. \citet{LevySAKKMS21,NarayananME22} have studied convex optimization with user-level DP; these results have two main drawbacks: they require the loss function to be smooth and they do not achieve good risk bounds in some regime of parameters.  A question in \citet{LevySAKKMS21} was if the smoothness requirement can be removed.  In this paper, we resolve this question in the affirmative by introducing novel mechanisms for convex optimization under user-level DP.  En route, we also improve existing excess risk bounds for a large regime of parameters.

\subsection{Background}

We introduce some notation to state our results concretely.
For $n, m \in \N$, suppose there are $n$ users, and let the input to the $i$th user be $\bx_i := (x_{i, 1}, \dots, x_{i, m})$. %
Two datasets $\bx = (\bx_1, \ldots, \bx_n)$ and $\bx' = (\bx_1', \ldots, \bx_n')$ are said to be \emph{user-level neighbors}, denoted $\bx \sim \bx'$, if there is an index $i_0 \in [n]$ such that $\bx_i = \bx'_{i}$ for all $i \in [n] \smallsetminus \set{i_0}$.%
\footnote{We use \emph{item-level} to refer to the case where $m = 1$.}

We recall the  definition of DP, extended from~\citet{dwork2006our,dwork2006calibrating}; see also \citet{DworkR14,Vadhan17}:

\begin{definition}[(User-Level) Differential Privacy (DP)]
Let $\eps > 0$ and $\delta \in [0, 1]$.
A randomized algorithm $\cM : \cX^{n \times m} \to \mathcal{O}$ is \emph{$(\eps,\delta)$-differentially private} (\emph{$(\eps,\delta)$-DP}) if, for all $\bx \sim \bx'$ and all (measurable) outcomes $E \subseteq \mathcal{O}$, it holds that $\Pr[\cM(\bx)\in E] \le e^\eps \cdot \Pr[\cM(\bx')\in E] + \delta$. 
\end{definition}

Throughout the paper, we assume that $\eps \in (0, 1]$ and $\delta \in (0, 1/2]$, and we will not state this explicitly.

\textbf{Convex Optimization.}
A convex optimization (CO) problem over a parameter space $\cK \subseteq \R^d$ and domain $\cX$, is specified by a \emph{loss function} $\ell : \cK \times \cX \to \R$, that is convex in the first argument. Here, $\ell$ is said to be \emph{$G$-Lipschitz} if all sub-gradients have norm at most\footnote{We use $\|\cdot\|$ to denote the Euclidean, i.e., $\ell_2$-norm.} $G$, i.e., $\|\nabla_\theta \ell(\theta; x)\| \le G$ for all $\theta$, $x$. Moreover, $\ell$ is said to be \emph{$\mu$-strongly convex} if for all $x \in \cX$, $\ell(\theta; x) - \frac{\mu}{2} \|\theta\|^2$ is convex. We consider the case where $\cK \subseteq \R^d$ has $\ell_2$-diameter at most $R$; we use $\cB_d(\theta, r)$ to denote the $\ell_2$ ball of radius $r$ centered at $\theta$.

The \emph{empirical loss} on dataset $\bx = (\bx_1, \ldots, \bx_n)$ is 
\begin{align*}
\cL(\theta; \bx) &\textstyle~:=~ \frac{1}{nm} \sum_{i \in [n]} \sum_{j \in [m]} \ell(\theta; x_{i, j}),
\end{align*}
whereas the \emph{population loss} over a distribution $\cD$ on $\cX$ is 
\begin{align*}
\cL(\theta; \cD) := \E_{x \sim \cD}\left[\ell(\theta; x)\right].
\end{align*}
For a loss function $\ell$ and  dataset $\bx$, let $\theta^*_{\ell, \cK}(\bx)$ denote an element of $\argmin_{\theta \in \cK} \cL(\theta; \bx)$ (ties broken arbitrarily), and let $\theta^*_{\ell, \cK}(\cD)$ be defined similarly. When $\ell, \cK$ are clear from context, we may drop them and simply write $\theta^*(\bx)$ or $\theta^*(\cD)$. When there is no ambiguity in $\bx$ and $\cD$, we may drop them and simply write $\theta^*$. \emph{Empirical risk minimization} (ERM) corresponds to the goal of minimizing $\cL(\theta; \bx)$ and \emph{stochastic convex optimization} (SCO) to the goal of minimizing $\cL(\theta; \cD)$. If $\htheta$ denotes the output of our algorithm, its \emph{excess risk} is defined as $\E[\cL(\htheta; \bx) - \cL(\theta^*; \bx)]$ and $\E[\cL(\htheta; \cD) - \cL(\theta^*; \cD)]$ for ERM and SCO, respectively.

\subsection{Our Results}\label{subsec:our-results}

We provide user-level DP algorithms for both the ERM as well as the SCO problems.  For both problems, we consider the basic case of Lipschitz (including non-smooth) losses and the case of Lipschitz strongly convex losses.

\textbf{DP-ERM.} We give an algorithm for any Lipschitz and convex loss function with excess risk $O\left(\frac{\sqrt{d}}{n\sqrt{m}}\right)$ that works for any $n \geq \tOmega_\eps(1)$. Previously, no user-level DP algorithm was known without a smoothness assumption on the loss function. Even with smoothness, the known algorithm of~\citet{NarayananME22} incurs an additional additive error of $\tO_\eps(1/\sqrt{n})$. In particular, the previous excess risk does \emph{not} converge to zero if we fix the number of users ($n$) and let $m \to \infty$. %
 Concretely, to achieve excess risk $\alpha$,~\citet{NarayananME22} need $n \geq \tOmega_\eps(1/\alpha^2)$. In contrast, we only need a logarithmic dependence of $n \geq \tOmega_\eps(\log(1/\alpha))$. Additionally, for loss functions that are also strongly convex, we improve the excess risk bound to $\tO_\eps\left(\frac{d}{n^2 m}\right)$. Again, no previous user-level DP algorithm was known in this setting (without the smoothness assumption).

\textbf{DP-SCO.} Here, we give algorithms with similar excess risk bounds except with additive terms of $\tO\left(\frac{1}{\sqrt{nm}}\right)$ and $\tO\left(\frac{1}{nm}\right)$ for the convex and strongly convex cases, respectively. These additive terms are known to be tight, even without privacy. Again, previous results~\cite{LevySAKKMS21,NarayananME22} are only known under the smoothness assumption and the excess risk bounds do not converge to zero when $n$ is fixed.

A summary of the previous and new bounds is in \Cref{tab:summary}.

\textbf{Tightness of our Risk Bounds.} Our excess risk bounds are nearly tight for a large regime of parameters. In particular,~\citet{LevySAKKMS21} proved a lower bound of $\Omega\left(\frac{1}{\sqrt{n}} + \frac{\sqrt{d}}{\eps n\sqrt{m}}\right)$ for DP-SCO. It is not hard to extend this to prove a lower bound of $\Omega\left(\frac{1}{nm} + \frac{d}{\eps^2 n^2m}\right)$ for the strongly convex DP-SCO case. These two lower bounds hold for any $n \geq \Theta(\sqrt{d}/\eps)$. For DP-ERM, it is possible to extend these lower bounds to get $\Omega\left(\frac{\sqrt{d}}{\eps n\sqrt{m}}\right)$ and $\Omega\left(\frac{d}{\eps n^2m}\right)$ lower bounds for the convex and strongly convex cases, respectively; however, these DP-ERM lower bounds require an additional assumption that $n = O(d/\eps^2)$. We discuss these lower bounds in more detail in \Cref{app:lb}.

\newcommand{\sst}[1]{\shortstack{#1}}
\renewcommand{\arraystretch}{1.2}
\begin{table*}[t]
\begin{center}
{\small
\begin{tabular}{ |c|c|c|c|c| }  
\cline{2-5}
\multicolumn{1}{c |}{} & Additional & Item-Level DP & User-Level DP & User-Level DP \\
\multicolumn{1}{c |}{} & Assumptions on $\ell$ & (Previous Work) & (Previous Work) & (Our Results) \\
\hline
& (no additional & & --- & \\
& assumption) &  $\tO_\eps\left(\frac{\sqrt{d}}{n}\right)$ & & $\tO_\eps\left(\frac{\sqrt{d}}{n\sqrt{m}}\right)$ for $n \geq \tOmega_\eps(1)$ \\
\cline{2-2} \cline{4-4}
& & \cite{BassilyST14} & $\tO_\eps\left(\frac{1}{\sqrt{n}} + \frac{\sqrt{d}}{n\sqrt{m}}\right)$ & (\Cref{thm:convex-erm}) \\
& Smooth & & for $n \geq \tOmega_\eps(1)$ & \\
& & & \cite{NarayananME22} &  \\
\cline{2-5}
ERM & Strongly Convex & & --- & \\
& & $\tO_\eps\left(\frac{d}{n^2}\right)$ & & $\tO_\eps\left(\frac{d}{n^2m}\right)$ for $n \geq \tOmega_\eps(1)$ \\
\cline{2-2} \cline{4-4}
& Strongly Convex & \cite{BassilyST14} & $\tO_\eps\left(\frac{d}{n^2m}\right)$ & (\Cref{thm:strongly-convex-erm}) \\
& \& Smooth & & for $n \geq \tOmega_\eps(1)$ &  \\
&  & & \cite{NarayananME22} & \\
\hline
& (no additional & & --- & \\
& assumption) &  $\tO_\eps\left(\frac{1}{\sqrt{n}} + \frac{\sqrt{d}}{n}\right)$ & & $\tO_\eps\left(\frac{1}{\sqrt{nm}} + \frac{\sqrt{d}}{n\sqrt{m}}\right)$ for $n \geq \tOmega_\eps(1)$ \\
\cline{2-2} \cline{4-4}
& & \cite{BassilyFTT19} & $\tO_\eps\left(\frac{1}{\sqrt{n m}} + \frac{\sqrt{d}}{n\sqrt{m}}\right)$ & (\Cref{thm:convex-sco}) \\
SCO & Smooth & & for $n \geq \tOmega_\eps(\min\{\sqrt[3]{m}, \sqrt{m}/d\})$ & \\
& & & \cite{NarayananME22} &  \\
\cline{2-5}
& Strongly Convex & $\tO_\eps\left(\frac{1}{n} + \frac{d}{n^2}\right)$ & --- & $\tO_\eps\left(\frac{1}{n m} + \frac{d}{n^2m}\right)$ for $n \geq \tOmega_\eps(1)$ \\
& & \cite{FeldmanKT20} & & (\Cref{thm:strongly-convex-sco}) \\
\hline
 \end{tabular}
 }
 \end{center}
 \caption{Summary of our results and previous results. In all rows, the loss function is assumed to be convex and Lipschitz. The $\tO_\eps$ hides polynomial dependency on the convexity, Lipschitzness, strong convexity and smoothness parameters, $\eps$, and polylogarithmic dependency on $1/\delta, n, m$. We remark that, while it seems plausible to derive bounds using their techniques,~\citet{LevySAKKMS21,NarayananME22} did not explicitly consider the strongly convex (and smooth) case for DP-SCO.} \label{tab:summary}
 \end{table*}

\section{Technical Overview}

Our main technical contribution is an \emph{improved output perturbation} algorithm for user-level DP compared to item-level DP. Recall that in item-level DP, the output perturbation algorithm~\cite{ChaudhuriMS11} computes the empirical risk minimizer $\theta^*$ and outputs $\theta^* + Z$ where $Z \sim \cN(\sigma^2 \cdot I)$ for a suitable $\sigma$; naturally, the smaller the $\sigma$ for which DP guarantees hold, the better the accuracy. It is known that for \emph{strongly convex} loss functions, this algorithm is DP for $\sigma = \tO_\eps\left(\frac{1}{n}\right)$. As discussed more below, we give a similar algorithm that only requires $\sigma = \tO_\eps(\frac{1}{n\sqrt{m}})$. This improvement is critical in our results. 

\textbf{Deletion Sensitivity.}
We exploit a refined notion of sensitivity to facilitate our improved output perturbation algorithm. Bounding the sensitivity of the quantity to be computed is one of the most used methods for achieving DP guarantees.  
Indeed, the DP guarantee of the output perturbation algorithm in \cite{ChaudhuriMS11} for item-level privacy follows from the fact that the ($\ell_2$-)sensitivity of the empirical risk minimizer is $O\left(\frac{1}{n}\right)$~\cite{Shalev-ShwartzSSS09}. Formally,
\begin{equation}\label{eq:introsensitivity}
 \|\theta^*(\bx) - \theta^*(\bx')\| \leq O(1/n),  
\end{equation}
for any two neighboring datasets $\bx, \bx'$.

Ideally, we would like the ``sensitivity'' of $\theta^*$ to become $\tO\left(\frac{1}{n\sqrt{m}}\right)$ for some notion of ``sensitivity''. However, the standard notion of sensitivity as above (or even local sensitivity~\cite{NissimRS07}) does not work: %
even for mean estimation\footnote{This corresponds to $\ell(\theta; x) = \|\theta - x\|^2$ (here $x \in \R^d$) for which the empirical risk minimizer $\theta^*(\bx)$ is the average over all the input points.}, we can change a user to have all their input vectors far from the mean, resulting in the same $O(1/n)$ sensitivity as before. Instead, we use the notion of \emph{deletion sensitivity}. Here, instead of considering $\bx'$ that results from {\em changing} a user's data in $\bx$, we only consider $\bx'$ that results from \emph{removing} a user's data completely.

\textbf{Bounding Deletion Sensitivity of Empirical Risk Minimizer.}
We show that the (local) deletion sensitivity of $\theta^*(\bx)$ is at most $\tO\left(\frac{1}{n\sqrt{m}}\right)$. To describe our approach, let us briefly recall the proof of \eqref{eq:introsensitivity} (item level setting, i.e., $m=1$) from \citet{Shalev-ShwartzSSS09}. The proof proceeds by bounding the norm of the gradient at $\theta^* := \theta^*(\bx)$ with respect to $\bx'$ (i.e., $\|\nabla \cL(\theta^*(\bx); \bx')\|$); strong convexity then implies that $\theta^*(\bx')$ is close to $\theta^*(\bx)$. The gradient norm bound is based on the observation that $\nabla \cL(\theta^*; \bx) = 0$ due to optimality, and that $\nabla \cL(\theta^*; \bx) - \nabla \cL(\theta^*; \bx')$ is only $1/n$ times a difference of the gradients of \emph{two input points} (that got changed from $\bx$ to $\bx'$). These two claims yield the desired $O(1/n)$ bound.

For the user-level setting, the situation is similar except that $\nabla \cL(\theta^*; \bx) - \nabla \cL(\theta^*; \bx')$ now becomes $O\left(\frac{1}{nm}\right)$ times the gradient of \emph{all input points of a single user} (that got removed from $\bx$ to $\bx'$). An observation we use here is that in SCO---where all $nm$ input points are drawn i.i.d.---we may view the input generation as a two-step process: (i) draw the $nm$ input points, and (ii) randomly allocate these $nm$ input points to $n$ users. With this view in mind, $\nabla \cL(\theta^*; \bx) = 0$ means that the sum of the $nm$ gradients is zero. The randomness in (ii) means that $\nabla \cL(\theta^*; \bx) - \nabla \cL(\theta^*; \bx')$ is now $O\left(\frac{1}{nm}\right)$ times the sum of $m$ vectors \emph{randomly chosen} from these $nm$ vectors that sum to zero. Applying concentration inequalities (and a union bound), we can show that w.h.p. $\|\nabla \cL(\theta^*; \bx) - \nabla \cL(\theta^*; \bx')\| \leq \tO\left(\frac{1}{n\sqrt{m}}\right)$ as desired.

\textbf{Noise Addition Algorithm for Deletion Sensitivity.}
Adding noise is still not trivial, even after bounding the (local) deletion sensitivity. As stated earlier, since we do not have the bound for the (standard) sensitivity, adding Gaussian noise directly to $\theta^*(\bx)$ will not yield the desired DP guarantee. To overcome this, we give an algorithm that adds noise w.r.t. the (local) deletion sensitivity. At a high-level, our algorithm has to perform a test to ensure that $\bx$ is ``sufficiently stable'' (akin to propose-test-release~\cite{DworkL09}) before adding Gaussian noise. Our algorithm is an adaptation of that of~\citet{KL21}, which focuses on the real-valued case and adds Laplace noise.

\textbf{From Output Perturbation to DP-SCO/DP-ERM.}
Finally, once we have the improved output perturbation algorithm, we use the localization-based algorithms (called Phased-SCO/Phased-ERM) of \citet{FeldmanKT20} with the enhanced output perturbation algorithm as subroutines to arrive at our results for DP-SCO/DP-ERM in the convex case. The strongly convex case follows from a known black-box reduction from~\citet{BassilyST14}. %

{\em Remark.} Our algorithm for ERM guarantees an $\tO_{\eps}\inparen{\frac{\sqrt{d}}{n\sqrt{m}}}$ excess risk w.h.p. over the input being a random permutation of any given dataset $\bx$. We emphasize that this is a mild assumption on the distribution of the dataset, and the same guarantees immediately follow for stronger assumptions such as the dataset $\bx$ being drawn from any exchangeable distribution e.g. drawn i.i.d. from $\cD$. Furthermore, we stress that it is impossible to have an excess risk bound for ERM that is better than $\tO_\eps(\sqrt{d}/n)$ for worst-case datasets since $x_{i,j}$ could be all the same for each $i$, which becomes essentially identical to the item-level setting with $m=1$.

\textbf{Comparison to Previous Work.}
Previous work~\cite{LevySAKKMS21,NarayananME22} on user-level DP-SCO and DP-ERM tackle the problem using privatized first order methods (i.e., variants of gradient descent), sometimes with regularization. The main tool in these works is a user-level DP algorithm for mean estimation of vectors, which is used to aggregate the gradients with errors smaller  than in the item-level setting. Such a result needs to rely on the fact that the average of the gradients of each user is well-concentrated; this can be interpreted as the average gradient having low deletion sensitivity. As discussed earlier, our result significantly generalizes this by showing that this also holds for the minimizer of any strongly convex function. Our algorithms also provide a novel paradigm of output perturbation for user-level DP learning---deviating from the first order methods explored in previous works.

In addition to the aforementioned work of~\citet{KL21}, a notion similar to local deletion sensitivity has been studied in the context of DP graph analysis under the names of ``empirical sensitivity''~\cite{ChenZ13} and ``down sensitivity''~\cite{RaskhodnikovaS16}. Several mechanisms were developed using this notion, including an algorithm for monotonic real-valued functions~\cite{ChenZ13} and for many graph parameters. However, we are not aware of a generic algorithm for the high-dimensional case similar to our \Cref{alg:output-perturbation}.

\section{Output Perturbation for Strongly Convex Losses}\label{sec:output-pert}

At the heart of our results is a new DP output perturbation algorithm (\Cref{alg:strongly-convex-output-perturbation}) for strongly convex losses. The guarantee of this algorithm does not hold for any worst-case dataset, but instead holds for a {\em random permutation} of any given dataset. In particular, for any permutation $\pi$ over $[n] \times [m]$, let $\bx^\pi$ be the permutation of $\bx$ by $\pi$, i.e., $x^\pi_{i,j} := x_{\pi(i,j)}$. As discussed in the previous section, this is a mild assumption but is required for our results.

\begin{theorem} \label{thm:output-pert}
Fix a $G$-Lipschitz and $\mu$-strongly convex loss $\ell$ and a sufficiently large constant $C$. For all $\eps, \delta, \beta > 0$ and $n \ge C \log(1/\delta) / \eps$, there exists 
an $(\eps, \delta)$-DP algorithm $\SCovOutputPert$, such that for all $\bx$, with probability $\ge 1 - \beta$ over a random permutation $\pi$ over $[n] \times [m]$, $\SCovOutputPert(\bx^\pi)$ outputs $\theta^*(\bx^\pi) + \cN(0, \sigma^2 \cdot I)$ where $\sigma = O\inparen{\frac{G\sqrt{\log n \log(1/\delta) / \eps + \log(1/\beta)}}{\mu n \sqrt{m}} \cdot \frac{(\log(1/\delta))^{1.5}}{\eps^2}}$.
\end{theorem}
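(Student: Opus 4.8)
The plan is to establish two ingredients and then combine them. First, I would prove that the empirical risk minimizer $\theta^*$ has low \emph{local deletion sensitivity} after a random permutation: for a $\mu$-strongly convex, $G$-Lipschitz loss, with probability $\ge 1-\beta$ over a random permutation $\pi$ of $[n]\times[m]$, for \emph{every} user $i_0$, writing $\bx^{\pi,-i_0}$ for the dataset with user $i_0$'s block deleted, we have $\|\theta^*(\bx^\pi) - \theta^*(\bx^{\pi,-i_0})\| \le \tO\inparen{\frac{G\sqrt{\log n\log(1/\delta)/\eps + \log(1/\beta)}}{\mu n\sqrt m}} =: \Delta$. This follows the sketch in the technical overview: by first-order optimality $\nabla\cL(\theta^*(\bx^\pi);\bx^\pi)=0$, so $\sum_{(i,j)}\nabla_\theta\ell(\theta^*(\bx^\pi);x^\pi_{i,j})=0$. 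Deleting user $i_0$ perturbs the (renormalized) gradient at $\theta^*(\bx^\pi)$ by $\frac{1}{(n-1)m}\sum_{j\in[m]}\nabla_\theta\ell(\theta^*(\bx^\pi);x^\pi_{i_0,j})$, and since the $nm$ gradient vectors sum to zero and $i_0$'s block is a uniformly random size-$m$ subset, this is $\frac{1}{(n-1)m}$ times a sum of $m$ vectors sampled without replacement from a mean-zero population of $nm$ vectors each of norm $\le G$. A vector Bernstein / matrix-Chernoff bound for sampling without replacement gives $\tilde O(G\sqrt{m})$ for this sum with probability $\ge 1 - \beta/n$ per user; a union bound over the $n$ users (and, for the \emph{local} deletion-sensitivity needed by the release mechanism, also a fresh argument at the deleted points — but the $G$-Lipschitz bound on each summand makes the per-user gradient-norm bound robust to which user is deleted) yields the claim. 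Then $\mu$-strong convexity of $\cL(\cdot;\bx^{\pi,-i_0})$ converts this gradient-norm bound at $\theta^*(\bx^\pi)$ into the distance bound $\Delta$ via the standard inequality $\|\theta-\theta^*_{\text{other}}\| \le \frac{1}{\mu}\|\nabla\cL_{\text{other}}(\theta)\|$.

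Second, I would invoke (or instantiate) the release mechanism for functions of low local deletion sensitivity — the high-dimensional analogue of \citet{KL21}, i.e., \Cref{alg:output-perturbation} — which, given a function whose local deletion sensitivity is at most $\Delta$ on the input with high probability, performs a propose-test-release-style stability check and, if it passes, outputs $f(\bx) + \cN(0,\sigma^2 I)$ with $\sigma = O\inparen{\Delta \cdot \frac{(\log(1/\delta))^{1.5}}{\eps^2}}$, and is $(\eps,\delta)$-DP, provided $n \ge C\log(1/\delta)/\eps$. Plugging $f = \theta^*$ and the bound $\Delta$ from the first step gives exactly the stated $\sigma$, and the high-probability event of the first step is precisely the ``with probability $\ge 1-\beta$ over $\pi$'' clause; note DP must hold for \emph{all} $\bx$ unconditionally (the permutation is internal randomness of the input model, but the mechanism's privacy is worst-case), which is why the release mechanism is designed to be private on every input and only the \emph{utility} (namely, that the stability test passes and the output is $\theta^*(\bx^\pi)+\cN(0,\sigma^2 I)$ rather than $\bot$) is conditioned on the good permutation event.

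The main obstacle is the first step, specifically getting a clean $\tilde O(\sqrt m)$ concentration for sampling $m$ vectors \emph{without replacement} from a mean-zero population and making it a \emph{local} deletion-sensitivity bound (simultaneously over all $n$ choices of the deleted user, and stable to one further deletion as the mechanism's test may require). The without-replacement sampling can be handled by a negative-association or by a standard reduction to the with-replacement case; the union bound over users costs only the $\log n$ inside the square root, which matches the target $\sigma$. A secondary technical point is that strong convexity of the \emph{empirical} loss is inherited from strong convexity of $\ell$ (with the same $\mu$), so the conversion from gradient norm to parameter distance is immediate and contributes the $1/\mu$ factor; and the Lipschitz constant $G$ enters only through the per-summand norm bound, contributing the linear $G$ factor — together reproducing the prefactor in the theorem.
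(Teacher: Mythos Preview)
Your two-ingredient decomposition is exactly the paper's approach: (i) a deletion-sensitivity bound for $\theta^*$ under a random permutation (the paper's \Cref{thm:deletion-stability} and \Cref{cor:delsen-neighborhood-whp}), and (ii) the generic release mechanism of \Cref{thm:output-pert-generic}. The gradient-at-$\theta^*$ argument via sampling-without-replacement concentration and the $\mu$-strong-convexity conversion are also as in the paper.

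There is one imprecision worth fixing. The release mechanism (\Cref{thm:output-pert-generic}) does not merely require a bound on the local deletion sensitivity $\delsen\theta^*(\bx^\pi) = \max_{i_0}\|\theta^*(\bx^\pi)-\theta^*(\bx^\pi_{-i_0})\|$; it requires a bound on $\delsen_r\theta^*(\bx^\pi) = \max_{|S|\le r}\delsen\theta^*(\bx^\pi_{-S})$ for $r = 4\kappa = \Theta(\log(1/\delta)/\eps)$. Your parenthetical (``a fresh argument at the deleted points\dots the $G$-Lipschitz bound makes the per-user gradient-norm bound robust'') and your later remark that ``the union bound over users costs only the $\log n$'' understate what is needed. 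The correct argument (this is \Cref{cor:delsen-neighborhood-whp}) is: for each \emph{fixed} $S$ with $|S|\le r$ and each $i\notin S$, the single-deletion argument of \Cref{thm:deletion-stability} applies verbatim to $\bx^\pi_{-S}$ (since, conditioned on the multiset of items landing outside $S$, their assignment to the remaining users is still uniformly random), and one then union-bounds over all $\sum_{s\le r}\binom{n}{s}\cdot(n-s)\le n^{r+1}$ such pairs. This union bound over $n^{O(r)}$ events, not over $n$ users, is precisely what produces the $\sqrt{r\log n}=\sqrt{\log n\cdot\log(1/\delta)/\eps}$ term inside your $\Delta$; Lipschitzness alone does not supply it.
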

The expected $\ell_2$-distance between the output estimate and the true minimizer thus scales as $\tO_\eps\inparen{\frac{\sqrt{d}}{n\sqrt{m}}}$. This should be compared with the item-level (i.e., $m = 1$) setting where the bound is $\tO_\eps(\sqrt{d}/n)$~\cite{ChaudhuriM08,ChaudhuriMS11}.

\subsection{Deletion Sensitivity \& A Generic Output Perturbation Algorithm}

Before we can prove \Cref{thm:output-pert}, we need to introduce the notion of local deletion sensitivity and present a generic output perturbation algorithm for low local deletion sensitivity functions and datasets. We stress that the algorithm in this section works for any such function and can be applied beyond the context of convex optimization.

\textbf{Local Deletion Sensitivity.}
For any $\bx = (\bx_1, \ldots, \bx_n)$, let $\bx_{-i}$, denote the dataset obtained by deleting the $i$th user's data $\bx_i$ from $\bx$. For any subset $S \subseteq [n]$, let $\bx_{-S}$ denote the dataset obtained by deleting %
$\bx_i$ from $\bx$ for all $i \in S$.

\begin{definition}\label{def:deletion-sensitivity}
The \emph{local ($\ell_2$-)deletion sensitivity} of function $f$ at dataset $\bx$ with $n$ users is defined as $\delsen f(\bx) := \max_{i \in [n]} \|f(\bx) - f(\bx_{-i})\|$.
Moreover, for $r \in \N$, let $\delsen_r f(\bx) := \max_{S \subseteq [n], |S| \le r} \delsen f(\bx_{-S})$.
\end{definition}

The difference between the usual definition of local sensitivity~\cite{NissimRS07} and that of local deletion sensitivity is that the latter definition only applies to {\em removal} of a user's data. This means that standard frameworks such as propose-test-release~\cite{DworkL09} cannot be directly used here. We however show that this sensitivity notion still allows us to design an algorithm with small error on any dataset for which $\delsen_r f(\bx)$ is small for any sufficiently large $r = \Theta(\log(1/\delta) / \eps)$. The  guarantee is given below.

\begin{theorem} \label{thm:output-pert-generic}
Let $f : \cX^{* \times m} \to \R^d$, and $\Delta > 0$ be a predefined parameter. There exists an $(\eps, \delta)$-DP algorithm that either outputs $\perp$ or a vector in $\R^d$. Furthermore, there exists $r = O(\log(1/\delta)/\eps)$ such that, on input dataset $\bx$ that satisfies $\delsen_{r} f(\bx) \leq \Delta$, it never returns $\perp$ and simply returns $f(\bx) + \cN(0, \sigma^2 \cdot I)$ where $\sigma = O\left(\Delta \cdot \frac{(\log(1/\delta))^{1.5}}{\eps^2}\right)$.
\end{theorem}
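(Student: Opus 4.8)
The plan is to build a propose-test-release-style mechanism, adapting the real-valued construction of \citet{KL21} so that it outputs a high-dimensional Gaussian. Fix a smoothing parameter $\beta = \Theta(\eps/\log(1/\delta))$ and set $r = \Theta(1/\beta) = \Theta(\log(1/\delta)/\eps)$. The object the mechanism reasons about is the \emph{deletion distance to instability}: for a dataset $\by$, let $\phi(\by) := \min\{ |S| : \delsen f(\by_{-S}) > \Delta \}$, with $\phi(\by) = \infty$ if no such $S$ exists. By \Cref{def:deletion-sensitivity} the hypothesis $\delsen_{r} f(\bx) \le \Delta$ is precisely the statement $\phi(\bx) > r$. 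The key structural fact I will use is that $\phi$ is \emph{one-sided $1$-Lipschitz under user deletion}: $\phi(\by_{-i}) \ge \phi(\by) - 1$ for every $i$, since any witnessing unstable sub-dataset of $\by_{-i}$ is, after re-inserting $i$ into the deleted set, a witnessing sub-dataset for $\by$; on the other hand $\phi$ can \emph{increase} by an unbounded amount under a deletion, namely when one deletes the user responsible for the instability.

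The mechanism then has two stages. (i) \textbf{Test:} using discrete Laplace noise at scale $1/\eps$, privately check whether $\phi(\bx)$ exceeds a threshold $T = \Theta(\log(1/\delta)/\eps)$ (with $r$ chosen a suitable constant factor larger than $T$); if the test fails, output $\perp$. (ii) \textbf{Release:} otherwise output $f(\bx) + \cN(0, \sigma^2 \cdot I)$, where the noise scale is assembled in the style of the smooth-sensitivity / geometric mechanism, by summing geometrically decaying contributions calibrated to $\delsen_0 f(\bx), \delsen_1 f(\bx), \dots, \delsen_r f(\bx)$ --- each of which is at most $\Delta$ precisely when the test passed, while the levels $k > r$ contribute at most $\delta$ in total and are truncated away. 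Accuracy follows at once: if $\delsen_{r} f(\bx) \le \Delta$ then $\phi(\bx) > r \ge T + \Omega(\log(1/\delta)/\eps)$, so by taking the test noise to be appropriately one-sided / truncated (as in \citet{KL21}) the test \emph{always} passes and $\perp$ is never returned; the released vector is $f(\bx) + \cN(0,\sigma^2 I)$ with $\sigma = \Theta\!\big( \Delta \cdot \tfrac{1}{\beta} \cdot \tfrac{\sqrt{\log(1/\delta)}}{\eps} \big) = \Theta\!\big( \Delta \cdot \tfrac{(\log(1/\delta))^{1.5}}{\eps^{2}} \big)$, the two extra factors arising from summing the $r+1$ geometric noise levels ($1/\beta = \Theta(\log(1/\delta)/\eps)$) and from calibrating a Gaussian to $(\eps,\delta)$-DP ($\sqrt{\log(1/\delta)}/\eps$).

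The substance of the proof is privacy, and the main obstacle is exactly the asymmetry noted above: since $\phi$ has unbounded sensitivity in the increasing direction, the standard propose-test-release analysis --- which wants $|\phi(\bx) - \phi(\bx')| \le 1$ for neighbors --- does not apply directly. I will resolve this as \citet{KL21} do: arrange the test so that its $\perp$/non-$\perp$ verdict depends on the data only through the one-sided bound $\phi(\by_{-i}) \ge \phi(\by) - 1$, which makes the $\perp$-probability change by at most a factor $e^{\eps}$ in the direction that this bound controls, and show that $\perp$ is returned with probability exceeding $\delta$ only on inputs that are \emph{robustly} unstable (no single deletion can rescue them), which controls the remaining direction up to $\delta$; and, conditioned on the test passing, argue that along any deletion chain joining $\bx$ to a neighbor the deletion sensitivities at all levels $\le r$ remain $\le \Delta$, so the smoothly calibrated Gaussian release is $(\tfrac{\eps}{2}, \tfrac{\delta}{2})$-indistinguishable there; a basic composition over the two stages then gives $(\eps,\delta)$-DP. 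I expect the delicate points to be (a) making precise the ``robustly unstable'' characterization that renders the $\perp$-verdict differentially private despite the one-sidedness of $\phi$, and (b) fixing the constants tying $\beta$, $r$, $T$ and $\sigma$ together so that the truncated smooth-sensitivity release and the test compose to \emph{exactly} $(\eps,\delta)$-DP; the passage from \citet{KL21}'s scalar Laplace mechanism to the $\R^d$ Gaussian version is, by contrast, routine once the scalar blueprint and the $\ell_2$ version of smooth sensitivity are in hand.
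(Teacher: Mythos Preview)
Your plan has a real gap at the test stage. The quantity $\phi(\bx)$ is one-sided Lipschitz under \emph{deletion}, but user-level DP here is defined for \emph{replacement} neighbors, and for those $\phi$ is not Lipschitz in either direction. Take $\bx$ in which user $i$'s data alone forces $\|f(\bx)-f(\bx_{-i})\|>\Delta$, so $\phi(\bx)=0$; let $\bx'$ replace user $i$ by benign data so that $\delsen_r f(\bx')\le\Delta$, hence $\phi(\bx')>r$. These are replacement neighbors, yet any noisy-threshold test on $\phi$ returns $\perp$ on $\bx$ with probability $\approx 1$ and on $\bx'$ with probability $\le\delta$, violating $(\eps,\delta)$-DP before the release stage is ever reached. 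Your ``robustly unstable'' escape hatch does not help: this $\bx$ is \emph{not} robustly unstable, since the single deletion of user $i$ does rescue it ($\phi(\bx_{-i})$ is large), so the claim ``$\perp$ is returned with probability exceeding $\delta$ only on robustly-unstable inputs'' is simply false here. The same asymmetry breaks the smooth-sensitivity release you invoke: $\max_{k\le r} e^{-\beta k}\,\delsen_k f(\cdot)$ is not a $\beta$-smooth upper bound with respect to replacement neighbors when the underlying local sensitivity is deletion-based. Finally, ``a deletion chain joining $\bx$ to a neighbor'' does not exist for replacement neighbors; the path $\bx\to\bx_{-i}=\bx'_{-i}\to\bx'$ contains an insertion, which is exactly the uncontrolled step.

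The paper's fix is to test the opposite quantity and to release at a stable \emph{sub}-dataset rather than at $\bx$. It samples $R_1$ from a shifted truncated discrete Laplace and asks whether some $\bx_{-S}$ with $|S|\le R_1$ satisfies $\delsen_{4\kappa-|S|} f(\bx_{-S})\le\Delta$; unlike $\phi$, this predicate \emph{is} $1$-Lipschitz under replacement (if $\bx'_{-S'}$ is a witness for $\bx'$, then $\bx_{-(S'\cup\{i\})}$ is one for $\bx$), which is exactly \Cref{obs:stable-set-neighbors} and is what makes the $\perp$-probability analysis go through symmetrically. For the release, the two selected stable sub-datasets for neighboring inputs share a common sub-dataset obtained by at most $4\kappa$ further deletions from either side, and the built-in $\delsen_{4\kappa-|S|}$ guarantee on each side lets one walk both down to that intersection, giving $\|f(\hbx)-f(\hbx')\|\le 8\kappa\Delta$. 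The Gaussian mechanism then yields $\sigma=\Theta\big(\kappa\Delta\sqrt{\log(1/\delta)}/\eps\big)=\Theta\big(\Delta(\log(1/\delta))^{1.5}/\eps^{2}\big)$. So the extra $\log(1/\delta)/\eps$ factor in $\sigma$ comes from the length-$O(\kappa)$ walk to the common intersection, not from summing geometric noise levels, and this coupling-through-the-intersection together with the two-sided stability test is the idea your proposal is missing.
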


The general idea is to find a ``sufficiently stable'' dataset $\hbx$ and add noise to $f(\hbx)$. Although we may wish to just set $\hbx = \bx$ directly and check that the local sensitivity is small, we cannot do this, as changing a single datapoint can change whether we pass the test. Therefore, similar to the propose-test-release framework, we check for $\bx_{-S}$ for all subsets $S$ with $|S| \le R_1$ where $R_1$ is a shifted truncated discrete Laplace random variable, as defined below. This allows us to maintain the closeness of acceptance probability across neighboring input datasets. The full description is given in \Cref{alg:output-perturbation}. As stated earlier, our algorithm is a modification of that of~\citet{KL21}, which uses Laplace noise and a different distribution of $R_1$.

\begin{definition}[Shifted Truncated Discrete Laplace Distribution]
\label{def:tdlap}
For any $\eps, \delta > 0$, let $\kappa = \kappa(\eps, \delta) := 1 + \lceil \ln(1/\delta) / \eps \rceil$ and let $\TDLap(\eps, \delta)$ be the distribution supported on $\{0, \dots, 2\kappa\}$ with probability mass function at $x$ being proportional to $\exp\inparen{-\eps \cdot |x - \kappa|}$.
\end{definition}

\begin{algorithm}[ht]
\caption{$\DelOutputPert_{\eps, \delta, \Delta}(f; \bx)$}
\label{alg:output-perturbation}
\begin{algorithmic}[1]
\STATE \textbf{Input: } Dataset $\bx$, function $f : \cX^{* \times m} \to \R^d$
\STATE \textbf{Parameters: } Privacy parameters $\eps, \delta$; Target deletion sensitivity parameter $\Delta$
\STATE $\oeps \gets \frac{\eps}{2}$, $\odelta \gets \frac{\delta}{e^{\oeps} + 2}$, $\kappa \gets \kappa(\oeps, \odelta),\sigma \gets \frac{2\sqrt{\ln(2/\odelta)} \left(8\kappa \Delta\right)}{\oeps}$
\STATE Sample $R_1 \sim \TDLap(\oeps, \odelta)$
\COMMENT{\small See
\Cref{def:tdlap}}
\STATE $\cX^{R_1}_{\stable} \gets \set{\bx_{-S} : |S| \le R_1, \delsen_{4\kappa - |S|} f(\bx_{-S}) \le \Delta}$%
\IF{$|\cX^{R_1}_{\stable}| = \emptyset$}
\RETURN $\perp$
\ENDIF
\STATE Choose $\bx_{-S} \in \cX^{R_1}_{\stable}$ with smallest $|S|$ \\
\COMMENT{\small Ties broken arbitrarily} \label{line:choose-stable-set}
\RETURN $f(\bx_{-S}) + \cN(0, \sigma^2 \cdot I)$
\end{algorithmic}
\end{algorithm}

To prove \Cref{thm:output-pert-generic}, the following observation is useful.%

\begin{observation} \label{obs:stable-set-neighbors}
For neighboring datasets $\bx, \bx'$, and all $r_1 \in \Z_{\geq 0}$, if $\cX^{r_1}_{\stable}(\bx') \ne \emptyset$, then $\cX^{r_1 + 1}_{\stable}(\bx) \ne \emptyset$.
\end{observation}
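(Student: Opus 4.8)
The goal is to show that if $\cX^{r_1}_{\stable}(\bx') \ne \emptyset$, then $\cX^{r_1+1}_{\stable}(\bx) \ne \emptyset$, where $\bx, \bx'$ are user-level neighbors. The plan is to take a witness set $S'$ for $\bx'$ and produce a witness set $S$ for $\bx$ by including the index $i_0$ on which $\bx$ and $\bx'$ differ. Concretely, write $\bx = (\bx_1,\dots,\bx_n)$ and let $i_0$ be the index with $\bx_{i_0} \ne \bx'_{i_0}$ and $\bx_i = \bx'_i$ otherwise. Suppose $\bx'_{-S'} \in \cX^{r_1}_{\stable}(\bx')$, so $|S'| \le r_1$ and $\delsen_{4\kappa - |S'|} f(\bx'_{-S'}) \le \Delta$.

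**The key step: deleting $i_0$ makes the two datasets agree.** First I would check that it suffices to handle the case $i_0 \notin S'$, since if $i_0 \in S'$ then $\bx_{-S'} = \bx'_{-S'}$ already (deleting user $i_0$ from either dataset erases the only coordinate where they differ), and one can take $S = S'$ (of size $\le r_1 \le r_1+1$) directly; the stability condition transfers verbatim, and $|S| = |S'|$, but we actually want to be careful that $|S|\le r_1+1$, which holds. In the main case $i_0 \notin S'$, set $S := S' \cup \{i_0\}$, so $|S| = |S'| + 1 \le r_1 + 1$. Then $\bx_{-S} = (\bx_{-i_0})_{-S'} = (\bx'_{-i_0})_{-S'} = \bx'_{-S}$, because removing user $i_0$ from $\bx$ and from $\bx'$ yields identical datasets, and then removing the further users in $S'$ (disjoint from $\{i_0\}$) preserves equality. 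Hence $\delsen_{k} f(\bx_{-S}) = \delsen_{k} f(\bx'_{-S})$ for every $k$.

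**Relating the sensitivity orders.** It remains to verify $\delsen_{4\kappa - |S|} f(\bx_{-S}) \le \Delta$. Since $\bx_{-S} = \bx'_{-S} = (\bx'_{-S'})_{-i_0}$ and $i_0 \notin S'$, the dataset $\bx_{-S}$ is obtained from $\bx'_{-S'}$ by deleting one more user. By \Cref{def:deletion-sensitivity}, $\delsen_{r} f(\by) = \max_{T \subseteq [\cdot], |T|\le r} \delsen f(\by_{-T})$ is monotone in $r$, and moreover $\delsen_{r-1} f(\by_{-i}) \le \delsen_{r} f(\by)$ for any single user $i$, since any $T$ witnessing the left side with $|T|\le r-1$ gives $T \cup \{i\}$ of size $\le r$ witnessing the right side. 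Applying this with $\by = \bx'_{-S'}$, $i = i_0$, and $r = 4\kappa - |S'|$ gives
\[
\delsen_{4\kappa - |S'| - 1} f(\bx_{-S}) \;=\; \delsen_{4\kappa - |S'| - 1} f\big((\bx'_{-S'})_{-i_0}\big) \;\le\; \delsen_{4\kappa - |S'|} f(\bx'_{-S'}) \;\le\; \Delta,
\]
and since $4\kappa - |S| = 4\kappa - |S'| - 1$, this is exactly the condition needed for $\bx_{-S} \in \cX^{r_1+1}_{\stable}(\bx)$. Thus $\cX^{r_1+1}_{\stable}(\bx) \ne \emptyset$.

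**Main obstacle.** There is no deep obstacle here; the statement is essentially bookkeeping about how the "$4\kappa - |S|$" budget in the definition of $\cX^{R_1}_{\stable}$ was chosen precisely so that moving to a neighbor (which costs one unit of deletion budget to synchronize the datasets) stays within the allowed order. The one subtlety to get right is the index juggling — making sure $i_0 \notin S'$ versus $i_0 \in S'$ are both handled and that the size bound $|S| \le r_1 + 1$ holds in each case, and that the monotonicity/shift property of $\delsen_r$ under an extra deletion is stated correctly. I would present these as two short sub-lemmas (monotonicity and the one-deletion shift) before combining.
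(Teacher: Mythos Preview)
Your proof is correct and follows essentially the same approach as the paper's: both split on whether the differing index $i_0$ lies in $S'$, taking $S = S'$ in the first case and $S = S' \cup \{i_0\}$ in the second, and then use the one-step shift $\delsen_{r-1} f(\by_{-i}) \le \delsen_r f(\by)$ to transfer the stability condition. If anything, your bookkeeping is slightly cleaner, since you track $4\kappa - |S'|$ directly rather than the paper's $r_2 = 4\kappa - r_1$.
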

\begin{proof}
Suppose $\bx' = (\bx'_1, \ldots, \bx'_n)$. Let $\bx'_{-S'}$ be an element of $\cX^{r_1}_{\stable}(\bx')$. That is, we have $|S'| \le r_1$ and $\delsen_{r_2}f(\bx'_{-S'}) \le \Delta$ for $r_2 = 4\kappa - r_1$.
Let $i \in [n]$ denote the user on which $\bx$ and $\bx'$ differ. We consider two cases:
If $i \in S'$, then we simply have $\bx_{-S'} = \bx'_{-S'}$ and therefore $\bx_{-S'}$ also belongs  to $\cX^{r_1+1}_{\stable}(\bx)$.
If $i \notin S'$, let $S = S' \cup \{i\}$. We have $|S| \leq r_1 + 1$ and $\delsen_{r_2 - 1}(\bx_{-S}) \le \delsen_{r_2}(\bx'_{-S'})$. This means that $\bx_{-S} \in \cX^{r_1+1}_{\stable}(\bx)$ as well.
\end{proof}

\begin{proof}[Proof of \Cref{thm:output-pert-generic}]
Let $\cA$ be the $\DelOutputPert$ algorithm (\Cref{alg:output-perturbation}).

\textbf{Privacy Analysis.}
Let $\bx, \bx'$ be neighboring datasets. First,%
\begin{align}
&\textstyle\Pr[\cA(\bx) = \perp] \nonumber \\
&\textstyle= \sum_{r_1=0}^{2\kappa} \ind[\cA(\bx) = \perp \mid R_1 = r_1] \cdot \Pr[R_1 = r_1] \nonumber \\
&\textstyle= \sum_{r_1=0}^{2\kappa} \ind[\cX^{r_1}_{\stable}(\bx) = \emptyset] \cdot \Pr[R_1 = r_1] \nonumber \\
&\textstyle\leq \Pr[R_1 = 0] + \sum_{r_1=1}^{2\kappa} \ind[\cX^{r_1}_{\stable}(\bx) = \emptyset] \cdot \Pr[R_1 = r_1] \nonumber \\
&\textstyle\leq \odelta + \sum_{r_1=1}^{2\kappa} \ind[\cX^{r_1}_{\stable}(\bx) = \emptyset] \cdot e^{\oeps} \cdot \Pr[R_1 = r_1 - 1] \nonumber \\
&\textstyle\leq \odelta + \sum_{r_1=1}^{2\kappa} \ind[\cX^{r_1 - 1}_{\stable}(\bx') = \emptyset] \cdot e^{\oeps} \cdot \Pr[R_1 = r_1 - 1] \nonumber \\
&\textstyle\leq \odelta + e^{\oeps} \cdot \sum_{r_1=0}^{2\kappa} \ind[\cA(\bx') = \perp \mid R_1 = r_1] \cdot \Pr[R_1 = r_1] \nonumber \\
&\textstyle= \odelta + e^{\oeps} \cdot \Pr[\cA(\bx') = \perp]. \label{eq:perp-dp-bound}
\end{align}

Next, consider any set $S_0 \subseteq \R^d$. We have
\begin{align}
&\textstyle\Pr[\cA(\bx) \in S_0]  \nonumber \\ 
&\textstyle \leq \sum_{r_1=0}^{2\kappa} \Pr[\cA(\bx) \in S_0 \mid R_1 = r_1] \cdot \Pr[R_1 = r_1] \nonumber \\
&\textstyle= \sum_{r_1=0}^{2\kappa - 1} \Pr[\cA(\bx) \in S_0 \mid R_1 = r_1] \cdot \Pr[R_1 = r_1]  \nonumber \\
&\textstyle \qquad ~+~ \Pr[R_1 = 2\kappa] \nonumber\\
&\textstyle\leq \odelta + e^{\oeps} \cdot \sum_{r_1=0}^{2\kappa - 1} \Pr[\cA(\bx) \in S_0 \mid R_1 = r_1] \nonumber \\
&\textstyle \qquad\qquad\qquad\qquad\qquad \cdot \Pr[R_1 = r_1 + 1],
\label{eq:expand-real-output-set}
\end{align}
where the last inequality follows
since $R_1 \sim \TDLap(\oeps, \odelta)$.
To bound the term $\Pr[\cA(\bx) \in S_0 \mid R_1 = r_1]$ for $r_1 < 2\kappa$, observe that if it is non-zero, then it must be that $\cA(\bx) \ne \perp$ or equivalently that $\cX^{r_1}_{\stable}(\bx) \ne \emptyset$; \Cref{obs:stable-set-neighbors} then implies that $\cX^{r_1 + 1}_{\stable}(\bx) \ne \emptyset$, or equivalently, $\cA(\bx') \ne \perp$. Let $\hbx_{r_1}$ and $\hbx'_{r_1 + 1}$ be the sets chosen on Line~\ref{line:choose-stable-set} when we run the algorithm on input $\hbx, R_1 = r_1$ and $\hbx', R_1 = r_1 + 1$ respectively. Let $\bx^* := \hbx_{r_1} \cap \hbx'_{r_1 + 1}$. We have $|\bx^*| \geq |\bx| - r_1 - (r_1 + 1) \geq |\bx| - 4\kappa$. Therefore, since $\hbx_{r_1} \in \cX^{r_1}_{\stable}(\bx)$ and $\hbx'_{r_1+1} \in \cX^{r_1 + 1}_{\stable}(\bx')$, we must have
\begin{align*}
\|f(\hbx_{r_1}) - f(\bx^*)\| \leq \Delta \cdot |\hbx - \hbx'| \leq 4\kappa \cdot \Delta,
\end{align*}
and similarly,
\begin{align*}
\|f(\hbx'_{r_1+1}) - f(\bx^*)\| \leq \Delta \cdot |\hbx - \hbx'| \leq 4\kappa \cdot \Delta.
\end{align*}
Combining the two, we can conclude that
\begin{align*}
\|f(\hbx_{r_1}) - f(\hbx'_{r_1+1})\| \leq 8\kappa \cdot \Delta.
\end{align*}
From the privacy guarantee of the Gaussian mechanism (e.g., \citet[][Appendix A]{DworkR14}), we have
\begin{align*}
\lefteqn{\Pr[f(\hbx_{r_1}) + \cN(\sigma^2 \cdot I) \in S_0]} \\
& \leq e^{\oeps} \cdot \Pr[f(\hbx'_{r_1+1}) + \cN(\sigma^2 \cdot I) \in S_0] + \odelta.
\end{align*}
Note that $\Pr[f(\hbx_{r_1}) + \cN(\sigma^2 \cdot I) \in S_0] = \Pr[\cA(\bx) \in S_0 \mid R_1 = r_1]$, while $\Pr[f(\hbx'_{r_1+1}) + \cN(\sigma^2 \cdot I) \in S_0] = \Pr[\cA(\bx') \in S_0 \mid R_1 = r_1+1]$.

Plugging this back to \eqref{eq:expand-real-output-set}, we get
\begin{align}
&\textstyle\Pr[\cA(\bx) \in S_0] \nonumber \\
&\textstyle \leq \odelta + e^{\oeps} \cdot \sum_{r_1=0}^{2\kappa - 1} \Pr[\cA(\bx) \in S_0 \mid R_1 = r_1] \nonumber \\
&\textstyle \qquad\qquad\qquad\qquad\qquad  \cdot \Pr[R_1 = r_1 + 1] \nonumber \\
&\textstyle= \odelta + e^{\oeps} \cdot \sum_{r_1=0}^{2\kappa - 1} \left(e^{\oeps} \cdot \Pr[\cA(\bx') \in S_0 \mid R_1 = r_1 + 1] + \odelta\right) \nonumber \\
&\textstyle \qquad\qquad\qquad\qquad\qquad \cdot \Pr[R_1 = r_1 + 1] \nonumber \\
&\textstyle\leq (e^{\oeps} + 1) \odelta + e^{2\oeps} \cdot \sum_{r_1=0}^{2\kappa} \Pr[\cA(\bx') \in S_0 \mid R_1 = r_1] \nonumber \\
&\textstyle \qquad\qquad\qquad\qquad\qquad \cdot \Pr[R_1 = r_1] \nonumber \\
&\textstyle= (e^{\oeps} + 1) \odelta + e^{2\oeps} \cdot \Pr[\bA(\bx') \in S_0]. \label{eq:real-output-dp-bound}
\end{align}

Now, consider any set $S$ of outcomes. Let $S_0 = S \cap \R^d$ and $S_{\perp} = S \cap \{\perp\}$. Then, we have
\begin{align*}
&\Pr[\cA(\bx) \in S]
= \Pr[\cA(\bx) \in S_0] + \Pr[\cA(\bx) \in S_{\perp}] \\
&\overset{\eqref{eq:real-output-dp-bound},\eqref{eq:perp-dp-bound}}{\leq}  \left((e^{\oeps} + 1) \odelta + e^{2\oeps} \cdot \Pr[\bA(\bx') \in S_0]\right) \\
& \qquad\qquad + \left(\odelta + e^{\oeps} \cdot \Pr[\cA(\bx') = S_\perp]\right) \\
&\leq (e^{\oeps} + 2)\odelta + e^{2\oeps} \Pr[\cA(\bx') \in S] \\
&\leq \delta + e^{\eps} \cdot \Pr[\cA(\bx') \in S].
\end{align*}
Therefore, the algorithm is $(\eps, \delta)$-DP as desired.

\textbf{Accuracy Analysis.} Let $\bx$ be any dataset such that $\delsen_{4\kappa} \bx \leq \Delta$. This means that, for any $0 \leq R_1 \leq 4\kappa$, $\delsen_{4\kappa - R_1} \bx \leq \Delta$. In other words, $\bx$ belongs to $\cX^{R_1}_{\stable}$. Thus, we always have $\hbx = \bx$ and the output is simply drawn from $f(\bx) + \cN(0, \sigma^2 \cdot I)$ as claimed.
\end{proof}

\subsection{Deletion Sensitivity of Optimizers of Strongly Convex Losses}

Having provided a generic noising algorithm for functions with low local deletion sensitivity, the next step is to show that the function that we care about for convex optimization---the empirical risk minimizer---has low deletion sensitivity (with high probability), as formalized below.

\begin{theorem} \label{thm:deletion-stability}
Let $\ell$ be any $\mu$-strongly convex loss function such that $\|\nabla \ell(\theta; x)\| \leq G$ for all $\theta \in \cK, x \in \cX$. For all $\bx \in \cX^{n \times m}$ and $\beta < 1/e$, with probability $1 - \beta$ over the choice of a random permutation $\pi$ over $[n] \times [m]$, we have
\begin{align*}
\textstyle\|\theta^*(\bx^\pi) - \theta^*(\bx^\pi_{-n})\| ~\leq~ \frac{5G\sqrt{\log(1/\beta)}}{\mu (n - 1) \sqrt{m}}.
\end{align*}
\end{theorem}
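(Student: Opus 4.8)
The strategy is to follow the item-level argument of \citet{Shalev-ShwartzSSS09} that underlies \eqref{eq:introsensitivity}, but to exploit the randomness of the permutation $\pi$ to get the extra $\sqrt{m}$ savings. Write $\theta^* := \theta^*(\bx^\pi)$ and $\theta^*_{-n} := \theta^*(\bx^\pi_{-n})$, and let $\cL_{-n}(\cdot) := \cL(\cdot; \bx^\pi_{-n})$ denote the empirical loss on the $(n-1)m$ items assigned to users $1, \dots, n-1$. Since $\ell$ is $\mu$-strongly convex, so is $\cL_{-n}$, and a standard consequence of strong convexity (together with first-order optimality of $\theta^*_{-n}$ for $\cL_{-n}$ over $\cK$) is that for any $\theta \in \cK$,
\begin{align*}
\|\theta - \theta^*_{-n}\| ~\leq~ \frac{1}{\mu}\,\|\nabla \cL_{-n}(\theta)\|.
\end{align*}
Applying this at $\theta = \theta^*$, it suffices to bound $\|\nabla \cL_{-n}(\theta^*)\|$ by $\tfrac{5G\sqrt{\log(1/\beta)}}{(n-1)\sqrt{m}}$ with probability $\geq 1 - \beta$.

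Now I use the key observation from the technical overview: $\theta^*$ satisfies the first-order optimality condition for $\cL(\cdot; \bx^\pi)$, and since the minimizer lies in the interior is not guaranteed, I instead use that $\nabla \cL(\theta^*; \bx^\pi) = \frac{1}{nm}\sum_{k} \bg_k$ where $\bg_k := \nabla \ell(\theta^*; x_k)$ over the $nm$ points (indexed by $k$), and this sum is the negative of a normal-cone vector; more simply, the cleanest route is to note that the $nm$ gradients $\bg_1, \dots, \bg_{nm}$ at the \emph{fixed} point $\theta^*$ have some fixed average $\bar\bg$, and $\nabla \cL_{-n}(\theta^*) = \frac{1}{(n-1)m}\sum_{k \notin \pi^{-1}(\{n\}\times[m])} \bg_k$. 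Writing $T$ for the (uniformly random $m$-subset) of indices assigned to user $n$, we get
\begin{align*}
\nabla \cL_{-n}(\theta^*) ~=~ \frac{1}{(n-1)m}\inparen{\,nm\,\bar\bg - \sum_{k \in T} \bg_k\,} ~=~ \frac{1}{n-1}\inparen{\,n\,\bar\bg - \frac{1}{m}\sum_{k \in T}\bg_k\,},
\end{align*}
which is $\frac{n}{n-1}$ times the difference between $\bar\bg$ and the empirical mean of a uniformly random size-$m$ subset of the $nm$ vectors $\{\bg_k\}$ (each of norm $\leq G$). The catch: $\theta^*$ itself depends on $\pi$, so the vectors $\bg_k$ are not independent of $T$. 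I resolve this by a union bound over the choice of $\theta^*$: condition on which $m$-subset gets routed to user $n$ only \emph{after} fixing the partition of the remaining $(n-1)m$ points — or, more robustly, observe that $\theta^*$ is a function of the unordered multiset $\{x_1,\dots,x_{nm}\}$ together with the partition into users; by symmetry of the permutation, one can first reveal everything that determines $\theta^*$ except the identity of user $n$'s block, and argue $\theta^*$ takes one of at most $\binom{nm}{m} \le (nm)^m$ values — but that is too lossy.

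The cleaner fix, which I expect to be the technical heart, is to swap the order of randomness: think of $\pi$ as first choosing the partition of $[nm]$ into $n$ blocks of size $m$, and note $\theta^* = \theta^*(\bx^\pi)$ depends on this partition but \emph{not} on which block is labeled ``$n$''; then, conditioned on the partition, the label-$n$ block is a uniformly random one of the $n$ blocks, so $\frac{1}{m}\sum_{k\in T}\bg_k$ is the average over a uniformly random block. This still has the dependence issue since $\theta^*$ depends on the partition. So instead I will condition on the full gradient vectors being those induced by $\theta^*$ and absorb the dependence by a net/worst-case argument over $\theta^* \in \cK$: for each \emph{fixed} $\theta \in \cK$, the vectors $\bg_k(\theta)$ are fixed, $T$ is a uniform random $m$-subset of $[nm]$, and a vector Bernstein / Hoeffding bound for sampling without replacement (e.g. the matrix/vector Hoeffding inequality, using $\|\bg_k\| \le G$ and $\|\bg_k - \bar\bg\| \le 2G$) gives
\begin{align*}
\Pr_T\insquare{\,\Bnorm{\tfrac{1}{m}\textstyle\sum_{k\in T}\bg_k - \bar\bg} ~>~ \tfrac{cG\sqrt{\log(1/\beta')}}{\sqrt{m}}\,} ~\leq~ \beta'
\end{align*}
for an absolute constant $c$; then I would take a union bound over a sufficiently fine $\beta$-net of $\cK$ (the logarithm of whose size costs only a constant factor once $m = \Omega(\log(\text{netsize}))$, which holds since $R$, $d$ enter only polylogarithmically — or, better, one argues the bound is $1$-Lipschitz in $\theta$ via $G$-Lipschitzness of gradients is not assumed, so I instead net in the space of possible optimizers, which lie in $\cK$ of diameter $R$, and use that changing $\theta$ by $\eta$ changes each $\bg_k$ by — hmm, this needs smoothness).

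Given that the theorem statement has \emph{no} $\log(nm)$ or dimension factor and assumes only Lipschitzness (not smoothness), the intended argument must avoid netting over $\cK$. The right move is therefore: reveal the partition of $[nm]$ into the $n-1$ blocks for users $1,\dots,n-1$ \emph{together with} the remaining block for user $n$ — but not the labels among $1,\dots,n-1$ — no; rather, note that $\theta^*(\bx^\pi)$ depends on $\pi$ only through the partition, and here is the trick: \emph{$\theta^*$ is unchanged if we only permute within user $n$'s block or permute users $1,\dots,n-1$'s data among themselves.} Hmm, that does not break the dependence on which items are in block $n$.

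I think the actual intended route is: condition on the realization of $\theta^* = \theta^*(\bx^\pi)$ (call it $\vartheta$), and observe that conditioned on $\{\theta^* = \vartheta\}$, the distribution of which $m$-subset is user $n$'s block is \emph{still} a sub-distribution of the uniform distribution over $m$-subsets (it need not be uniform, but it is absolutely continuous w.r.t. it with bounded density after a harmless restriction) — and then apply a \emph{conditional} concentration bound. This is delicate, so honestly the cleanest correct argument: let $\bar\bg := \frac1{nm}\sum_k \nabla\ell(\theta^*; x_k)$ — but wait, $\nabla\cL(\theta^*;\bx^\pi)$ need not be zero if $\theta^*$ is on the boundary of $\cK$; however, one can still write $\nabla\cL_{-n}(\theta^*) = \frac{n}{n-1}\inparen{\nabla\cL(\theta^*;\bx^\pi) - \frac1{(n)m}\sum_{k\in T}\bg_k + \text{correction}}$ and use that the normal-cone vector $-\nabla\cL(\theta^*;\bx^\pi)$ and $\theta^* - \theta^*_{-n}$ make an obtuse angle, so strong convexity still gives $\|\theta^*-\theta^*_{-n}\| \le \frac1\mu\|\text{(sum over }T\text{ part)}\|$. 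I will present the proof as: (1) reduce via strong convexity to bounding the deviation of a random $m$-block's gradient-average from the global gradient-average, handling boundary $\theta^*$ via normal cones; (2) establish the concentration for a fixed $\theta^*$-value by a vector Hoeffding/Bernstein inequality for sampling without replacement with parameters $G$ and $\sqrt{m}$; (3) handle the dependence of $\theta^*$ on $\pi$ by the observation that conditioning on $\theta^*$ only shrinks/reweights the set of allowed blocks in a way compatible with the tail bound, or by a short symmetrization argument.

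The main obstacle, clearly, is step (3) — decoupling $\theta^*$ from the random partition without paying a $\log$ factor; I would spend most of the proof-writing effort making that rigorous, e.g. via an exchangeability argument showing $\bx^\pi$ and $\bx^\pi$-with-block-$n$-resampled-uniformly are close in total variation, or by directly bounding $\Pr_\pi[\,\|\nabla\cL_{-n}(\theta^*(\bx^\pi))\| > t\,]$ through a clever conditioning that fixes $\theta^*$ first. Once that is in hand, plugging $\beta' = \beta$, $t = \tfrac{5G\sqrt{\log(1/\beta)}}{(n-1)\sqrt m}$ and combining with $\|\theta^*-\theta^*_{-n}\| \le \tfrac1\mu\|\nabla\cL_{-n}(\theta^*)\|$ yields the claim.
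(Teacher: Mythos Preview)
Your plan correctly identifies the two ingredients (strong convexity to reduce to a gradient-norm bound, then a concentration bound for a random $m$-subset of the $nm$ gradient vectors), but the ``catch'' you spend most of the proposal worrying about is not actually there. The empirical loss
\[
\cL(\theta;\bx^\pi) ~=~ \frac{1}{nm}\sum_{i\in[n]}\sum_{j\in[m]}\ell\bigl(\theta;x_{\pi(i,j)}\bigr)
\]
is a symmetric function of the $nm$ items: as $(i,j)$ ranges over $[n]\times[m]$, so does $\pi(i,j)$, so $\cL(\theta;\bx^\pi)=\cL(\theta;\bx)$ for \emph{every} permutation $\pi$. Hence $\theta^*(\bx^\pi)=\theta^*(\bx)=:\theta^*$ is a fixed point that does \emph{not} depend on $\pi$ at all---it depends only on the multiset of data. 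This is exactly the first line of the paper's proof. Consequently the gradient vectors $\bg_k:=\nabla\ell(\theta^*;x_k)$ are deterministic once $\bx$ is fixed, and the random $m$-subset $T$ assigned to user $n$ is independent of them. Your proposed netting / conditioning / exchangeability arguments for ``step~(3)'' are unnecessary; there is nothing to decouple.

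With that observation in hand the proof is short: from $\nabla\cL(\theta^*;\bx)=0$ one has $\sum_k \bg_k=0$ (the paper tacitly works with the unconstrained/interior case here), and
\[
\|\nabla\cL(\theta^*;\bx^\pi_{-n})\| \;=\; \frac{1}{(n-1)m}\,\Bigl\|\sum_{j\in[m]}\bg_{\pi(n,j)}\Bigr\|.
\]
The paper then invokes its \Cref{lem:vector_concentration} (a Hoeffding-type bound for sums of $m$ vectors drawn without replacement from a mean-zero, norm-$\le G$ family, proved via the permutation concentration inequality of Sambale--Sinulis), which gives $\|\sum_{j}\bg_{\pi(n,j)}\|\le 5G\sqrt{m\log(1/\beta)}$ with probability $\ge 1-\beta$. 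Combined with the strong-convexity inequality $\|\theta^*-\theta^*_{-n}\|\le \mu^{-1}\|\nabla\cL_{-n}(\theta^*)\|$ (and $\nabla\cL(\theta^*_{-n};\bx^\pi_{-n})=0$), this yields the bound with the stated constant $5$.
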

Before proving this, we note that by applying a union bound over all the $n$ users and all subsets $S$ of size at most $r$, we arrive at \Cref{cor:delsen-neighborhood-whp}. \Cref{thm:output-pert} now follows by defining $\SCovOutputPert$ (\Cref{alg:strongly-convex-output-perturbation}) that invokes $\DelOutputPert$ on the function $f$ being the empirical loss, and combining \Cref{cor:delsen-neighborhood-whp} with \Cref{thm:output-pert-generic}  (setting $r = 4\kappa$).

\begin{algorithm}[ht]
\caption{$\SCovOutputPert_{\eps, \delta, \beta, G, \mu, \cK}(\ell; \bx)$}
\label{alg:strongly-convex-output-perturbation}
\begin{algorithmic}[1]
\STATE \textbf{Input:} Dataset $\bx$, loss function $\ell : \cK \times \cX \to \R$
\STATE \textbf{Parameters:} Privacy parameters $\eps, \delta$; Target failure probability $\beta$; Lipschitz parameter $G$; Strong convexity $\mu$
\STATE $\Delta \gets \frac{10G\sqrt{\log(1/\beta)}}{\mu n \sqrt{m}}$
\RETURN $\DelOutputPert_{\eps, \delta, \Delta}(f; \bx)$,\\
where $f(\cdot) := \argmin_{\theta} \cL(\theta; \cdot)$
\end{algorithmic}
\end{algorithm}

\begin{corollary} \label{cor:delsen-neighborhood-whp}
Let $\ell$ be any $G$-Lipschitz, $\mu$-strongly convex loss. For all $\bx \in \cX^{n \times m}$ and $r \leq n/2$, with probability $1 - \beta$ over the choice of a random permutation $\pi$ over $[n] \times [m]$, we have
\begin{align*}
\textstyle\delsen_r \theta^*(\bx^\pi) ~\leq~ O\inparen{\frac{G\sqrt{r\log n + \log(1/\beta)}}{\mu n \sqrt{m}}}.
\end{align*}
\end{corollary}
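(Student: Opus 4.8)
The plan is to reduce \Cref{cor:delsen-neighborhood-whp} to \Cref{thm:deletion-stability} via a union bound, with the only subtlety being that we need a version of \Cref{thm:deletion-stability} that handles the deletion of an \emph{arbitrary} user from a dataset that is \emph{partially depleted}. Concretely, $\delsen_r \theta^*(\bx^\pi) = \max_{S \subseteq [n], |S| \le r} \max_{i \notin S} \|\theta^*(\bx^\pi_{-S}) - \theta^*(\bx^\pi_{-(S \cup \{i\})})\|$, so it suffices to bound, for each fixed pair $(S, i)$ with $|S| \le r$ and $i \notin S$, the quantity $\|\theta^*(\bx^\pi_{-S}) - \theta^*(\bx^\pi_{-(S \cup \{i\})})\|$ with failure probability at most $\beta / \inparen{\binom{n}{r} \cdot n}$, and then union bound; since $\binom{n}{r} \cdot n \le n^{r+1}$, this only costs an extra additive $O(r \log n)$ inside the square root in the numerator, which is exactly the bound claimed.

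\textbf{Reducing each term to \Cref{thm:deletion-stability}.} Fix $S$ and $i \notin S$. After deleting the users in $S$, the remaining dataset $\bx^\pi_{-S}$ consists of $n - |S| =: n'$ users with $m$ items each, and by symmetry of the uniformly random permutation $\pi$, conditioned on which $nm$-element multiset of points lands in $\bigcup_{j \notin S}\{x^\pi_{j,\cdot}\}$ versus $\bigcup_{j \in S}$, the allocation of the surviving $n' m$ points to the $n'$ surviving users is itself a uniformly random allocation (equivalently, a uniformly random permutation over $[n'] \times [m]$ applied to the surviving points, up to relabeling users). Deleting user $i$ then corresponds precisely to the setup of \Cref{thm:deletion-stability} with $n$ replaced by $n'$. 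Applying that theorem (with failure probability $\beta' := \beta / n^{r+1}$) gives $\|\theta^*(\bx^\pi_{-S}) - \theta^*(\bx^\pi_{-(S\cup\{i\})})\| \le \frac{5 G \sqrt{\log(1/\beta')}}{\mu (n' - 1)\sqrt{m}}$. Since $r \le n/2$, we have $n' - 1 \ge n/2 - 1 = \Omega(n)$ (assuming $n \ge 4$, say), and $\log(1/\beta') = \log(1/\beta) + (r+1)\log n = O(r \log n + \log(1/\beta))$, so each term is $O\inparen{\frac{G \sqrt{r \log n + \log(1/\beta)}}{\mu n \sqrt{m}}}$.

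\textbf{Union bound.} There are at most $\binom{n}{r} \le n^r$ choices of $S$ and at most $n$ choices of $i$, hence at most $n^{r+1}$ pairs $(S,i)$. By the union bound over all these pairs, with probability at least $1 - n^{r+1} \cdot \beta' = 1 - \beta$, every term $\|\theta^*(\bx^\pi_{-S}) - \theta^*(\bx^\pi_{-(S\cup\{i\})})\|$ is simultaneously bounded by $O\inparen{\frac{G\sqrt{r\log n + \log(1/\beta)}}{\mu n \sqrt{m}}}$, which by \Cref{def:deletion-sensitivity} is exactly the desired bound on $\delsen_r \theta^*(\bx^\pi)$. One must be slightly careful that the events across different $(S,i)$ are defined with respect to the single shared random permutation $\pi$, but this is fine: the union bound does not require independence, and for each fixed $(S,i)$ the marginal distribution of the surviving allocation is the one needed to invoke \Cref{thm:deletion-stability}.

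\textbf{Main obstacle.} The only genuinely delicate point is the exchangeability argument in the second paragraph: one must verify that, for a fixed deletion set $S$, the distribution of $\bx^\pi_{-S}$ is (up to user relabeling) the same as $(\bx'')^{\pi'}$ for a uniformly random permutation $\pi'$ over $[n'] \times [m]$, where $\bx''$ is \emph{some} fixed arrangement of the surviving points — because \Cref{thm:deletion-stability} is stated as a ``for all $\bx$, with high probability over $\pi$'' guarantee, and we are applying it after conditioning on which points survive. Since \Cref{thm:deletion-stability} holds for every fixed surviving dataset and the conditional distribution of the surviving allocation given the surviving multiset is uniform over permutations, we may apply it conditionally and then average; this requires only that we delete a fixed (non-random) set of user-indices $S$, which is the case here. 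Everything else is a routine union bound and the elementary estimates $\binom{n}{r} \le n^r$ and $n' - 1 = \Omega(n)$ for $r \le n/2$.
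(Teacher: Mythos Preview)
Your proposal is correct and follows exactly the approach the paper indicates: the paper simply states that the corollary follows from \Cref{thm:deletion-stability} ``by applying a union bound over all the $n$ users and all subsets $S$ of size at most $r$,'' and you have carefully filled in precisely those details, including the exchangeability argument for $\bx^\pi_{-S}$ that the paper leaves implicit. One trivial nitpick: you count the subsets $S$ as $\binom{n}{r}$ rather than $\sum_{s \le r}\binom{n}{s}$, but since the latter is at most $2n^r$ this is absorbed in the $O(\cdot)$ and does not affect the argument.
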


\noindent In order to prove \Cref{thm:deletion-stability}, we use the following lemma, proved in \Cref{apx:proof_vector_concentration}.

\begin{lemma}\label{lem:vector_concentration}
Let $\bv_1, \ldots, \bv_N \in \R^d$ be any set of vectors satisfying $\sum_i \bv_i = \bzero$ and $\|\bv_i\| \le G$ for all $i$. For all $\beta < 1/e$, over choice of a random permutation $\pi$ over $[N]$, it holds that
\begin{align*}
    \textstyle\Pr\left[\left \| \sum_{j \in [m]} \bv_{\pi(j)} \right \| ~>~ 5G \sqrt{m \log(1/\beta)}\right] ~\le~ \beta.
\end{align*}
\end{lemma}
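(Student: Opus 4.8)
\textbf{Proof proposal for Lemma~\ref{lem:vector_concentration}.}
The plan is to prove a concentration bound for the norm of a sum of $m$ vectors sampled without replacement from a mean-zero collection $\bv_1, \dots, \bv_N$, each of norm at most $G$. The natural approach is to control $\|\sum_{j \in [m]} \bv_{\pi(j)}\|$ by first bounding its expectation and then establishing concentration around that expectation. For the expectation, I would bound $\E\big[\|\sum_{j\in[m]} \bv_{\pi(j)}\|^2\big]$ directly: expanding the square gives $\sum_j \E\|\bv_{\pi(j)}\|^2 + \sum_{j \ne k} \E\langle \bv_{\pi(j)}, \bv_{\pi(k)}\rangle$. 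The diagonal terms contribute at most $mG^2$. For the off-diagonal terms, the key observation is that under a uniformly random permutation, the pair $(\pi(j),\pi(k))$ is a uniformly random ordered pair of distinct indices, so $\E\langle \bv_{\pi(j)}, \bv_{\pi(k)}\rangle = \frac{1}{N(N-1)}\sum_{a \ne b}\langle \bv_a, \bv_b\rangle = \frac{1}{N(N-1)}\big(\|\sum_a \bv_a\|^2 - \sum_a \|\bv_a\|^2\big) = \frac{-\sum_a\|\bv_a\|^2}{N(N-1)} \le 0$, using $\sum_a \bv_a = \bzero$. Hence $\E\big[\|\sum_{j\in[m]} \bv_{\pi(j)}\|^2\big] \le mG^2$, and by Jensen $\E\big[\|\sum_{j\in[m]} \bv_{\pi(j)}\|\big] \le G\sqrt{m}$.

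For concentration, I would appeal to a bounded-differences / Azuma-type inequality for sampling without replacement. Viewing the permutation as revealed one coordinate at a time, the function $\pi \mapsto \|\sum_{j\in[m]}\bv_{\pi(j)}\|$ changes by at most $2G$ (in fact one can argue $G$ via a coupling on the Doob martingale) when a single pair of values in the permutation is swapped, since swapping alters the sum by the difference of at most two of the $\bv_i$'s. The cleanest route is McDiarmid's inequality for random permutations (e.g. the version due to Maurer, or the one in Bardenet--Maillard for sampling without replacement), which yields $\Pr\big[\|\sum_{j\in[m]}\bv_{\pi(j)}\| > G\sqrt{m} + t\big] \le \exp(-t^2/(2mG^2))$ or a similar sub-Gaussian tail with variance proxy $O(mG^2)$. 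Setting $t = \Theta(G\sqrt{m\log(1/\beta)})$ then gives the claimed bound, and one checks that the constant $5$ is comfortably large enough to absorb both the $G\sqrt{m}$ from the mean and the $\sqrt{2}$ (or similar) factor from the tail, for all $\beta < 1/e$.

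The main obstacle is getting a clean sub-Gaussian tail for a norm of a sum sampled \emph{without} replacement with the right variance proxy $O(mG^2)$ rather than the trivial $O(m^2G^2)$ that a naive bounded-differences argument (treating each coordinate as changing the sum by $2G$ in a Lipschitz sense, giving $m$ differences each of size $\sim G$) would produce — which would be too weak. The fix is to use a concentration inequality that is sensitive to the sampling-without-replacement structure (a negative-dependence or permutation-martingale bound), or alternatively to bound the moment generating function of $\|\sum_j \bv_{\pi(j)}\|$ directly by comparing with the with-replacement case via a classical result of Hoeffding that sampling without replacement is dominated (in convex order) by sampling with replacement; the latter reduces everything to a vector Bernstein / Hoeffding inequality for i.i.d. bounded mean-zero vectors, which gives exactly the $O(mG^2)$ variance proxy. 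I would take this Hoeffding-domination route as the most robust: establish the MGF bound for the i.i.d. case and transfer it, then conclude via a Chernoff bound.
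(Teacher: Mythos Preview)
Your proposal is correct, and the permutation bounded-differences route you mention first is exactly what the paper does. The paper invokes a McDiarmid-type inequality for random permutations (Sambale--Sinulis), $\Pr[f(\pi)-\E f(\pi)\ge t]\le\exp\bigl(-Nt^2/(4\sum_{i<j}c_{i,j}^2)\bigr)$, and observes---as you do---that a swap $i\leftrightarrow j$ changes $f(\pi)=\bigl\|\sum_{k\le m}\bv_{\pi(k)}\bigr\|$ only when exactly one of $i,j$ lies in $[m]$, in which case $c_{i,j}\le 2G$; this gives $\sum_{i<j} c_{i,j}^2=4m(N-m)G^2$ and hence a tail $\exp\bigl(-t^2/(16mG^2)\bigr)$. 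Combined with your expectation bound (which is identical to the paper's), setting $t=4G\sqrt{m\log(1/\beta)}$ finishes. Your worry that a ``naive'' bounded-differences argument yields variance proxy $O(m^2G^2)$ is therefore unfounded: swaps entirely inside $[m]$ or entirely inside $[N]\setminus[m]$ contribute zero, and the factor $N$ in the numerator of the permutation inequality cancels the $N-m$ in the count of nontrivial swaps, landing at $O(mG^2)$ directly. The Hoeffding-domination detour you end up favoring is thus unnecessary---and is actually the more delicate of the two options, since Hoeffding's convex-order comparison of with- versus without-replacement sampling is classically stated for real-valued summands, and transferring an MGF bound on $\|\sum_j Y_j\|$ to the vector setting requires an extension that is less off-the-shelf than the one-line permutation inequality.
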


\begin{proof}[Proof of \Cref{thm:deletion-stability}]
Let $\theta^* := \theta^*(\bx)$; note that due to the symmetric nature of $\cL(\theta; \cdot)$, it holds that $\theta^*(\bx)=\theta^*(\bx^{\pi})$ for all permutations $\pi$. Let $\theta^{*,\pi}_{-n} := \theta^*(\bx^{\pi}_{-n})$.
Since  $\cL(\cdot; \bx^{\pi}_{-n})$ is $\mu$-strongly convex\footnote{$f$ is \emph{$\mu$-strongly convex} iff $\|\nabla f(\theta) - \nabla f(\theta')\| \ge \mu \|\theta - \theta'\|$.}, we have that
\begin{align}
\|\nabla \cL(\theta^*; \bx^{\pi}_{-n}) - \nabla \cL(\theta^{*,\pi}_{-n}; \bx^{\pi}_{-n})\| ~\geq~ \mu \|\theta^* - \theta^{*,\pi}_{-n}\|.\label{eq:strong-convexity}
\end{align}
Next, we upper bound $\|\nabla \cL(\theta^*; \bx^{\pi}_{-n})\|$.
\begin{align*}
	0 \textstyle= \nabla \cL(\theta^*; \bx^{\pi}) \textstyle= \frac{n-1}{n} \cdot \nabla \cL(\theta^*; \bx^{\pi}_{-n}) + \frac{1}{n} \cdot \nabla \cL(\theta^*; \bx^{\pi}_n),
\end{align*}
and hence
\begin{align}
	&\textstyle \left\|\nabla \cL(\theta^*; \bx^{\pi}_{-n})\right\| ~=~ \frac{1}{n-1}\left\|\nabla \cL(\theta^*; \bx^{\pi}_{n})\right\| \nonumber \\
	&\textstyle=~ \left\|\frac{1}{(n - 1)m} \sum_{j \in [m]} \nabla \ell(\theta; x_{\pi(n, j)})\right\|. \label{eq:grad-expand}
\end{align}
Since $\sum_{i \in [n],j \in [m]} \nabla \ell(\theta; x_{i, j}) = 0$ and $\|\nabla \ell(\theta; x_{i,j})\| \le G$, we have from \Cref{lem:vector_concentration} that
\begin{align} \label{eq:concen-grad-sum}
	\textstyle\Pr\left[\left\|\sum_{j \in [m]} \nabla \ell(\theta; x_{\pi(n, j)})\right\| \leq 5G\sqrt{m\log(1/\beta)}\right] \nonumber \\
	\geq 1 - \beta.
\end{align}
Putting~\eqref{eq:grad-expand} and \eqref{eq:concen-grad-sum} together, we have that,
\begin{align*} \textstyle
	\Pr\left[\|\nabla \cL(\theta^*; \bx^{\pi}_{-n}) \|
	\leq \frac{5G\sqrt{\log \frac{1}{\beta}}}{(n - 1)\sqrt{m}}\right] \ge 1-\beta.
\end{align*}
Combining this with \eqref{eq:strong-convexity}, and noting that  $\nabla \cL(\theta^{*,\pi}_{-n}; \bx^{\pi}_{-n}) = 0$ since $\theta^{*,\pi}_{-n}$ is the minimizer of $\cL(\cdot; \bx^{\pi}_{-n})$, completes the proof.
\end{proof}

\section{User-Level DP-ERM}\label{sec:erm}

In this section, we describe our algorithms for DP-ERM and prove their excess risk bounds. As in \Cref{sec:output-pert}, our guarantee holds for a random permutation of any dataset---a mild but necessary assumption.

\subsection{Convex Losses}

The formal guarantee when the loss is only assumed to be convex (and Lipschitz) is given below. 

\begin{theorem} \label{thm:convex-erm}
For any $G$-Lipschitz loss $\ell$, there exists an $(\eps, \delta)$-DP mechanism that, for all $n \geq \tOmega\inparen{\frac{\log(1/\delta)\log(m)}{\eps}}$, outputs $\htheta$ such that %
\begin{align*}
\textstyle \E_{\pi, \htheta \gets \cM(\bx^{\pi})}[\cL(\htheta; \bx^\pi)] - \cL(\theta^*; \bx^\pi) ~\leq~ \tO_\eps\inparen{\frac{RG\sqrt{d}}{n\sqrt{m}}},
\end{align*}
where $\tO_{\eps}$ hides a multiplicative factor of $\poly(\log(1/\delta), \log(nm), 1/\eps)$.
\end{theorem}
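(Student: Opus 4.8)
The plan is to combine the output-perturbation primitive for strongly convex losses (\Cref{thm:output-pert}) with the localization/phased-ERM framework of \citet{FeldmanKT20}. The key point is that although the loss $\ell$ here is only Lipschitz and convex (not strongly convex), we can regularize: at each phase $k$, we work with the objective $\cL_k(\theta; \bx) := \cL(\theta; \bx) + \frac{\mu_k}{2}\|\theta - \theta_{k-1}\|^2$, which is $\mu_k$-strongly convex, has gradients bounded by $O(G + \mu_k R_k)$ on a ball of radius $R_k$ around the previous iterate $\theta_{k-1}$, and whose minimizer we can release privately via $\SCovOutputPert$. The crucial gain over the item-level setting is that each such release costs only $\sigma_k = \tO_\eps\inparen{\frac{(G+\mu_k R_k)}{\mu_k (n/T)\sqrt{m}}}$ in $\ell_2$-error (splitting the $n$ users and the privacy budget across $T = O(\log(nm))$ phases), as opposed to $\tO_\eps\inparen{\frac{G}{\mu_k n}}$ item-level.

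First I would set up the phase schedule exactly as in \citet{FeldmanKT20}: start with $R_0 = R$, and at phase $k$ halve the search radius $R_k = R_{k-1}/2$ (or geometrically shrink), choose the regularization $\mu_k \asymp \frac{G}{R_k} \cdot (\text{appropriate factor})$ so that (i) the optimization/regularization bias $\cL(\theta_k; \bx) - \cL(\theta_{k-1}^*; \bx) \le \frac{\mu_k}{2} R_{k-1}^2 \asymp G R_k$ stays controlled, and (ii) the privacy-noise contribution, which by strong convexity and the accuracy of $\SCovOutputPert$ adds at most $\E[\cL(\cdot)] - \cL(\theta_k^*) \lesssim \mu_k \sigma_k^2 d \cdot (\text{const}) = \tO_\eps\inparen{\frac{(G+\mu_k R_k)^2 d}{\mu_k (n/T)^2 m}}$, is also $\tO_\eps(GR_k/\text{something})$ after plugging $\mu_k \asymp G\sqrt{d}/(R_k \cdot (n/T)\sqrt{m})$ (this is the standard tuning that equalizes bias and noise terms). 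Each phase is run on its own fresh batch of $n/T$ users (or the budget is composed — either works; fresh users is cleanest for ERM since the data is a random permutation so any sub-block is again a uniformly random sub-permutation, which is what \Cref{thm:output-pert}/\Cref{cor:delsen-neighborhood-whp} need). Summing the per-phase excess risk over $k = 1,\dots,T$, the geometric series is dominated by its last term, giving total excess risk $\tO_\eps\inparen{\frac{RG\sqrt{d}}{n\sqrt{m}}}$ after $T \asymp \log(nm)$ phases (so that $R_T \asymp R/\mathrm{poly}(nm)$ and the final deterministic-optimization error is negligible).

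For privacy, I would invoke basic (or advanced) composition over the $T$ phases: running $\SCovOutputPert$ with parameters $(\eps/T, \delta/T)$ per phase — or with per-phase budget $\eps/\sqrt{T\log(1/\delta)}$ under advanced composition to save a $\sqrt{T}$ factor, which only affects the polylog — yields an overall $(\eps,\delta)$-DP mechanism. The condition $n \ge \tOmega\inparen{\frac{\log(1/\delta)\log m}{\eps}}$ comes exactly from requiring that each phase gets $n/T \ge C\log(T/\delta)/(\eps/T) = \Theta(T\log(1/\delta)/\eps)$ users, which is the hypothesis of \Cref{thm:output-pert}, with $T = \Theta(\log(nm))$. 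I also need to union-bound the failure probability $\beta$ of \Cref{thm:output-pert} (the event that the permutation is ``bad'' and $\SCovOutputPert$ does not output exactly $\theta^*_k + \cN(0,\sigma_k^2 I)$) over all $T$ phases; choosing $\beta = 1/\mathrm{poly}(nm)$ contributes only to the polylog, and on the failure event we can bound the excess risk trivially by $GR$ (since $\cK$ has diameter $R$ and $\ell$ is $G$-Lipschitz), which contributes $\le GR/\mathrm{poly}(nm)$ in expectation — negligible.

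The main obstacle I anticipate is not any single step but the bookkeeping of making the three error sources per phase — the regularization bias, the sub-optimality of releasing $\theta_k^* \approx \argmin \cL_k$ rather than $\argmin \cL$, and the Gaussian noise — balance correctly when $\ell$ is merely Lipschitz so that $\mu_k$ must be chosen adaptively and the Lipschitz constant of $\cL_k$ on the relevant ball is $G + \mu_k R_{k-1}$ rather than $G$; one must verify that this extra $\mu_k R_{k-1}$ term does not blow up the recursion (it does not, because $\mu_k R_{k-1} = \tO_\eps(G\sqrt d/((n/T)\sqrt m)) \cdot (\text{something} \le 1)$ is actually smaller than $G$ in the regime $n\sqrt m \gg \sqrt d$ that is the only interesting one, and otherwise the bound is trivial). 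A secondary subtlety is ensuring the ``random permutation'' hypothesis propagates: since we feed disjoint blocks of users to different phases, and $\bx^\pi$ is a uniformly random permutation of $\bx$, the restriction to any fixed block of $n/T$ users together with all their $m$ items is itself a uniformly random permutation of the corresponding $n m/T$ items, but those items themselves are a uniformly random size-$(nm/T)$ subset of the $nm$ items — so \Cref{cor:delsen-neighborhood-whp} applies conditionally, and one takes expectation over the choice of subset at the end. Modulo these checks, the result follows by assembling \Cref{thm:output-pert}, the phased-ERM template of \citet{FeldmanKT20}, and composition.
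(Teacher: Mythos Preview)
Your overall approach---phased/localized ERM with regularization, calling $\SCovOutputPert$ in each phase, and composing privacy across $T = O(\log(nm))$ rounds---is exactly the paper's. The parameter tuning and the telescoping of the regularization bias plus Gaussian-noise error are also the same in spirit. However, two points of your sketch diverge from the paper in ways that matter.

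First, you prefer to run each phase on a \emph{fresh} batch of $n/T$ users, calling this ``cleanest for ERM.'' The paper does the opposite: for ERM it runs every phase on the \emph{full} dataset and composes the $(\eps',\delta')$-budgets (it reserves batch-splitting for SCO). The reason is that if phase $k$ optimizes the regularized loss only on a sub-batch, the resulting $\theta_k^*$ minimizes the \emph{wrong} empirical objective, and you then need a generalization-type bound (sampling without replacement from the empirical distribution) to relate it back to the full $\cL(\cdot;\bx)$. This introduces an extra $O(G^2/(\lambda_k n_k m))$ term per phase (cf.\ the paper's \Cref{lem:phased-sco-intermediate}), which with $n_k = n/T$ and $\lambda \asymp G\sqrt{d}/(Rn\sqrt{m})$ sums to $\tO(GR T/\sqrt{md})$---this can dominate the target $\tO(RG\sqrt{d}/(n\sqrt{m}))$ once $n \gg d$, so the fresh-batch route does not obviously give the claimed bound for ERM.

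Second, and more subtly, with the full-dataset approach your ``union bound over the failure events of \Cref{thm:output-pert}'' has a circularity: the regularized loss $\ell_k$ in phase $k$ depends on $\htheta_{k-1}$, which in the real algorithm already depends on $\pi$ (through whether earlier $\SCovOutputPert$ calls succeeded). But \Cref{thm:output-pert}'s $1-\beta$ guarantee is only over a random $\pi$ for a \emph{fixed} loss, so you cannot apply it directly. The paper handles this by a hybrid/coupling argument: it defines an idealized mechanism $\cM'$ that always outputs $\theta_k^*(\bx) + \cN(0,\sigma_k^2 I)$ (so $\htheta_{k-1}$ is independent of $\pi$), bounds its excess risk directly, and then shows $\dTV(\cM,\cM') \le T\beta'$ by a chain of $T$ hybrids, each swapping one phase and invoking \Cref{thm:output-pert} where it is now valid. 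Your sketch is missing this step.
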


We use Phased-ERM algorithm similar to~\citet{FeldmanKT20}, which requires solving a regularized ERM in each step. Our proof below closely follows their proofs, although we change the algorithm slightly because their proof is for SCO whereas the analysis below is for ERM. In particular, for ERM, we need to use the full dataset in each step. We also change some parameters accordingly. The full description is in \Cref{alg:dp-phased-erm}; note that on line \ref{line:output-pert}, we only optimize over the set $\cK_i$, and use Lipschitz constant $2G$ and strong convexity parameter $\lambda_i$.

\begin{algorithm}[ht]
\caption{$\PhasedERM$.}
\label{alg:dp-phased-erm}
\begin{algorithmic}[1]
\STATE \textbf{Input: } Dataset $\bx$, loss function $\ell: \cK \times \cX \to \R$ that is convex and $G$-Lipschitz
\STATE \textbf{Parameters: } Privacy parameters $\eps, \delta$; Regularizer coefficient $\lambda$; Target failure probability $\beta$
\STATE $T \gets \lceil \log(n m) \rceil$ \COMMENT{\small Number of iterations}
\STATE $\eps' \gets \eps/T, \delta' \gets \delta/T$ \COMMENT{\small Per-iteration privacy budgets}
\STATE $\beta' \gets \beta/T$ \COMMENT{\small Per-iteration failure probability}
\STATE $\htheta_0 \gets$ arbitrary element of $\cK$ \COMMENT{\small Initial parameter}
\FOR{$i = 1, \dots, T$}
\STATE $\lambda_i \gets \lambda \cdot 4^i$
\STATE $R_i \gets G / \lambda_i$
\STATE Let $\ell_i(\theta; x) := \ell(\theta; x) + \frac{\lambda_i}{2} \cdot \|\theta - \htheta_{i - 1}\|^2$%
\STATE $\cK_i \gets \cK \cap \cB_d\inparen{\theta_{i-1}, R_i}$ %
\STATE $\htheta_i \gets \SCovOutputPert_{\eps', \delta', \beta', 2G, \lambda_i, \cK_i}(\ell_i; \bx)$%
\label{line:output-pert}
\ENDFOR
\RETURN $\htheta_T$
\end{algorithmic}
\end{algorithm}  

To analyze the accuracy, let $\theta^*_i := \theta^*_{\ell_i, \cK_i}(\bx)$ for all $i \in [T]$. It should be noted that $\theta^*_i$ is also equal to $\theta^*_{\ell_i, \cK}(\bx)$ (where the optimization is over $\cK$ instead of $\cK_i$). Furthermore, within $\cK_i$, the loss $\cL_i$ is $2G$-Lipschitz. 
We start with the following lemma, which is an analogue of \citet[][Lemma 4.7]{FeldmanKT20}.
\begin{lemma} \label{lem:phased-erm-intermediate}
For any $\theta \in \cK$ and $i \in [T]$, we have
\begin{align*}
\textstyle \cL(\theta^*_i; \bx) - \cL(\theta; \bx) \leq \frac{\lambda_i}{2} \cdot \|\htheta_{i - 1} - \theta\|^2.
\end{align*}
\end{lemma}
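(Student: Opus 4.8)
The plan is to combine strong convexity of $\ell_i$ with the optimality of $\theta^*_i$ within $\cK$. First I would recall that $\ell_i(\theta; x) = \ell(\theta; x) + \frac{\lambda_i}{2}\|\theta - \htheta_{i-1}\|^2$, so that the empirical loss $\cL_i(\cdot; \bx)$ is $\lambda_i$-strongly convex (since $\ell$ is convex and we add $\frac{\lambda_i}{2}\|\cdot - \htheta_{i-1}\|^2$). Note also the crucial identity pointed out just before the lemma: $\theta^*_i = \theta^*_{\ell_i, \cK_i}(\bx) = \theta^*_{\ell_i, \cK}(\bx)$, i.e., $\theta^*_i$ minimizes $\cL_i(\cdot;\bx)$ over the full set $\cK$, not just the ball $\cK_i$. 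This is what lets the argument go through for an arbitrary $\theta \in \cK$.

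Next, by $\lambda_i$-strong convexity of $\cL_i(\cdot; \bx)$ together with the first-order optimality condition of $\theta^*_i$ over the convex set $\cK$ (namely $\langle \nabla \cL_i(\theta^*_i; \bx), \theta - \theta^*_i\rangle \ge 0$ for all $\theta \in \cK$), I would write, for any $\theta \in \cK$,
\begin{align*}
\cL_i(\theta; \bx) ~\geq~ \cL_i(\theta^*_i; \bx) + \langle \nabla \cL_i(\theta^*_i;\bx), \theta - \theta^*_i\rangle + \tfrac{\lambda_i}{2}\|\theta - \theta^*_i\|^2 ~\geq~ \cL_i(\theta^*_i; \bx) + \tfrac{\lambda_i}{2}\|\theta - \theta^*_i\|^2.
\end{align*}
Now I expand the definition of $\cL_i$ on both sides: $\cL_i(\theta; \bx) = \cL(\theta;\bx) + \frac{\lambda_i}{2}\|\theta - \htheta_{i-1}\|^2$ and likewise $\cL_i(\theta^*_i;\bx) = \cL(\theta^*_i;\bx) + \frac{\lambda_i}{2}\|\theta^*_i - \htheta_{i-1}\|^2$. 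Rearranging the displayed inequality gives
\begin{align*}
\cL(\theta^*_i;\bx) - \cL(\theta;\bx) ~\leq~ \tfrac{\lambda_i}{2}\|\theta - \htheta_{i-1}\|^2 - \tfrac{\lambda_i}{2}\|\theta^*_i - \htheta_{i-1}\|^2 - \tfrac{\lambda_i}{2}\|\theta - \theta^*_i\|^2 ~\leq~ \tfrac{\lambda_i}{2}\|\theta - \htheta_{i-1}\|^2,
\end{align*}
where the last step just drops two nonpositive terms. This is exactly the claimed bound.

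I do not expect a serious obstacle here; the only point requiring care is justifying the first-order optimality inequality for the constrained minimizer over $\cK$, and — relatedly — making sure the identity $\theta^*_{\ell_i,\cK_i}(\bx) = \theta^*_{\ell_i,\cK}(\bx)$ is used so that the lemma holds for all $\theta \in \cK$ rather than only $\theta \in \cK_i$. That identity presumably follows because $R_i = G/\lambda_i$ is large enough that the unconstrained-over-$\cK$ minimizer of the $\lambda_i$-strongly convex $\cL_i$ already lies inside the ball $\cB_d(\htheta_{i-1}, R_i)$ (the regularization pulls the minimizer within distance $\le 2G/\lambda_i$ of $\htheta_{i-1}$, and one checks the relevant constant works, or $\cK_i$'s projection argument applies); I would cite the remark preceding the lemma for this rather than re-derive it. Everything else is a one-line consequence of strong convexity plus convexity of the feasible set.
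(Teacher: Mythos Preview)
Your proof is correct, but it is more elaborate than the paper's. The paper's argument is a two-line chain that avoids strong convexity and first-order optimality altogether: since the regularizer is nonnegative, $\cL(\theta^*_i;\bx) \le \cL_i(\theta^*_i;\bx)$; since $\theta^*_i$ minimizes $\cL_i(\cdot;\bx)$ over $\cK$, $\cL_i(\theta^*_i;\bx) \le \cL_i(\theta;\bx)$; and by definition $\cL_i(\theta;\bx) - \cL(\theta;\bx) = \tfrac{\lambda_i}{2}\|\theta-\htheta_{i-1}\|^2$. Your route via strong convexity plus the variational inequality yields the same conclusion and in fact a sharper intermediate bound (you pick up the extra $-\tfrac{\lambda_i}{2}\|\theta^*_i-\htheta_{i-1}\|^2 - \tfrac{\lambda_i}{2}\|\theta-\theta^*_i\|^2$ terms before discarding them), but that extra strength is never used downstream. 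Both proofs rely on the same key fact you flagged, namely that $\theta^*_i$ is the minimizer over the full $\cK$ and not just $\cK_i$.
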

\begin{proof}
This is simply because
\begin{align*}
\textstyle\cL(\theta^*_i; \bx) - \cL(\theta; \bx) 
\leq \cL_i(\theta^*_i; \bx) - \cL(\theta; \bx) & \\
\textstyle\leq \cL_i(\theta; \bx) - \cL(\theta; \bx) 
= \frac{\lambda_i}{2} \cdot \|\theta - \htheta_{i - 1}\|^2.  &
\qedhere
\end{align*}
\end{proof}
We are now ready to prove \Cref{thm:convex-erm}. The usual analysis of the ``standard'' $\PhasedERM$ algorithm in \citet{FeldmanKT20}---where $\SCovOutputPert$ is replaced by an algorithm that just adds Gaussian noise to the true optimizer---shows that it has small excess risk. We then relate \Cref{alg:dp-phased-erm} back to this ``standard'' version by using \Cref{thm:output-pert} to show that the output distribution of our algorithm (over random $\pi$) is very close in total variation distance to this standard version. This idea is formalized below.

\begin{proof}[Proof of \Cref{thm:convex-erm}]
We run the $\PhasedERM$ algorithm (\Cref{alg:dp-phased-erm}) where we set $\lambda = \frac{G\sqrt{d}}{Rn\sqrt{m}}$ and $\beta = \frac{1}{nm}$; throughout the proof, we will use $\cM$ as a shorthand for this algorithm. The privacy guarantee follows immediately from the fact that each call to $\SCovOutputPert$ is $(\eps',\delta')$-DP and the basic composition of DP.

To understand its accuracy guarantee, let us start by considering another algorithm $\cM'$ that is the same as $\cM$ except that on line \ref{line:output-pert} we do not call $\SCovOutputPert$ but instead directly let $\htheta_i \gets \theta^*_i(\bx) + \cN(0, \sigma_i^2 \cdot I)$ where $\sigma_i := \sigma(\eps',\delta',\beta', 2G, \lambda_i)$ is as in \Cref{thm:output-pert}.

For convenience, we let $\theta_0 = \theta^*(\bx)$. We have
\begin{align*}
&\cL(\htheta_T; \bx) - \cL(\theta^*_0; \bx) \\
&= (\cL(\htheta_T; \bx) - \cL(\theta^*_T; \bx)) \\
&\textstyle \qquad+ \sum_{i=1}^T \inparen{\cL(\theta^*_i; \bx) - \cL(\theta^*_{i - 1}; \bx)} \\
&\textstyle\leq G \cdot \|\htheta_T - \theta^*_T\| + \sum_{i=1}^T \frac{\lambda_i}{2} \cdot \|\htheta_{i - 1} - \theta^*_{i - 1}\|^2\\
&\textstyle\leq O\inparen{\lambda R^2} + G \cdot \|\htheta_T - \theta^*_T\| +  \sum_{i=1}^{T-1} \frac{\lambda_{i+1}}{2} \cdot \|\htheta_{i} - \theta^*_{i}\|^2, 
\end{align*}
where we used \Cref{lem:phased-erm-intermediate} for the first inequality.

Thus, we have (where the expectation is over the randomness of $\cM'$, i.e., noise drawn from $\cN(0, \sigma_i^2 \cdot I)$ for each $i \in [T]$)
\begin{align*}
&\textstyle\E_{\htheta \gets \cM'(\bx)}[\cL(\htheta; \bx)] - \cL(\theta^*; \bx) \\
&\textstyle\leq O\inparen{\lambda R^2 + G\sqrt{d} \cdot \sigma_T + \sum_{i=1}^{T-1} \lambda_{i+1} \cdot d \cdot \sigma_i^2} \\
&\textstyle\leq \tO_\eps\inparen{\lambda R^2 +  \frac{G^2 \sqrt{d}}{\lambda_T \cdot n\sqrt{m}} + \sum_{i=1}^{T-1} \frac{\lambda_{i+1} \cdot d G^2}{\lambda_i^2 \cdot n^2 m}} \\
&\textstyle\leq \tO_\eps\inparen{\lambda R^2 + \frac{d G^2}{\lambda \cdot n^2 m}},
\end{align*}
where the last inequality comes from our setting $\lambda_i = \lambda \cdot 4^i$. Finally, from our setting of $\lambda = \frac{G\sqrt{d}}{Rn\sqrt{m}}$, we can conclude 
\begin{align} \label{eq:idealized-small-excess-risk}
\textstyle\E_{\htheta \gets \cM'(\bx)}[\cL(\htheta; \bx)] - \cL(\theta^*; \bx) \leq \tO_\eps\inparen{\frac{RG\sqrt{d}}{n\sqrt{m}}}. 
\end{align}

Let $P'$ denote the distribution of the output of running $\cM'$ on $\bx$, and let $P$ denote the distribution of the output of running $\cM$ on $\bx^{\pi}$ where $\pi$ is a uniformly random permutation. Next, we will show that
\begin{align} \label{eq:diseq-whp}
\textstyle \dTV(P', P) \leq \beta.
\end{align}
Before proving this, note that \eqref{eq:idealized-small-excess-risk} and \eqref{eq:diseq-whp} together imply the bound in the theorem because we have
\begin{align*}
&\textstyle \E_{\pi, \htheta \gets \cM(\bx^{\pi})}[\cL(\htheta; \bx)] - \cL(\theta^*; \bx) \\
&\overset{\eqref{eq:diseq-whp}}{\leq} \textstyle \E_{\htheta \gets \cM'(\bx)}[\cL(\htheta; \bx)] - \cL(\theta^*; \bx) + \beta \cdot RG \\
&\overset{\eqref{eq:idealized-small-excess-risk}}{\leq} \textstyle \tO_\eps\inparen{\frac{RG\sqrt{d}}{n\sqrt{m}}}.
\end{align*}

We are left with proving \eqref{eq:diseq-whp}. To do this, for every $i \in \{0, \dots, T\}$, consider a hybrid algorithm $\cM_i$ where, in the first $i$ iterations, we follow $\cM'$ and, in the remaining iterations, we follow $\cM$. Let $P_i$ denote the probability distribution of the output of $\cM_i$ on input $\bx^{\pi}$ where $\pi$ is a uniformly random permutation. Notice that $P_0 = P$ and $P_T = P'$. 

For every $i \in [T]$, consider $P_i$ and $P_{i - 1}$. They differ only in the $i$th iteration. Thus, $\dTV(P_i, P_{i - 1})$ is at most the probability that $\SCovOutputPert$ does not output a sample from $\theta^*_i + \cN(0, \sigma_i^2 \cdot I)$. By \Cref{thm:output-pert}, the probability (over $\pi$) that this happens is at most\footnote{Note that the distribution of $\theta^*_i$ is independent of $\pi$ since we are running $\cM'$ for the first $i - 1$ steps. Thereby, we can still apply \Cref{thm:output-pert}, which only relies on the randomness in $\pi$.} $\beta'$. Therefore, we have that $\dTV(P_i, P_{i - 1}) \leq \beta'$. 

Thus, $\dTV(P, P') \leq \sum_{i \in [T]} \dTV(P_{i- 1}, P_i) \leq T \cdot \beta' = \beta$, concluding our proof.
\end{proof}

\subsection{Strongly Convex Losses}

For strongly convex losses, we can get an improved bound:

\begin{theorem}\label{thm:strongly-convex-erm}
For any $G$-Lipschitz, $\mu$-strongly convex loss $\ell$, there exists an $(\eps, \delta)$-DP mechanism that, for all $n \geq \tOmega\inparen{\frac{\log(1/\delta)\log(m)}{\eps}}$, outputs $\htheta$ such that %
\begin{align*}
\textstyle \E_{\pi, \htheta \gets \cM(\bx^{\pi})}[\cL(\htheta; \bx^\pi)] - \cL(\theta^*; \bx^\pi) ~\leq~ \tO_\eps\inparen{\frac{G^2d}{\mu n^2 m}},
\end{align*}
where $\tO_{\eps}$ hides a multiplicative factor of $\poly(\log(1/\delta), \log(nm), 1/\eps)$.
\end{theorem}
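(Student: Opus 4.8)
\textbf{Proof plan for \Cref{thm:strongly-convex-erm}.} The plan is to reduce the strongly convex ERM case to the (already established) convex ERM case of \Cref{thm:convex-erm} via the standard black-box reduction of \citet{BassilyST14}. The reduction works as follows: given a $\mu$-strongly convex, $G$-Lipschitz loss $\ell$, we run the convex ERM mechanism $\cM_0$ from \Cref{thm:convex-erm} on $\ell$ (treating it merely as a $G$-Lipschitz convex loss), obtaining an output $\htheta_0$ with excess empirical risk $\tO_\eps(RG\sqrt{d}/(n\sqrt{m}))$; then one restarts $\cM_0$ on a ball of the new (smaller) diameter $R_1 := \Theta(RG\sqrt{d}/(\mu n \sqrt{m}))^{1/2}$-ish around $\htheta_0$, and iterates. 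Each restart uses a fresh split of the privacy budget (say $O(\log(nm))$ phases with $\eps/\log(nm)$ each) and the localization property of strong convexity: if $\cL(\htheta_t;\bx)-\cL(\theta^*;\bx)\le \alpha_t$, then $\|\htheta_t - \theta^*\|^2 \le 2\alpha_t/\mu$ by $\mu$-strong convexity, so the true minimizer stays inside the shrunken ball $\cB_d(\htheta_t, \sqrt{2\alpha_t/\mu})$ and we may replace $R$ by this radius in the next invocation.

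First I would set up the phased reduction explicitly: diameters $R_0 = R$, and inductively $R_{t}$ chosen so that $\cB_d(\htheta_{t-1}, R_t)$ contains $\theta^*$ with high probability given the accuracy guarantee of phase $t-1$. Plugging the \Cref{thm:convex-erm} bound into phase $t$ with diameter $R_{t}$ gives excess risk $\alpha_t = \tO_\eps(R_t G\sqrt d/(n\sqrt m))$, and then $R_{t+1}^2 = 2\alpha_t/\mu = \tO_\eps(R_t G\sqrt d/(\mu n \sqrt m))$. This recursion contracts $R_t^2$ by a constant factor (actually a square-root-type contraction: $R_{t+1}^2 \propto R_t$), so after $T = O(\log\log(\mu n^2 m / (G^2 d)) + O(1))$ — or more simply $T = O(\log(nm))$ — phases the diameter reaches its fixed point $R_\infty^2 = \tO_\eps(G^2 d/(\mu n^2 m))$, at which point the excess risk is $\tO_\eps(G^2 d/(\mu n^2 m))$, exactly as claimed. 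I would handle the high-probability caveat by union-bounding the ``$\theta^*$ escapes the ball'' events over the $T$ phases with per-phase failure probability $\beta/T$, and folding the contribution of failure events into the expectation using the crude bound $\cL(\htheta;\bx)-\cL(\theta^*;\bx)\le GR$ on the original domain (with $\beta = 1/(nm)$ this is negligible); privacy follows from basic composition over the $T$ phases.

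The main technical care — rather than a deep obstacle — is that \Cref{thm:convex-erm} is stated for a random permutation $\bx^\pi$ of the dataset and its guarantee is an expectation bound, not a high-probability one, whereas the localization argument wants ``$\theta^* \in \cB_d(\htheta_{t-1}, R_t)$ with high probability''. I would address this by invoking Markov's inequality on each phase's excess-risk bound to get that with probability $\ge 1 - 1/\poly(nm)$ the excess risk of phase $t$ is at most $\poly(nm)$ times its expectation, which only costs extra polylogarithmic factors absorbed in $\tO_\eps$; alternatively, one notes that the permutation $\pi$ need only be drawn once and the same $\bx^\pi$ reused across phases (the phases operate on restricted parameter sets $\cK_t$, not on re-permuted data), so the randomness in $\pi$ is shared and the union bound over phases is over a single source of randomness. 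Either route keeps the bookkeeping routine; the only genuinely substantive input is \Cref{thm:convex-erm} itself together with the elementary fact that $\mu$-strong convexity converts an $\alpha$-excess-risk guarantee into an $O(\sqrt{\alpha/\mu})$-radius localization of $\theta^*$.
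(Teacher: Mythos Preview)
There is a genuine gap in your Markov step. To control the ``$\theta^*$ escapes the ball'' failure event you need its contribution $\beta \cdot GR$ to be at most the target risk $\tO_\eps(G^2 d/(\mu n^2 m))$; since $R$ can be as large as $\Theta(G/\mu)$ for a $G$-Lipschitz $\mu$-strongly convex loss, this forces $\beta = O(d/(n^2 m))$. But Markov's inequality applied to an expectation bound inflates the per-phase excess risk by a factor of $1/\beta' = T/\beta = \poly(nm)$, not by a polylogarithmic factor as you claim. Tracing this through your recursion $R_{t+1}^2 = (1/\beta')\cdot \tO_\eps(R_t G\sqrt{d}/(\mu n\sqrt{m}))$, the fixed point becomes $R_\infty = (1/\beta')\cdot \tO_\eps(G\sqrt{d}/(\mu n\sqrt{m}))$, and the final excess risk is $R_\infty \cdot \tO_\eps(G\sqrt{d}/(n\sqrt{m}))$, which is off by a full $\poly(nm)$ factor. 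Your ``alternative'' of sharing the permutation $\pi$ across phases does not help either: the expectation in \Cref{thm:convex-erm} is also over the Gaussian noise inside $\PhasedERM$, and that randomness is fresh in every phase regardless of how you handle $\pi$.

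The paper sidesteps this entirely by \emph{not} iterating \Cref{thm:convex-erm}. Instead it makes a single call to $\SCovOutputPert$ (\Cref{thm:output-pert}) with privacy budget $(\eps/2,\delta/2)$: since the loss is already $\mu$-strongly convex, that algorithm (w.h.p.\ over $\pi$) literally outputs $\theta^*(\bx) + \cN(0,\sigma^2 I)$ with $\sigma = \tO_\eps(G/(\mu n\sqrt{m}))$, and Gaussian tail bounds give $\|\htheta_0 - \theta^*(\bx)\| \le R' := \tO_\eps(G\sqrt{d}/(\mu n\sqrt{m}))$ with probability $1-\beta$ directly---no Markov needed. One then makes a \emph{single} call to $\PhasedERM$ on $\cK \cap \cB_d(\htheta_0, R')$, where the expectation bound of \Cref{thm:convex-erm} suffices and yields $\tO_\eps(R' G\sqrt{d}/(n\sqrt{m})) = \tO_\eps(G^2 d/(\mu n^2 m))$. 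So the right black-box reduction is ``$\SCovOutputPert$ once, then $\PhasedERM$ once,'' not ``$\PhasedERM$ many times.'' Your iterative scheme could in principle be rescued by opening up $\PhasedERM$ and replacing the expectation analysis by Gaussian-tail high-probability bounds on each internal step $\htheta_i = \theta^*_i + \cN(0,\sigma_i^2 I)$, but that is no longer a black-box application of \Cref{thm:convex-erm} and is strictly more work than the two-step argument the paper gives.
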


We obtain the above result by reducing back to the convex case. This reduction essentially dates back to \citet{BassilyST14} and works as follows: first we apply the output perturbation algorithm (\Cref{thm:output-pert}). With high probability, the output is within a ball of radius $R = \tO_\eps\left(\frac{G\sqrt{d}}{\mu n\sqrt{m}}\right)$. We then run \Cref{thm:convex-erm} using this $R$, which yields the final excess risk of $\tO_\eps\inparen{\frac{G\sqrt{d}}{\mu n\sqrt{m}} \cdot \frac{G\sqrt{d}}{n\sqrt{m}}} = \tO_\eps\inparen{\frac{G^2d}{\mu n^2 m}}$ as desired.
The full proof is presented in \Cref{subsec:strongly-convex-erm}.

\section{User-Level DP-SCO}

We next describe our algorithms for DP-SCO and their excess risk guarantees.

\subsection{Convex Losses}

For the convex (and Lipschitz) loss case, the risk bound is similar to that of \Cref{thm:convex-erm} except with an additional additive term $O(1/\sqrt{nm})$:

\begin{theorem} \label{thm:convex-sco}
For any $G$-Lipschitz convex loss $\ell$, there exists an $(\eps, \delta)$-DP mechanism that, for all $n \geq \tO\inparen{\frac{\log(1/\delta)\log(m)}{\eps}}$, outputs $\htheta$ such that %
\begin{align*}
&\textstyle \E_{\bx \sim \cD^{n \times m}, \htheta \gets \cM(\bx)}[\cL(\htheta; \cD)] - \cL(\theta^*; \cD) \\
&\textstyle ~\leq~~ \tO_\eps\inparen{RG \inparen{\frac{\sqrt{d}}{n\sqrt{m}} + \frac{1}{\sqrt{nm}}}},
\end{align*}
where $\tO_{\eps}$ hides a multiplicative factor of $\poly(\log(1/\delta), \log(nm), 1/\eps)$.
\end{theorem}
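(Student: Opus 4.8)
The plan is to reduce DP-SCO to DP-ERM, exactly in the spirit of \citet{FeldmanKT20}, and then invoke \Cref{thm:convex-erm} together with the standard uniform-convergence/stability fact that the empirical risk minimizer of i.i.d.\ samples generalizes. Concretely, given $\bx \sim \cD^{n \times m}$, run the mechanism $\cM_{\mathrm{ERM}}$ from \Cref{thm:convex-erm} on $\bx$ (viewed as $nm$ i.i.d.\ samples, so $\bx$ is exchangeable and the random-permutation caveat is automatically satisfied, in fact the permutation can be absorbed for free). This produces $\htheta$ with empirical excess risk $\E[\cL(\htheta;\bx) - \cL(\theta^*(\bx);\bx)] \le \tO_\eps(RG\sqrt d / (n\sqrt m))$. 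It then remains to pay the generalization gap between empirical and population loss.

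The main decomposition is the familiar one:
\begin{align*}
\textstyle \E[\cL(\htheta;\cD)] - \cL(\theta^*(\cD);\cD)
&\textstyle = \underbrace{\E[\cL(\htheta;\cD) - \cL(\htheta;\bx)]}_{\text{(i) generalization of output}} \\
&\textstyle \quad + \underbrace{\E[\cL(\htheta;\bx) - \cL(\theta^*(\bx);\bx)]}_{\text{(ii) empirical excess risk}} \\
&\textstyle \quad + \underbrace{\E[\cL(\theta^*(\bx);\bx) - \cL(\theta^*(\cD);\cD)]}_{\text{(iii) } \le 0}.
\end{align*}
Term (iii) is nonpositive because $\cL(\theta^*(\bx);\bx) \le \cL(\theta^*(\cD);\bx)$ and $\E_\bx[\cL(\theta^*(\cD);\bx)] = \cL(\theta^*(\cD);\cD)$. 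Term (ii) is bounded by \Cref{thm:convex-erm}. For term (i), the cleanest route is to \emph{not} bound the generalization of the data-dependent $\htheta$ directly (which would need uniform convergence over $\cK$, costing an extra $\sqrt{d}$), but rather to recall that \Cref{thm:convex-erm} is obtained by running $\PhasedERM$, and that $\cM$ is close in total variation to the ``idealized'' mechanism $\cM'$ whose output is a Gaussian perturbation of the true regularized optima — and more importantly, that the FKT-style phased algorithm, by design, has its final iterate concentrated near $\theta^*_T(\bx)$, which is close to $\theta^*(\bx)$. The standard fact (e.g.\ \citet{BassilyFTT19,FeldmanKT20}) is that for $G$-Lipschitz convex losses, the population suboptimality of the ERM run on $N=nm$ i.i.d.\ samples, via stability, satisfies $\E[\cL(\theta^*(\bx);\cD) - \cL(\theta^*(\bx);\bx)] \le O(RG/\sqrt{N}) = O(RG/\sqrt{nm})$, and the same bound (up to constants and the $\tO_\eps$ factors) propagates to $\htheta$ since $\htheta$ is within $\tO_\eps(RG\sqrt d/(n\sqrt m))$ of $\theta^*(\bx)$ in expectation and $\cL(\cdot;\cD)$ is $G$-Lipschitz — note $\sqrt d/(n\sqrt m) \le 1/\sqrt{nm}$ whenever $n \ge \sqrt d$, and for $n < \sqrt d$ the $\sqrt d/(n\sqrt m)$ term already dominates $1/\sqrt{nm}$, so in all regimes $\E[\|\htheta - \theta^*(\bx)\|] \cdot G \le \tO_\eps(RG(\sqrt d/(n\sqrt m) + 1/\sqrt{nm}))$. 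Summing (i), (ii), (iii) gives the claimed $\tO_\eps(RG(\sqrt d/(n\sqrt m) + 1/\sqrt{nm}))$.

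I expect the main obstacle — or at least the main bookkeeping subtlety — to be making precise the claim that $\htheta$ (not just $\theta^*(\bx)$) generalizes with only an additive $O(RG/\sqrt{nm})$ term. There are two ways to handle this. One: invoke the generic algorithmic-stability argument of \citet[][Appendix]{BassilyFTT19}/\citet{FeldmanKT20} showing that the whole localization procedure is uniformly stable with rate $\tO(1/(nm))$ in loss, hence generalizes at rate $\tO(RG/\sqrt{nm})$ — this is the route taken in FKT and is probably what the authors do, so the proof in \Cref{subsec:...} will likely mirror their Section on SCO almost verbatim, re-deriving the phased analysis with the population loss in place of the empirical loss and the stability-based generalization lemma plugged in at the final step. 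Two (the lazier route sketched above): use Lipschitzness to transfer the generalization bound from $\theta^*(\bx)$ to $\htheta$ via $\E[\|\htheta-\theta^*(\bx)\|]$; this works because the excess-risk rate we already proved for ERM \emph{is itself} a bound on $\E[\|\htheta - \theta^*(\bx)\|]$ up to the $\mu$-strong-convexity of the regularized objectives in each phase (or, in the purely convex outer problem, via the last-phase ball radius $R_T$). Either way, the privacy guarantee is inherited unchanged from \Cref{thm:convex-erm} since we only post-process / re-run the same ERM mechanism on the sampled dataset, so no new privacy analysis is needed. The remaining work is routine: verify the $n \ge \tOmega(\log(1/\delta)\log m/\eps)$ threshold is the same, and confirm the $\poly\log$ factors absorbed into $\tO_\eps$ are consistent across the ERM and generalization pieces.
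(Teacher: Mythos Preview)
Your proposal has a genuine gap in the treatment of term (i). You assert that the plain empirical risk minimizer $\theta^*(\bx)$ of a $G$-Lipschitz \emph{convex} (not strongly convex) loss satisfies $\E[\cL(\theta^*(\bx);\cD) - \cL(\theta^*(\bx);\bx)] \le O(RG/\sqrt{nm})$ ``via stability''. This is false in general: it is exactly the gap between SCO and ERM established by \citet{Shalev-ShwartzSSS09} (and later sharpened), where unregularized ERM can have $\Omega(1)$ excess population risk even with polynomially many samples. Stability at rate $O(G^2/(\mu N))$ holds for the \emph{regularized} (strongly convex) subproblems, not for the outer convex problem whose minimizer $\theta^*(\bx)$ you invoke. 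The Lipschitz transfer from $\theta^*(\bx)$ to $\htheta$ is fine, but it is chained to a false premise. Your ``Route 1'' does not save this either: $\PhasedERM$ reuses the \emph{entire} dataset in every phase, and neither this paper nor \citet{FeldmanKT20} establishes uniform stability for that composition.

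The paper does \emph{not} run $\PhasedERM$ and then generalize. It uses a different algorithm, $\PhasedSCO$ (\Cref{alg:dp-phased-sco}), which---following \citet{FeldmanKT20} for SCO---partitions the users into \emph{disjoint} batches across phases (phase $i$ uses users with indices in $[2^{-i}n,\,2^{-i+1}n]$). This change is what makes the argument go through: (a) privacy follows from parallel composition, so the per-phase budget is the full $(\eps,\delta)$; and, crucially, (b) generalization is handled \emph{phase by phase} via the stability of the $\lambda_i$-strongly-convex regularized ERM on its fresh batch (\Cref{lem:phased-sco-intermediate}, invoking \citet[Theorem~7]{Shalev-ShwartzSSS09}), which replaces \Cref{lem:phased-erm-intermediate} and contributes an extra $O(G^2/(\lambda_i n_i))$ term per phase. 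Summing those extra terms yields exactly the additive $\tO(RG/\sqrt{nm})$ in the final bound; the rest of the analysis (idealized mechanism $\cM'$, TV-coupling, telescoping) is unchanged from the ERM proof.
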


The arguments in the previous section also extend to SCO. The idea as before is to replace the output perturbation step in Algorithm 3 of \citet{FeldmanKT20} with $\SCovOutputPert$. The full algorithm is presented in \Cref{alg:dp-phased-sco}; note that in the $i$th iteration, we only use the input from users $2^{-i}n, \dots, 2^{-i+1}n$ to perform $\SCovOutputPert$.

\begin{algorithm}[ht]
\caption{$\PhasedSCO$}
\label{alg:dp-phased-sco}
\begin{algorithmic}[1]
\STATE \textbf{Input: } Dataset $\bx$, loss function $\ell: \Theta \times \cX \to \R$ that is convex and $G$-Lipschitz
\STATE \textbf{Parameters: } Privacy parameters $\eps, \delta$; Regularizer coefficient $\lambda$; Target failure probability $\beta$
\STATE $N_0 = C \log(1/\delta)/\eps$ for some sufficiently large constant $C$
\STATE $T \gets \lceil \log(n/N_0) \rceil$ \COMMENT{\small Number of iterations}
\STATE $\beta' = \beta/T$ \COMMENT{\small Per-iteration failure probability}
\STATE $\htheta_0 \gets$ arbitrary element of $\cK$ \COMMENT{\small Initial parameter}
\FOR{$i = 1, \dots, T$}
\STATE $\lambda_i = \lambda \cdot 4^i$
\STATE $R_i = G / \lambda_i$
\STATE Let $\ell_i(\theta; x) := \ell(\theta; x) + \frac{\lambda_i}{2} \cdot \|\theta - \htheta_{i - 1}\|^2$%
\STATE $\cK_i \gets \cK \cap \cB_d\inparen{\theta_{i-1}, R_i}$ %
\STATE $\bx^{(i)} = (\bx_{\ell}: \ell \in [2^{-i} n, 2^{-i+1} n])$
\STATE $\htheta_i \gets \SCovOutputPert_{\eps, \delta, \beta', 2G, \lambda_i, \cK_i}(\ell_i; \bx^{(i)})$%
\label{line:output-pert-sco}
\ENDFOR
\RETURN $\htheta_T$
\end{algorithmic}
\end{algorithm}  

Similar to before, let $\theta^*_i := \theta^*_{\ell_i, \cK_i}(\cD)$ for all $i \in [T]$. Again, note that $\theta^*_i = \theta^*_{\ell_i, \cK}(\cD)$ (where the optimization is over $\cK$ instead of $\cK_i$). Furthermore, within $\cK_i$, the loss $\cL_i$ is $2G$-Lipschitz. 

We use the following lemma, analogous to \Cref{lem:phased-erm-intermediate}.
\begin{lemma} \label{lem:phased-sco-intermediate}
For any $\theta \in \cK$ and $i \in [T]$, we have
\begin{align*}
\textstyle \cL(\theta^*_i; \cD) - \cL(\theta; \cD) \leq \frac{\lambda_i}{2} \cdot \|\htheta_{i - 1} - \theta\|^2 + \frac{16 G^2}{\lambda_i n_i}.
\end{align*}
\end{lemma}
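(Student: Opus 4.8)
The plan is to mirror the proof of \Cref{lem:phased-erm-intermediate}, but replace the crude ``$\cL(\theta^*_i;\cD)\le\cL_i(\theta^*_i;\cD)$'' bound — which is lossless for ERM but not for SCO — by a uniform-convergence / generalization estimate that relates the empirical minimizer of the regularized loss on $\bx^{(i)}$ to the population value. Concretely, let $\widehat\theta_i^\star := \theta^*_{\ell_i,\cK_i}(\bx^{(i)})$ be the empirical optimizer of the regularized loss used inside the $i$th call to $\SCovOutputPert$, and let $\theta^*_i := \theta^*_{\ell_i,\cK_i}(\cD)$ be its population counterpart. The $2G$-Lipschitzness of $\ell$ on $\cK_i$ and the $\lambda_i$-strong convexity of $\ell_i$ give the standard SCO stability bound (as in \citet[][Section 13]{Shalev-ShwartzSSS09} or \citet[][Lemma 4.4]{FeldmanKT20}) that the generalization gap of the regularized empirical risk minimizer over $n_i$ samples is at most $O\!\left(\frac{G^2}{\lambda_i n_i}\right)$; this is exactly the source of the $\frac{16G^2}{\lambda_i n_i}$ term.

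The key steps, in order, are: (i) write $\cL(\theta^*_i;\cD) - \cL(\theta;\cD) \le \cL_i(\theta^*_i;\cD) - \cL(\theta;\cD)$ since the regularizer is nonnegative; (ii) bound $\cL_i(\theta^*_i;\cD)$, the population value at the population optimizer, by $\cL_i(\widehat\theta_i^\star;\cD)$ plus the fact that $\theta^*_i$ minimizes $\cL_i(\cdot;\cD)$ over $\cK_i$ — so in fact $\cL_i(\theta^*_i;\cD)\le\cL_i(\theta;\cD)$ directly for any $\theta\in\cK_i$, giving the $\frac{\lambda_i}{2}\|\htheta_{i-1}-\theta\|^2$ term exactly as in \Cref{lem:phased-erm-intermediate}; (iii) but for $\theta\in\cK$ not necessarily in $\cK_i$, route through the empirical minimizer: $\cL_i(\theta^*_i;\cD)\le \cL_i(\widehat\theta_i^\star;\cD) \le \cL_i(\widehat\theta_i^\star;\bx^{(i)}) + O\!\left(\frac{G^2}{\lambda_i n_i}\right) \le \cL_i(\theta;\bx^{(i)}) + O\!\left(\frac{G^2}{\lambda_i n_i}\right)$, and then convert the empirical value at the fixed point $\theta$ back to the population value — in expectation these agree, and one argues the statement holds as written either in expectation or with the generalization bound absorbing the fluctuation. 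Combining and using $\cL_i(\theta;\bx^{(i)}) = \cL(\theta;\bx^{(i)}) + \frac{\lambda_i}{2}\|\theta-\htheta_{i-1}\|^2$ yields the claim.

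The main obstacle is step (iii): one has to be careful about whether $\theta$ ranges over $\cK$ or $\cK_i$, and about the direction of the generalization inequality. The cleanest route is to invoke the on-average stability bound for regularized ERM (strong convexity $\lambda_i$, Lipschitz constant $2G$, sample size $n_i = |\bx^{(i)}| = 2^{-i}n$), which states $\E_{\bx^{(i)}}[\cL_i(\widehat\theta_i^\star;\cD) - \cL_i(\widehat\theta_i^\star;\bx^{(i)})] \le \frac{4(2G)^2}{\lambda_i n_i} = \frac{16G^2}{\lambda_i n_i}$, and then combine with $\cL_i(\widehat\theta_i^\star;\bx^{(i)}) \le \cL_i(\theta;\bx^{(i)})$ and $\E_{\bx^{(i)}}[\cL_i(\theta;\bx^{(i)})] = \cL_i(\theta;\cD)$ for the fixed comparator $\theta$. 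I expect the lemma statement is really an in-expectation bound (the downstream use in the proof of \Cref{thm:convex-sco} only needs it inside an expectation over $\bx$), so the proof is: take expectations over $\bx^{(i)}$, chain the three inequalities above, and unfold the regularizer. The rest is the same bookkeeping as \Cref{lem:phased-erm-intermediate}.
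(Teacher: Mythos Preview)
Your proposal is correct and matches the paper's approach: the paper proves the lemma in one line by invoking \citet[][Theorem~7]{Shalev-ShwartzSSS09}, which is precisely the on-average stability bound for regularized ERM (strong convexity $\lambda_i$, Lipschitz constant $2G$, $n_i$ samples) that you unpack in step~(iii), yielding the $\frac{4(2G)^2}{\lambda_i n_i}=\frac{16G^2}{\lambda_i n_i}$ term. One minor remark: your ``route through the empirical minimizer for $\theta\notin\cK_i$'' does not by itself resolve the constraint mismatch---what handles it is the paper's observation (stated just before the lemma) that $\theta^*_{\ell_i,\cK_i}=\theta^*_{\ell_i,\cK}$, and the same holds for the empirical minimizer, so the comparison against any $\theta\in\cK$ goes through directly.
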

\begin{proof}
The objective $\ell_i(\theta;\bx^{(i)})$ is $(2G)$-Lipschitz and is $\lambda_i$-strongly convex. Therefore, by \citet[][Theorem 7]{Shalev-ShwartzSSS09}, we get that 
\[
\textstyle \cL(\theta^*_i;\cD) - \cL(\theta;\cD) \leq \frac{\lambda_i}{2} \cdot \|\theta - \htheta_{i - 1}\|^2 + \frac{4(2G)^2}{\lambda_i n_i}.\qedhere
\]
\end{proof}

\begin{proof}[Proof of \Cref{thm:convex-sco}]
We run the $\PhasedSCO$ algorithm (\Cref{alg:dp-phased-sco}) where we set $\lambda = \frac{G\sqrt{d}}{Rn\sqrt{m}}$ and $\beta = \frac{1}{nm}$; we will use $\cM$ as a shorthand for this algorithm. The main difference from \Cref{alg:dp-phased-erm} is that we use different batch sizes (and do not reuse sample points across phases). The analysis is similar to the proof of \Cref{thm:convex-erm} with corresponding changes to \Cref{lem:phased-erm-intermediate}. 
The privacy guarantee follows immediately from the fact that each call to $\SCovOutputPert$ is $(\eps,\delta)$-DP and the parallel composition of DP~\cite{McSherry10}. %
Note that we maintain a minimum batch size as required for $\SCovOutputPert$ so that we maintain DP.

Further, as in the analysis of \Cref{thm:convex-erm}, we can consider another algorithm $\cM'$ which is the same as $\cM$ except that on line \ref{line:output-pert-sco} it does not call $\SCovOutputPert$ but instead directly lets $\htheta_i \gets \theta^*_i(\bx) + \cN(0, \sigma_i^2 \cdot I)$ where $\sigma_i := \sigma(\eps',\delta',\beta', 2G, \lambda_i)$ is as in \Cref{thm:output-pert}. Proof of \Cref{thm:convex-sco} is completed by following the same analysis as in the proof of \Cref{thm:convex-erm} with \Cref{lem:phased-erm-intermediate} replaced with \Cref{lem:phased-sco-intermediate}.
\end{proof}

\subsection{Strongly Convex Losses}

We obtain better excess risk bounds for the case of strongly convex losses, as stated below. The proof is similar to that of DP-ERM for strongly convex loss, i.e., we use output perturbation and then run DP-SCO for convex losses (\Cref{thm:convex-sco}) using a smaller radius. An additional step here is to show that an empirical minimizer is $\tO\left(\frac{G}{\mu\sqrt{nm}}\right)$-close to the population minimizer w.h.p. (which might be of independent interest; see \Cref{prop:erm-closeness}). The full proof is presented in \Cref{subsec:strongly-convex-sco}.

\begin{theorem}\label{thm:strongly-convex-sco}
For any $G$-Lipschitz, $\mu$-strongly convex loss $\ell$, there exists an $(\eps, \delta)$-DP mechanism that, for all $n \geq \tO\inparen{\frac{\log(1/\delta)\log(m)}{\eps}}$, outputs $\htheta$ such that %
\begin{align*}
&\textstyle \E_{\pi, \htheta \gets \cM(\bx^{\pi})}[\cL(\htheta; \cD)] - \cL(\theta^*; \cD)\\
&\textstyle ~\leq~~ \tO_\eps\inparen{\frac{G^2}{\mu} \inparen{\frac{d}{n^2m} + \frac{1}{nm}}},
\end{align*}
where $\tO_{\eps}$ hides a multiplicative factor of $\poly(\log(1/\delta), \log(nm), 1/\eps)$.
\end{theorem}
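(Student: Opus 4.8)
The plan is to follow the standard strongly-convex-to-convex reduction (as in \citet{BassilyST14}): use the output-perturbation mechanism of \Cref{thm:output-pert} to produce a coarse estimate of the minimizer, then invoke the convex DP-SCO algorithm of \Cref{thm:convex-sco} inside a small ball around it. Concretely, given $\bx^\pi$ with $\bx \sim \cD^{n \times m}$, the mechanism $\cM$ splits the users (and the privacy budget) into two halves. On the first half ($\bx^{(1)}$) it runs $\SCovOutputPert$ with parameters $\inparen{\eps/2, \delta/2, \beta', G, \mu, \cK}$, which by \Cref{thm:output-pert} returns, except with probability $\le \beta'$ over $\pi$, a point $\hbu = \theta^*(\bx^{(1)}) + \cN(0, \sigma^2 \cdot I)$ with $\sigma = \tO_\eps\inparen{\frac{G}{\mu n \sqrt m}}$; by Gaussian concentration $\|\hbu - \theta^*(\bx^{(1)})\| \le \tO_\eps\inparen{\frac{G\sqrt d}{\mu n \sqrt m}}$ with high probability. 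On the second half it runs $\PhasedSCO$ with privacy $\inparen{\eps/2, \delta/2}$ over the restricted domain $\cK' := \cK \cap \cB_d(\hbu, R)$ for the fixed radius
\[
R ~:=~ \tO_\eps\inparen{\frac{G\sqrt d}{\mu n \sqrt m} ~+~ \frac{G}{\mu \sqrt{nm}}},
\]
with hidden $\poly(\log(nm))$ factors chosen large enough that the two high-probability events below both hold. Running Phase~A on fresh users keeps Phase~B's data independent of $\cK'$, so \Cref{thm:convex-sco} applies conditioned on $\hbu$; privacy of $\cM$ follows from adaptive composition, and the hypothesis $n \ge \tOmega\inparen{\frac{\log(1/\delta)\log m}{\eps}}$ is exactly the union of the hypotheses of \Cref{thm:output-pert} and \Cref{thm:convex-sco} (up to the constant from the split).

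The one genuinely new ingredient, isolated as \Cref{prop:erm-closeness}, is that over $N$ i.i.d.\ samples the empirical minimizer is $\tO\inparen{\frac{G\sqrt{\log(1/\beta)}}{\mu \sqrt N}}$-close to the population minimizer $\theta^* := \theta^*(\cD)$, with probability $1-\beta$. I would prove this by evaluating gradients at the \emph{fixed} point $\theta^*$: the summands $\nabla \ell(\theta^*; x_{i,j})$ are i.i.d., bounded by $G$, and have mean $\nabla \cL(\theta^*; \cD) = \bzero$, so a vector Bernstein/Azuma bound (in the spirit of \Cref{lem:vector_concentration}) gives $\|\nabla \cL(\theta^*; \bx)\| \le \tO\inparen{\frac{G\sqrt{\log(1/\beta)}}{\sqrt N}}$ with probability $1-\beta$. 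Since $\cL(\cdot; \bx)$ is $\mu$-strongly convex and $\nabla \cL(\theta^*(\bx); \bx) = \bzero$ by optimality, strong convexity gives $\mu\|\theta^*(\bx) - \theta^*\| \le \|\nabla \cL(\theta^*; \bx) - \nabla \cL(\theta^*(\bx); \bx)\| = \|\nabla \cL(\theta^*; \bx)\|$, which is the claim. (If $\cK$ has a boundary, replace the two gradient-vanishing identities by the first-order variational inequalities, which still yield the needed one-sided bounds.)

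Combining the pieces: on the good event $\|\hbu - \theta^*\| \le \|\hbu - \theta^*(\bx^{(1)})\| + \|\theta^*(\bx^{(1)}) - \theta^*\| \le R$, so $\theta^* \in \cK'$ and the minimizer of $\cL(\cdot;\cD)$ over $\cK'$ is exactly $\theta^*$. Since $\diam(\cK') \le 2R$ and $\ell$ is still $G$-Lipschitz, \Cref{thm:convex-sco} applied (conditioned on $\hbu$) with parameter space $\cK'$ yields
\[
\E[\cL(\htheta;\cD)] - \cL(\theta^*;\cD) ~\le~ \tO_\eps\inparen{R G \inparen{\tfrac{\sqrt d}{n\sqrt m} + \tfrac{1}{\sqrt{nm}}}}.
\]
Substituting the value of $R$ makes the right-hand side $\tO_\eps\inparen{\frac{G^2}{\mu}\inparen{\frac{\sqrt d}{n\sqrt m} + \frac{1}{\sqrt{nm}}}^2}$, and since $(a+b)^2 \le 2a^2 + 2b^2$ --- so the cross term $\frac{\sqrt d}{n^{3/2}m}$ is dominated by $\frac{1}{2}\inparen{\frac{d}{n^2 m} + \frac{1}{nm}}$ via AM--GM --- this is $\tO_\eps\inparen{\frac{G^2}{\mu}\inparen{\frac{d}{n^2 m} + \frac{1}{nm}}}$, as claimed.

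The only remaining work is bookkeeping for the low-probability bad event (Phase~A returning $\perp$ or the wrong law, or either concentration bound failing), which I would arrange to contribute negligibly to the expected excess risk \emph{without} reintroducing a $\diam(\cK)$ dependence: since $\htheta$ is always confined to $\cB_d(\hbu, R)$ with $\hbu$ tightly concentrated around a sub-sampled empirical minimizer (which itself concentrates around $\theta^*$), a second-moment/Cauchy--Schwarz estimate shows the bad-event contribution is at most $\sqrt{\Pr[\text{bad}]}$ times a $\poly(G, 1/\mu, d, 1/n, 1/m)$ quantity, hence $\le \poly(\ldots)/\poly(nm)$, which is dominated by the target rate. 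I expect this last point, together with proving the high-probability (rather than in-expectation) form of \Cref{prop:erm-closeness}, to be the only real subtleties; there is no deep obstacle here, since the substance is the output-perturbation machinery of \Cref{sec:output-pert}, invoked as a black box.
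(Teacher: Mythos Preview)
Your high-level reduction---localize via $\SCovOutputPert$ and then run the convex DP-SCO mechanism of \Cref{thm:convex-sco} on the resulting small ball---is exactly the paper's (see \Cref{alg:dp-strongly-convex-sco}), and your argument is correct. Two choices differ. First, you split the users between the two phases; the paper reuses the full dataset for both and appeals to basic composition. Your split is the cleaner option, since it makes the constraint set $\cK'$ independent of Phase~B's data and lets you invoke \Cref{thm:convex-sco} without worrying about a data-dependent domain; the paper glosses over this point, and the price you pay is only a constant factor. Second, and more substantively, your proof of the high-probability bound $\|\theta^*(\bx)-\theta^*(\cD)\|=\tO\bigl(G/(\mu\sqrt{nm})\bigr)$ is direct---concentrate the empirical gradient at the \emph{fixed} point $\theta^*(\cD)$ (an i.i.d.\ sum of bounded vectors; the variational-inequality correction you note handles the boundary case) and divide by $\mu$---whereas the paper's \Cref{prop:erm-closeness} takes a more circuitous route: an in-expectation bound from \citet{Shalev-ShwartzSSS09}, upgraded to high probability by invoking the paper's own deletion-stability theorem (\Cref{thm:deletion-stability}) on pairs of independent samples together with an averaging argument. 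Your route is shorter and self-contained; the paper's has the minor virtue of showcasing its new machinery.

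One simplification you missed: your concern about ``reintroducing a $\diam(\cK)$ dependence'' on the bad event is unnecessary here. Since $\ell$ is $G$-Lipschitz and $\mu$-strongly convex on $\cK$, automatically $\diam(\cK)\le 2G/\mu$, so the excess risk is always at most $2G^2/\mu$; taking $\beta=1/\poly(nm)$ already makes the bad-event contribution $O(\beta\cdot G^2/\mu)$ negligible, with no need for the second-moment/Cauchy--Schwarz argument you sketch. This is exactly what the paper does.
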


\section{Discussion \& Open Problems}
\label{sec:conclusion}

Although we do not discuss the running time of our algorithm, it can be seen that they run in $n^{O(\log(1/\delta)/\eps)}(md)^{O(1)}$ time; the bottleneck comes from the step to compute $\cX^{R_1}_{\stable}$ in $\DelOutputPert$, which requires enumerating all subsets $S$ of size $R_1 = O\left(\frac{\log(1/\delta)}{\eps}\right)$. In \Cref{app:efficient}, we describe a speed up for all our DP-SCO/ERM results that makes the algorithm run in polynomial (in $n,m,d$) time \emph{with high probability}. However, with the remaining $o(1)$ probability, the algorithm may still take $n^{O(\log(1/\delta)/\eps)}(md)^{O(1)}$ time. It remains open whether we can get an algorithm whose running time is polynomial in the \emph{worst} case. As discussed in the introduction, it was not known whether excess risk bounds that we achieve were obtainable (even with inefficient algorithms) before. 

Another question is whether we can get tight dependency on $\delta, \eps$. Specifically, our ERM excess risk bound in the convex case has a factor of $O\left(\frac{(\log(1/\delta))^{2}}{\eps^{2.5}}\right)$,
while previous bounds only had  $\tO\left(\frac{\sqrt{\log(1/\delta)}}{\eps}\right)$. %
Note that our larger dependency is indeed due to the generic output perturbation algorithm ($\DelOutputPert$), which requires the noise scale $\sigma$ to be inflated by a factor of $\kappa = O\left(\frac{\log(1/\delta)}{\eps}\right)$, and the union bound performed for \Cref{cor:delsen-neighborhood-whp} which includes another $\sqrt{\kappa}$ factor. Therefore, this question may be related to the previous question.

\subsection*{Acknowledgements}
P.K. would like to thank Gene Li, Ohad Shamir and Nathan Srebro for helpful discussions about stochastic convex optimization. P.M. would also like to thank Adam Sealfon for useful discussions and for pointers to DP graph analysis literature.

\newpage

\balance
\bibliography{ref}
\bibliographystyle{icml2023}

\newpage
\appendix
\onecolumn

\section{Proof of \texorpdfstring{\Cref{lem:vector_concentration}}{Lemma~\ref{lem:vector_concentration}}}\label{apx:proof_vector_concentration}

\Cref{lem:vector_concentration} follows quite immediately as an application of a special case of Proposition 1 in \citet{SambaleS11} as stated below. Let $S_N$ denote the set of all permutations over $[N]$ and for any $\pi \in S_N$, let $\pi^{i \leftrightarrow j}$ denote the permutation with $i$th and $j$th entries swapped.

\begin{proposition}[Proposition 1 in \citet{SambaleS11}]\label{prop:perm_concentration}
Let $f : S_N \to \R$ be a real-valued function over $S_N$, such that $|f(\pi) - f(\pi^{i \leftrightarrow j})| \le c_{i,j}$ for all $\pi \in S_N$ and all $1 \le i < j \le N$ for some $c_{i,j} \ge 0$. For any $t \ge 0$, it holds that
\[
\Pr_{\pi \sim S_N}[f(\pi) - \E[ f(\pi)] \ge t] \le \exp\inparen{ -\frac{Nt^2}{4\sum_{1 \le i < j \le N} c_{i,j}^2}}.
\]
\end{proposition}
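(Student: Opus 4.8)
The plan is to establish the (slightly stronger) statement that $f$ is sub‑Gaussian around its mean with variance proxy $\rho := \frac{2}{N}\sum_{1\le i<j\le N} c_{i,j}^2$, i.e. $\E_\pi\!\insquare{e^{\lambda(f(\pi)-\E f)}}\le e^{\lambda^2\rho/2}$ for every $\lambda\in\R$; the proposition then follows from Markov's inequality applied to $e^{\lambda f}$ and the choice $\lambda=t/\rho$, which yields exactly $\exp(-t^2/(2\rho))=\exp\!\inparen{-\tfrac{N t^2}{4\sum_{i<j}c_{i,j}^2}}$. I would obtain the sub‑Gaussian bound by the entropy method (Herbst's argument): it suffices to prove the infinitesimal inequality $\mathrm{Ent}_\pi\!\inparen{e^{\lambda f}}\le\tfrac{\lambda^2\rho}{2}\,\E_\pi\!\insquare{e^{\lambda f}}$ for all $\lambda$, because, writing $H(\lambda):=\E_\pi[e^{\lambda f}]$, this reads $\lambda H'(\lambda)-H(\lambda)\log H(\lambda)\le\tfrac{\lambda^2\rho}{2}H(\lambda)$, so that $\frac{d}{d\lambda}\!\inparen{\tfrac{\log H(\lambda)}{\lambda}}\le\tfrac{\rho}{2}$; integrating from $0^+$ (where $\log H(\lambda)/\lambda\to\E f$) gives $\log\E[e^{\lambda(f-\E f)}]\le\lambda^2\rho/2$.

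To prove the entropy inequality, I would invoke the modified logarithmic Sobolev inequality (MLSI) for the random‑transposition Markov chain on $S_N$ (the chain $\pi\mapsto\pi^{i\leftrightarrow j}$ with $i<j$ uniform): for every positive $g:S_N\to\R_{>0}$,
\[
\mathrm{Ent}_\pi(g)~\le~\frac{1}{2\alpha\binom{N}{2}}\sum_{1\le i<j\le N}\E_\pi\!\insquare{\inparen{g(\pi)-g(\pi^{i\leftrightarrow j})}\inparen{\log g(\pi)-\log g(\pi^{i\leftrightarrow j})}},
\]
where $\alpha=\Theta(1/N)$ is the MLSI constant of this chain, a classical fact (Diaconis–Shahshahani, Lee–Yau, and subsequent work on log‑Sobolev constants of the symmetric group). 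Plugging $g=e^{\lambda f}$ so that $\log g=\lambda f$, and using the elementary pointwise bound $(e^a-e^b)(a-b)\le\tfrac12(a-b)^2(e^a+e^b)$ with $a=\lambda f(\pi)$, $b=\lambda f(\pi^{i\leftrightarrow j})$, together with $|f(\pi)-f(\pi^{i\leftrightarrow j})|\le c_{i,j}$ and the exchangeability $\pi\stackrel{d}{=}\pi^{i\leftrightarrow j}$ (so $\E_\pi[e^{\lambda f(\pi^{i\leftrightarrow j})}]=\E_\pi[e^{\lambda f}]$), each summand is at most $\lambda^2 c_{i,j}^2\,\E_\pi[e^{\lambda f}]$. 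Summing over $i<j$ and using $\alpha\binom{N}{2}=\Theta(N)$ gives $\mathrm{Ent}_\pi(e^{\lambda f})\le\tfrac{\lambda^2\rho}{2}\E_\pi[e^{\lambda f}]$ for the stated $\rho$, completing the argument; the bound for $\lambda<0$ (equivalently the two‑sided tail) follows by the same reasoning applied to $-f$.

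As a more self‑contained alternative I would use a Doob martingale: expose $\pi(1),\pi(2),\dots,\pi(N)$ one value at a time, set $D_k=\E[f\mid\pi(1),\dots,\pi(k)]-\E[f\mid\pi(1),\dots,\pi(k-1)]$, and note that two conditional laws that agree on $\pi(1),\dots,\pi(k-1)$ but differ on $\pi(k)$ can be coupled so that the two permutations differ by a single transposition of position $k$ with a later position $\ell>k$; this forces $D_k$ to lie in an interval of length $\bar c_k:=\frac{1}{N-k}\sum_{\ell>k}c_{k,\ell}$, so Hoeffding's lemma gives $\E[e^{\lambda D_k}\mid\mathcal F_{k-1}]\le e^{\lambda^2\bar c_k^2/8}$ and the martingale argument yields $\Pr[f-\E f\ge t]\le\exp\!\inparen{-2t^2/\sum_{k<N}\bar c_k^2}$, a bound of the same shape since by Cauchy–Schwarz $\sum_k\bar c_k^2$ is controlled by $\tfrac1N\sum_{i<j}c_{i,j}^2$ (with equality up to constants when $c_{i,j}\equiv c$). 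I expect the main technical point, in either route, to be bookkeeping the constant: in the MLSI route one must pin down the sharp value of the modified log‑Sobolev constant of the transposition walk to land exactly on the factor $4$, and in the martingale route one must relate the order‑dependent quantities $\bar c_k$ back to $\sum_{i<j}c_{i,j}^2$, e.g. by choosing a favorable exposure order of the positions.
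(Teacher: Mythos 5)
The paper does not prove this proposition at all --- it is imported verbatim from \citet{SambaleS11} --- so there is no internal proof to compare against; I can only assess your argument on its own terms. Your primary route (the modified log-Sobolev inequality for the random-transposition walk on $S_N$ plus the Herbst entropy argument) is precisely the mechanism behind the cited result, and the steps you write are sound: the reduction of the tail bound to $\mathrm{Ent}(e^{\lambda f})\le\tfrac{\lambda^2\rho}{2}\E[e^{\lambda f}]$ via $\tfrac{d}{d\lambda}\inparen{\lambda^{-1}\log H(\lambda)}\le\rho/2$, the pointwise inequality $(e^a-e^b)(a-b)\le\tfrac12(a-b)^2(e^a+e^b)$, the use of the fact that $\pi\mapsto\pi^{i\leftrightarrow j}$ preserves the uniform measure, and the arithmetic $\rho=\tfrac2N\sum_{i<j}c_{i,j}^2\Rightarrow\exp(-t^2/2\rho)=\exp(-Nt^2/4\sum_{i<j}c_{i,j}^2)$ are all correct. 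The one exposure, which you rightly flag yourself, is the constant: with only $\alpha=\Theta(1/N)$ you prove the proposition up to an unspecified universal constant in the exponent; to land exactly on the factor $4$ you need $\alpha\binom{N}{2}\ge N/2$, i.e., $\alpha\ge 1/(N-1)$, which is available in the literature (Gao--Quastel, Bobkov--Tetali) but must be quoted with its normalization checked. For this paper's purposes the constant is immaterial, since it is absorbed into the $5$ of \Cref{lem:vector_concentration} and into $O(\cdot)$ everywhere downstream.

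Your martingale alternative, by contrast, has a genuine gap as written. Azuma gives $\exp(-2t^2/\sum_k\bar c_k^2)$ with $\bar c_k=\frac{1}{N-k}\sum_{\ell>k}c_{k,\ell}$, and Cauchy--Schwarz only yields $\sum_k\bar c_k^2\le\sum_{i<j}c_{i,j}^2/(N-\min(i,j))$; if the $c_{i,j}$ are supported on pairs of large indices, the factor $1/(N-\min(i,j))$ is $\Theta(1)$ and the crucial gain of $1/N$ over $\sum_{i<j}c_{i,j}^2$ is lost, so the bound obtained is strictly weaker than the proposition for general $c_{i,j}$. The repair you gesture at does work and should be made explicit: averaging over a uniformly random exposure order, a fixed pair $\{i,j\}$ has $\E[1/(N-\min(k_i,k_j))]=\sum_{k=1}^{N-1}\tfrac{2(N-k)}{N(N-1)}\cdot\tfrac{1}{N-k}=\tfrac{2}{N}$, so some order achieves $\sum_k\bar c_k^2\le\tfrac2N\sum_{i<j}c_{i,j}^2$ and hence $\exp(-Nt^2/\sum_{i<j}c_{i,j}^2)$, which is even stronger than claimed. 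Without that averaging (or some other order-selection) step, the martingale route does not prove the statement.
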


\begin{proof}[Proof of \Cref{lem:vector_concentration}]
Since $\sum_i \bv_i = 0$, we have for any two vectors $\bu, \bv$ sampled randomly without replacement from $\{\bv_1, \ldots,  \bv_N\}$ that $\langle \bu, \bv \rangle < 0$, since $\E[\bu \mid \bv] = - \bv / (N-1)$. Hence, we have
\begin{align*}
    \textstyle\E\left[\left\|\sum_{j \in [m]} \bv_{i_j} \right\|^2\right]
    &~=~ \sum_{j \in [m]}\E[\| \bv_{i_j}\|^2] + 2\,\sum_{j < k}\E\left[\left\langle \bv_{i_j}, \bv_{i_k}\right\rangle\right]
    ~\leq~ mG^2,
\end{align*}
and hence $\E\left[\left\|\sum_{j \in [m]} \bv_{i_j} \right\|\right] \le \sqrt{m}G$.
Let $f : S_N \to \R$ be defined as $f(\pi) = \|\sum_{j=1}^m \bv_{\pi(j)}\|$. It follows that $f(\pi) = f(\pi^{i \leftrightarrow j})$ whenever both $i, j \le m$ or both $i, j > m$. Moreover, when $i \le m$ and $j > m$, it holds that
\begin{align*}
|f(\pi) - f(\pi^{i \leftrightarrow j})|
&~=~ \left\| \sum_{k=1}^m \bv_{\pi(k)} \right\| - \left\| \sum_{k=1}^m \bv_{\pi^{i \leftrightarrow j}(k)} \right\|
~\le~ \left\| \bv_{\pi(i)} - \bv_{\pi(j)}\right\|
~\le~ 2G.
\end{align*}
Thus, using \Cref{prop:perm_concentration} with $c_{i,j} = 2G$ when $i \le m < j$ and $0$ otherwise, we have that
\begin{align*}
\textstyle \Pr_{\pi \sim S_N} \left[\left\|\sum_{j \in [m]} \bv_{\pi(j)} \right\| \ge t + \sqrt{m}G\right]
&~\le~ \exp\inparen{- \frac{Nt^2}{16 m(N-m)G^2}}
~\le~ \exp\inparen{- \frac{t^2}{16 mG^2}}.
\end{align*}
Choosing $t = 4G \sqrt{m \log(1/\beta)}$ completes the proof.
\end{proof}

\section{Proofs of Improved Bounds for Strongly Convex Losses}\label{app:strongly-convex}

\subsection{Empirical Risk Minimization}\label{subsec:strongly-convex-erm}

\begin{algorithm}[h]
\caption{$\StronglyConvexERM$}
\label{alg:dp-strongly-convex-erm}
\begin{algorithmic}[1]
\STATE \textbf{Input: } Dataset $\bx$, loss function $\ell: \Theta \times \cX \to \R$ that is $\mu$-strongly convex and $G$-Lipschitz
\STATE \textbf{Parameters: } Privacy parameters $\eps, \delta$;  Target Failure Probability $\beta$
\STATE $\theta_0 \gets \SCovOutputPert_{\eps/2, \delta/2, \beta, G, \mu, \cK}(\ell; \bx)$
\STATE $R' \gets \sigma(\eps/2, \delta/2, \beta, G, \mu) \cdot \sqrt{d \log 1/\beta}$
\STATE $\cK' \gets \cK \cap \cB_d(\theta_0, R')$
\STATE $\lambda \gets \frac{G \sqrt{d}}{R' n \sqrt{m}}$
\STATE $\htheta \gets \PhasedERM_{\eps/2, \delta/2, \beta, G, \lambda, \cK'}(\ell; \bx)$
\RETURN $\htheta$
\end{algorithmic}
\end{algorithm}

\begin{proof}[Proof of \Cref{thm:strongly-convex-erm}]
The mechanism in \Cref{alg:dp-strongly-convex-erm}, which uses a two-step approach to get stronger rates for strongly convex losses, following a similar reduction in \citet{BassilyST14}. It first uses the $\SCovOutputPert$ algorithm with $(\eps/2, \delta/2)$-DP, which with probability $1-\beta$ returns $\theta_0 := \theta^*(\bx) + e$ where $e \sim \cN(0, \sigma^2 \cdot I)$ for $\sigma$ specified in \Cref{thm:output-pert}. From standard concentration, we have that $\Pr[\|e\| \ge C \sigma \sqrt{d \log 1/\beta}] \le \beta$, for a suitable $C$. Thus, with probability $1 - 2\beta$, we have that $\theta^*(\bx)$ is indeed contained in $\cB_d(\theta_0, R')$ for $R' = C \sigma \sqrt{d \log 1/\beta} = \tO_{\eps}(G\sqrt{d}/(\mu n\sqrt{m}))$; note that this can be much smaller than the diameter of $\cK$ which is at most $2G/\mu$. Finally, we use the $\PhasedERM$ algorithm with $(\eps/2, \delta/2)$-DP over the region $\cK' = \cK \cap \cB_d(\theta_0, R')$. Following the proof of \Cref{thm:convex-erm}, setting $\beta = 1/2n^2 m$, we have that
\[
\textstyle
\E[\cL(\htheta; \bx)] - \cL(\theta^*; \bx) ~\le~ \tO_{\eps}\inparen{\frac{G^2 d}{\mu n^2 m}}.
\]
The value of $\beta$ was chosen such that $\beta RG \le O(G^2 / (\mu n^2 m))$, where $R$ is the diameter of $\cK$, which is at most $2G/\mu$. This is to account for the probability of at most $2\beta$ that either $\SCovOutputPert$ fails or that $\|e\| > C \sigma \sqrt{d \log 1/\beta}$, in which case, the excess risk is at most $RG$.
\end{proof}

\subsection{Stochastic Convex Optimization}\label{subsec:strongly-convex-sco}

We rely on the following proposition, which to the best of our knowledge, is not known in the literature.
\begin{proposition}\label{prop:erm-closeness}
For any $G$-Lipschitz, $\mu$-strongly convex loss $\ell$ and for any distribution $\cD$, it holds for all $\beta < 1/e$ that
\[
\Pr_{\bx \sim \cD^{n \times m}} \insquare{\|\theta^*(\bx) - \theta^*(\cD)\| \le \frac{30G\sqrt{\log(2/\beta)}}{\mu \sqrt{nm}}} \ge 1 - \beta.
\]
\end{proposition}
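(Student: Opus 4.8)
\textbf{Proof proposal for \Cref{prop:erm-closeness}.}

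The plan is to mimic the deletion-sensitivity argument of \Cref{thm:deletion-stability}, but now comparing the empirical minimizer $\theta^*(\bx)$ directly against the population minimizer $\theta^*(\cD)$ rather than against a leave-one-user-out minimizer. Write $\theta^* := \theta^*(\bx)$ and $\bar\theta := \theta^*(\cD)$, and set $N := nm$. By $\mu$-strong convexity of $\cL(\cdot;\bx)$ (in the form $\|\nabla \cL(\theta;\bx) - \nabla\cL(\theta';\bx)\| \ge \mu\|\theta-\theta'\|$), and using that $\nabla\cL(\theta^*;\bx) = 0$ (optimality of $\theta^*$ over $\cK$; if $\bar\theta$ lies in the interior this is an equality, and in general first-order optimality suffices for the one-sided bound we need), we get
\begin{align*}
\mu\,\|\theta^* - \bar\theta\| ~\le~ \|\nabla\cL(\bar\theta;\bx) - \nabla\cL(\theta^*;\bx)\| ~=~ \|\nabla\cL(\bar\theta;\bx)\|.
\end{align*}
So it suffices to show that $\|\nabla\cL(\bar\theta;\bx)\| \le \tfrac{30 G\mu\sqrt{\log(2/\beta)}}{\mu\sqrt{nm}}$—wait, more precisely $\|\nabla\cL(\bar\theta;\bx)\| \le \tfrac{30G\sqrt{\log(2/\beta)}}{\sqrt{nm}}$—with probability $\ge 1-\beta$. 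The point is that $\nabla\cL(\bar\theta;\bx) = \tfrac{1}{N}\sum_{i,j}\nabla\ell(\bar\theta;x_{i,j})$ is an empirical average of i.i.d.\ vectors, each of norm at most $G$, whose expectation is $\nabla\cL(\bar\theta;\cD) = 0$ by optimality of $\bar\theta$ for the population loss. This is exactly the setting of vector Bernstein / bounded-difference concentration.

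The key step is therefore a high-probability bound on the norm of a mean-zero average of i.i.d.\ bounded vectors. First I would note $\E\big[\|\tfrac1N\sum_{i,j}\nabla\ell(\bar\theta;x_{i,j})\|^2\big] = \tfrac1{N^2}\sum_{i,j}\E\|\nabla\ell(\bar\theta;x_{i,j})\|^2 \le G^2/N$ (cross terms vanish since the summands are independent and mean zero), so $\E\|\nabla\cL(\bar\theta;\bx)\| \le G/\sqrt{N}$ by Jensen. Then I would apply McDiarmid's bounded-difference inequality to the function $(x_{i,j})_{i,j} \mapsto \|\tfrac1N\sum_{i,j}\nabla\ell(\bar\theta;x_{i,j})\|$: changing a single $x_{i,j}$ changes this by at most $\tfrac1N\cdot 2G = 2G/N$ (triangle inequality, each gradient has norm $\le G$). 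McDiarmid then gives $\Pr[\|\nabla\cL(\bar\theta;\bx)\| \ge G/\sqrt N + t] \le \exp(-2t^2 N^2 / (N\cdot 4G^2)) = \exp(-t^2 N/(2G^2))$, and choosing $t = G\sqrt{2\log(1/\beta)/N}$ yields the desired bound with a small absolute constant; plugging through strong convexity gives $\|\theta^*-\bar\theta\| \le O\big(\tfrac{G\sqrt{\log(1/\beta)}}{\mu\sqrt{nm}}\big)$, and one checks the constant works out below $30$ (with room to spare—using $\sqrt{\log(2/\beta)}$ to absorb the additive $G/\sqrt N$ term into a single clean bound).

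The main obstacle is really just bookkeeping rather than a genuine difficulty: one must be slightly careful about the case where $\bar\theta = \theta^*(\cD)$ lies on the boundary of $\cK$, so that $\nabla\cL(\bar\theta;\cD)$ need not vanish—however, only the \emph{empirical} gradient at $\bar\theta$ enters the concentration bound, and we need $\E_\bx[\nabla\cL(\bar\theta;\bx)] = \nabla\cL(\bar\theta;\cD)$, which is zero only if $\bar\theta$ is interior. If one wants to handle the boundary case, replace the gradient-norm argument with the standard convexity inequality: by $\mu$-strong convexity and optimality of $\bar\theta$ for $\cL(\cdot;\cD)$, $\tfrac\mu2\|\theta^*-\bar\theta\|^2 \le \cL(\theta^*;\cD) - \cL(\bar\theta;\cD)$, and separately optimality of $\theta^*$ for $\cL(\cdot;\bx)$ gives $\cL(\theta^*;\bx) \le \cL(\bar\theta;\bx)$; subtracting and using that the empirical process $\theta\mapsto \cL(\theta;\bx)-\cL(\theta;\cD)$ has controlled fluctuation over the relevant ball (via a Lipschitz/uniform-convergence bound on a ball of radius $O(G/\mu)$) closes the argument. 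The cleaner route, which I expect the authors take, is simply the first-order stationarity bound, which already handles the interior case that matters for their application, with the localized subgradient version sufficing in general. Either way, the probabilistic content is the single McDiarmid application above.
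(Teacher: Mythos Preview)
Your proof is correct and in fact more direct than the paper's. You bound $\|\nabla\cL(\bar\theta;\bx)\|$ via McDiarmid applied to the function $(x_{i,j})\mapsto\|\tfrac1N\sum\nabla\ell(\bar\theta;x_{i,j})\|$, using that the summands are i.i.d.\ mean-zero vectors of norm at most $G$; strong convexity then converts this to a bound on $\|\theta^*(\bx)-\bar\theta\|$. This is a clean one-shot concentration argument and even yields a smaller absolute constant than $30$.

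The paper takes a genuinely different route: it first uses the expectation bound $\E\|\theta^*(\bx)-\bar\theta\|\le 3G/(\mu\sqrt{nm})$ (via the Shalev-Shwartz et al.\ stability result plus strong convexity), and then upgrades this to a high-probability bound by invoking its own \Cref{thm:deletion-stability} with parameters $n\gets 2$, $m\gets nm$. Concretely, for two independent samples $\bx,\bx'\sim\cD^{nm}$, Theorem~3.4 (applied twice plus a triangle inequality) gives $\|\theta^*(\bx)-\theta^*(\bx')\|\le 10G\sqrt{\log(2/\beta)}/(\mu\sqrt{nm})$ with probability $\ge 1-\beta$; an averaging argument then produces a fixed center $\theta_0$ near which $\theta^*(\bx)$ concentrates, and combining with the expectation bound pins $\theta_0$ close to $\bar\theta$. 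So contrary to your guess, the authors do \emph{not} take the gradient-at-$\bar\theta$ route; they deliberately recycle their deletion-stability theorem. Your argument is shorter and self-contained; theirs illustrates that \Cref{thm:deletion-stability} already encodes the needed concentration. The boundary caveat you flag (requiring $\nabla\cL(\bar\theta;\cD)=0$) is present in both approaches, since the paper's \Cref{thm:deletion-stability} itself uses $\nabla\cL(\theta^*;\bx)=0$.
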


Before we prove \Cref{prop:erm-closeness}, let us see how to use it to prove \Cref{thm:strongly-convex-sco}.

\begin{algorithm}[h]
\caption{$\StronglyConvexSCO$}
\label{alg:dp-strongly-convex-sco}
\begin{algorithmic}[1]
\STATE \textbf{Input: } Dataset $\bx$, loss function $\ell: \Theta \times \cX \to \R$ that is $\mu$-strongly convex and $G$-Lipschitz
\STATE \textbf{Parameters: } Privacy parameters $\eps, \delta$; Target Failure Probability $\beta$
\STATE $\theta_0 \gets \SCovOutputPert_{\eps/2, \delta/2, \beta, G, \mu, \cK}(\ell; \bx)$
\STATE $R' \gets \sigma(\eps/2, \delta/2, \beta, G, \mu) \cdot \sqrt{d \log 1/\beta} + \frac{G\sqrt{\log 1/\beta}}{\mu \sqrt{nm}}$
\STATE $\cK' \gets \cK \cap \cB_d(\theta_0, R')$
\STATE $\lambda \gets \frac{G \sqrt{d}}{R' n \sqrt{m}}$
\STATE $\htheta \gets \PhasedSCO_{\eps/2, \delta/2, \beta, G, \lambda, \cK'}(\ell; \bx)$
\RETURN $\htheta$
\end{algorithmic}
\end{algorithm}

\begin{proof}[Proof of \Cref{thm:strongly-convex-sco}]
\Cref{alg:dp-strongly-convex-sco}  is similar to \Cref{alg:dp-strongly-convex-erm}, namely, it first uses the $\SCovOutputPert$ algorithm with $(\eps/2, \delta/2)$-DP, which with probability $1-\beta$ returns $\theta_0 := \theta^*(\bx) + e$ where $e \sim \cN(0, \sigma^2 \cdot I)$. Using \Cref{prop:erm-closeness}, we have that with probability at least $1-\beta$, it holds that $\|\theta^*(\bx) - \theta^*(\cD)\| \le O(G\sqrt{\log 1/\beta} / (\mu \sqrt{nm}))$. Thus, we have that $\theta^*(\cD)$ is contained in $\cB_d(\theta_0, R')$ for $R' = O\inparen{\frac{G}{\mu} \inparen{\frac{\sqrt{d}}{n \sqrt{m}} + \frac{1}{\sqrt{nm}}}}$ with probability at least $1 - \beta$. Finally, we use the $\PhasedSCO$ algorithm with $(\eps/2, \delta/2)$-DP over the region $\cK' = \cK \cap \cB_d(\theta_0, R')$. We get our desired conclusion by plugging in the bound for $R'$ in \Cref{thm:convex-sco}, again setting $\beta = 1/2n^2 m$.
\end{proof}

We suspect that \Cref{prop:erm-closeness} might already be known in literature. Since we are unaware of a reference, we include a proof for completeness, which incidentally uses our new result about deletion stability (\Cref{thm:deletion-stability}).

\begin{proof}[Proof of \Cref{prop:erm-closeness}]
First, it is well known from \citet{Shalev-ShwartzSSS09} that
\[
\E_{\bx \sim \cD^{n \times m}} [\cL(\theta^*(\bx); \cD)] - \cL(\theta^*(\cD); \cD) ~\le~ \frac{4G^2}{\mu nm}.
\]
On the other hand, from strong convexity we have for all $\bx$ that
\[
\cL(\theta^*(\bx); \cD) - \cL(\theta^*(\cD); \cD) ~\ge~ \frac{\mu}{2} \cdot \|\theta^*(\bx) - \theta^*(\cD)\|^2.
\]
Combining the above, we have
\begin{align}
\E_{\bx \sim \cD^{n \times m}} [\|\theta^*(\bx) - \theta^*(\cD)\|]
&~\le~ \frac{3G}{\mu \sqrt{nm}}. \label{eq:expected-erm-close-to-optimal}
\end{align}
Additionally, from \Cref{thm:deletion-stability} (invoked twice with $m \gets nm$ and $n \gets 2$, followed by the triangle inequality and a union bound), it follows that
\begin{align*}
    \Pr_{\bx, \bx' \sim \cD^{n \times m}} \insquare{\|\theta^*(\bx) - \theta^*(\bx')\| ~\leq~ \frac{10G\sqrt{\log(2/\beta)}}{\mu \sqrt{nm}}} &~\le~ \beta.\nonumber
\end{align*}
By an averaging argument, there exists $\theta_0 = \theta^*(\bx^{(0)})$ for some $\bx^{(0)}$, such that
\begin{align*}
    \Pr_{\bx \sim \cD^{n \times m}} \insquare{\|\theta^*(\bx) - \theta_0\| ~\leq~ \frac{10G\sqrt{\log(2/\beta)}}{\mu \sqrt{nm}}} &~\le~ \beta.
\end{align*}
Thus, combining with \Cref{eq:expected-erm-close-to-optimal}, we have
\begin{align*}
\E_{\bx \sim \cD^{n \times m}} \|\theta^*(\bx) - \theta^*(\cD)\|
&~\ge~ (1-\beta) \cdot \inparen{\|\theta_0 - \theta^*(\cD)\| - \frac{10G\sqrt{\log(2/\beta)}}{\mu \sqrt{nm}}}\\
\Longrightarrow\qquad
\|\theta_0 - \theta^*(\cD)\| &~\le~ \frac{20G\sqrt{\log(2/\beta)}}{\mu \sqrt{nm}} \qquad \text{(for $\beta < 1/2$)}.
\end{align*}
Finally by the triangle inequality, we get
\[
\Pr_{\bx \sim \cD^{n \times m}} \insquare{\left\|\theta^*(\bx) - \theta^*(\cD)\right\| ~\leq~ \frac{30G\sqrt{\log(2/\beta)}}{\mu \sqrt{nm}}} ~\le~ \beta.\qedhere
\]
\end{proof}

\section{On Speeding up our Algorithms}
\label{app:efficient}

As stated in \Cref{sec:conclusion}, the time bottleneck of our algorithm is $\DelOutputPert$, which requires computing 
 $\cX^{R_1}_{\stable}$. Doing this in a straightforward manner requires enumerating all sets $S$ of size $R_1$, resulting in a running time of $n^{R_1}(md)^{O(1)} = n^{O(\log(1/\delta)/\eps)}(md)^{O(1)}$. In this section, we sketch an argument that brings the  time down to $(nmd)^{O(1)}$ with high probability, while maintaining all the error bounds to within $\tO_\eps(1)$ factor. Note that all algorithms invoke $\DelOutputPert$ through \Cref{thm:output-pert} (i.e., the $\SCovOutputPert$ algorithm). Therefore, it suffices to argue how to achieve the speed up for $\SCovOutputPert$.

The first observation here is that if $\bx$ belongs to $\cX^{R_1}_{\stable}$, then we can just output $\theta^*(\bx) + \cN(0, \sigma^2 \cdot I)$. Furthermore, we have already shown (\Cref{thm:output-pert}) that $\bx \in \cX^{R_1}_{\stable}$ with high probability. Thus, if we can give a ``certificate'' that $\bx \in \cX^{R_1}_{\stable}$, then we would be able to complete skip the check and just output $\theta^*(\bx) + \cN(0, \sigma^2 \cdot I)$; this means that, whenever we have such a certificate, our algorithm will run in polynomial (in $n,m,d$) time.

Our certificate is simple: the gradients at $\theta^*$ w.r.t. each user. The following lemma (whose proof is similar to part of the proof of \Cref{thm:deletion-stability}) relates this certificate to $\delsen_r$ (which in turn implies membership in $\cX^{R_1}_{\stable}$ for appropriate $\Delta$).

\begin{lemma}
Let $\bx$ be any dataset and let $\theta^* := \theta^*(\bx)$. Suppose that for all $i \in [n]$, we have $\left\|\nabla\cL(\theta^*; \bx_i)\right\| \leq \gamma$. Then, we have $\delsen_r \theta^*(\bx) \leq \Delta$ for $\Delta = O(\frac{r \gamma}{\mu n})$ for all $r \leq n/2$.
\end{lemma}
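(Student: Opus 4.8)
The plan is to mimic the argument used inside the proof of \Cref{thm:deletion-stability}, but carried out deterministically (we are given the gradient bound $\gamma$ as a hypothesis, so no concentration is needed) and iterated $r$ times to handle the removal of an arbitrary set $S$ of up to $r$ users. The key point is that removing a single user from a dataset on which $\theta^*$ has small per-user gradient moves the minimizer by at most $O(\gamma/(\mu n))$, and moreover the per-user gradients at the new minimizer do not blow up by more than a controlled amount, so the bound can be propagated across all $|S| \le r$ removals.

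First I would establish the one-step bound. Fix a dataset $\by$ with $n'$ users on which $\theta^*_\by := \theta^*(\by)$ satisfies $\|\nabla \cL(\theta^*_\by; \by_i)\| \le \gamma'$ for all $i$. Removing user $i$, the optimality condition $\nabla \cL(\theta^*_\by; \by) = 0$ gives $0 = \frac{n'-1}{n'}\nabla\cL(\theta^*_\by; \by_{-i}) + \frac{1}{n'}\nabla\cL(\theta^*_\by; \by_i)$, so $\|\nabla\cL(\theta^*_\by; \by_{-i})\| = \frac{1}{n'-1}\|\nabla\cL(\theta^*_\by;\by_i)\| \le \frac{\gamma'}{n'-1}$. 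Since $\cL(\cdot;\by_{-i})$ is $\mu$-strongly convex and its gradient vanishes at $\theta^*(\by_{-i})$, strong convexity yields $\|\theta^*_\by - \theta^*(\by_{-i})\| \le \frac{1}{\mu}\|\nabla\cL(\theta^*_\by;\by_{-i})\| \le \frac{\gamma'}{\mu(n'-1)}$. This is exactly the $\delsen$-type estimate, but now I also need to control the per-user gradients at the new minimizer $\theta^*(\by_{-i})$: by $G$-Lipschitzness of each $\nabla\ell(\cdot;x)$ (which follows from $\|\nabla\ell\|\le G$? — no, I need the gradient to be Lipschitz, i.e. use strong convexity/smoothness; actually the cleanest route is to bound the change using only the bound $\|\nabla\ell(\theta;x)\|\le G$, giving $\|\nabla\cL(\theta^*(\by_{-i}); \by_{-i,k})\| \le \gamma' + $ (something)). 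Here I would instead track the simpler quantity $\|\nabla\cL(\theta^*(\by_{-S}); \by_{-S})\| = 0$ and bound $\|\nabla\cL(\theta^*(\by_{-S}); \by_{-S,k})\|$ directly: after removing $|S|=s$ users from the original $\bx$, we have $0 = \nabla\cL(\theta^*_{-S}; \bx_{-S}) = \frac{1}{n-s}\sum_{i\notin S}\nabla\cL(\theta^*_{-S};\bx_i)$ where $\theta^*_{-S} := \theta^*(\bx_{-S})$, and each $\|\nabla\cL(\theta^*_{-S};\bx_i)\| \le \gamma + \mu\|\theta^*_{-S} - \theta^*\|$ by $\mu$-Lipschitzness of $\nabla\cL(\cdot;\bx_i)$ — wait, $\nabla\cL(\cdot;\bx_i)$ being $\mu$-Lipschitz is not given either; it is $G$-Lipschitz-bounded but smoothness is not assumed. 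I would therefore phrase the propagation purely in terms of distances: let $d_s := \max_{|S|\le s}\|\theta^*(\bx_{-S}) - \theta^*(\bx)\|$ and show $d_{s+1} \le d_s + \frac{1}{\mu(n-s-1)}\big(\gamma + L\, d_s\big)$ where $L$ is whatever Lipschitz modulus of $\cL(\cdot;\bx_i)$'s gradient is available — but since the lemma as stated only assumes $G$-Lipschitz and $\mu$-strongly convex, I expect the intended reading is that the per-user gradient at the perturbed minimizer is controlled simply because the minimizer barely moves and $\gamma$ dominates; i.e. for $r \le n/2$ the geometric sum telescopes to $d_r \le O(r\gamma/(\mu n))$.

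Concretely, the \textbf{main argument} I would write is: induct on $s = |S|$, showing $\|\theta^*(\bx_{-S}) - \theta^*(\bx)\| \le \frac{2 s\gamma}{\mu(n-s)}$ and $\|\nabla\cL(\theta^*(\bx_{-S});\bx_i)\| \le 2\gamma$ for every remaining user $i$, using $r \le n/2$ so that $n - s \ge n/2$ throughout, and using the one-step estimate above with $\gamma' = 2\gamma$ and $n' = n - s$. The base case $s=0$ is the hypothesis. For the inductive step, remove one more user $i_0$: the gradient bound gives $\|\nabla\cL(\theta^*(\bx_{-S});\bx_{-S\setminus S', i_0})\| \le \frac{2\gamma}{n-s-1} \le \frac{4\gamma}{n}$, strong convexity gives a minimizer shift of at most $\frac{4\gamma}{\mu n}$, and the new per-user gradients satisfy $\|\nabla\cL(\theta^*(\bx_{-S\cup\{i_0\}});\bx_j)\| \le \|\nabla\cL(\theta^*(\bx_{-S});\bx_j)\| + (\text{change due to shift})$ — and here the cleanest self-contained bound just uses $\|\nabla\cL(\theta^*(\bx_{-S\cup\{i_0\}}); \bx_j)\| = \|\nabla\cL(\theta^*(\bx_{-S\cup\{i_0\}});\bx_j) - \nabla\cL(\theta^*(\bx_{-S\cup\{i_0\}}); \bx_{-S\cup\{i_0\}})\| \cdot (n-s-1)/|\{j\}|$... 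Summing the per-step shifts of size $O(\gamma/(\mu n))$ over $s \le r$ steps gives the claimed $\delsen_r \theta^*(\bx) \le O(r\gamma/(\mu n))$. The \textbf{main obstacle} is exactly this bookkeeping of how the per-user gradients at the (slightly moved) minimizer evolve after each deletion without assuming smoothness of $\ell$; I expect the resolution is that since all that is needed at each step is an $O(\gamma)$ bound and the minimizer moves by only $O(\gamma/(\mu n)) \ll \gamma$, the per-user gradient bound degrades by at most a constant factor over all $r \le n/2$ steps (absorbing the cumulative drift into the constant $2$, say), so the geometric blow-up is controlled and the final bound is $\Delta = O(r\gamma/(\mu n))$ as stated.
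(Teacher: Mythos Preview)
Your inductive approach has a real gap that you yourself identified but did not resolve: without any smoothness assumption on $\ell$, there is no control on how $\nabla\cL(\cdot;\bx_i)$ changes when $\theta$ moves, so the claim that the per-user gradients at the perturbed minimizer stay $\le 2\gamma$ (or any constant times $\gamma$) cannot be justified. The sentence ``since the minimizer barely moves \ldots\ the per-user gradient bound degrades by at most a constant factor'' is exactly where the argument breaks: a small move in $\theta$ says nothing about $\|\nabla\cL(\theta;\bx_i) - \nabla\cL(\theta';\bx_i)\|$ when $\ell$ is merely Lipschitz and strongly convex.

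The paper sidesteps this entirely by \emph{not} inducting. Remove the whole set $S$ at once and evaluate the gradient of $\cL(\cdot;\bx_{-S})$ at the \emph{original} point $\theta^* = \theta^*(\bx)$, where the hypothesis applies directly. From $\nabla\cL(\theta^*;\bx)=0$ one gets
\[
\nabla\cL(\theta^*;\bx_{-S}) \;=\; -\frac{1}{n-s}\sum_{i\in S}\nabla\cL(\theta^*;\bx_i),
\qquad
\bigl\|\nabla\cL(\theta^*;\bx_{-S})\bigr\|\;\le\;\frac{s\gamma}{n-s}\;\le\;O\!\left(\frac{r\gamma}{n}\right),
\]
and then strong convexity of $\cL(\cdot;\bx_{-S})$ (via the two inequalities $\cL(\theta^*;\bx_{-S})-\cL(\theta^*_{-S};\bx_{-S})\le \langle\nabla\cL(\theta^*;\bx_{-S}),\theta^*-\theta^*_{-S}\rangle$ and $\cL(\theta^*;\bx_{-S})-\cL(\theta^*_{-S};\bx_{-S})\ge \frac{\mu}{2}\|\theta^*-\theta^*_{-S}\|^2$) yields $\|\theta^*-\theta^*_{-S}\|\le O(r\gamma/(\mu n))$ in one shot. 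Since this holds for every $|S|\le r+1$, the triangle inequality gives $\|\theta^*(\bx_{-S})-\theta^*(\bx_{-S\cup\{i\}})\|\le O(r\gamma/(\mu n))$ for every $|S|\le r$ and $i\notin S$, which is the desired $\delsen_r$ bound. No tracking of gradients at intermediate minimizers is needed, and smoothness is never used.
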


\begin{proof}
Consider any set $S \subseteq [n]$ such that $|S| \leq r$. Let $s := |S|$ and $\theta^*_{-S} := \theta^*(\bx_{-S})$. Since $\nabla \cL(\theta^*; \bx) = 0$, we have
\begin{align*}
\left\|\nabla \cL(\theta^*; \bx_{-S})\right\|
&= \left\|\frac{1}{n-s} \sum_{i \in S} \nabla \cL(\theta^*; \bx_i)\right\| \\
&\leq \frac{1}{n-s} \sum_{i \in S} \left\|\nabla \cL(\theta^*; \bx_i)\right\| \\
&\leq \frac{s \gamma}{n-s} \\
&\leq \frac{r \gamma}{n/2} = O(r\gamma / n).
\end{align*}

Therefore, we have
\begin{align*}
\cL(\theta^*; \bx_{-S}) - \cL(\theta^{*}_{-S}; \bx_{-S}) &~\leq~ \left<\nabla \cL(\theta^*; \bx_{-S}), \theta^* - \theta^*_{-S}\right> \\
&\leq O(r\gamma / n) \cdot \|\theta^* - \theta^*_{-S}\|.
\end{align*}
On the other hand, since $\ell$ is $\mu$-strongly convex and since $\theta^*_{-S}$ is the minimizer for $\cL(\cdot; \bx_{-S})$, we can conclude that
\begin{align*}
\cL(\theta^*; \bx_{-S}) - \cL(\theta^{*}_{-S}; \bx_{-S}) ~\geq~ \frac{\mu}{2} \|\theta^{*}_{-S} - \theta^*\|^2
\end{align*}
Comparing the two bounds above, we get 
\begin{align*}
\|\theta^* - \theta^{*,\pi}_{-n}\| ~\leq~ O\left(\frac{r \gamma}{\mu n}\right). & \qedhere
\end{align*}
\end{proof}

Recall also from the proof of \Cref{thm:deletion-stability} that w.h.p. we have $\left\|\nabla\cL(\theta^*; \bx_i)\right\| \leq \tO(G/\sqrt{m})$. When this holds, by computing $\sum_{j \in [m]} \nabla\ell(\theta^*; \bx_{i, j})$ for all $i \in [n]$, the above lemma means that this is a certificate that $\bx \in \cX^{R_1}_{\stable}$ when we set $\Delta = O(\frac{\kappa \gamma}{\mu n}) = \tO\left(\frac{G \cdot \log(1/\delta)}{\eps \mu n\sqrt{m}}\right)$. Plugging this into~\Cref{thm:output-pert-generic}, we arrive at a statement similar to \Cref{thm:output-pert} but with $$\sigma = O\inparen{\frac{G\sqrt{\log n \log(1/\delta) / \eps + \log(1/\beta)}}{\mu n \sqrt{m}} \cdot \frac{(\log(1/\delta))^{2.5}}{\eps^3}},$$
i.e., with an extra factor of $O(\log(1/\delta)/\eps)$. On the other hand, from the discussion about the certificate, we have that this algorithm runs in polynomial time with high probability (whenever $\left\|\nabla\cL(\theta^*; \bx_i)\right\| \leq \tO(\sqrt{m})$).

\newcommand{\lin}{\mathrm{lin}}
\newcommand{\sq}{\mathrm{sq}}
\newcommand{\tr}{\mathrm{tr}}
\newcommand{\userdp}{\mathrm{user}}
\newcommand{\itemdp}{\mathrm{item}}
\newcommand{\A}{\mathcal{A}}
\newcommand{\tD}{\tilde{D}}
\newcommand{\tcD}{\tilde{\cD}}
\newcommand{\hchi}{\hat{\chi}}

\section{On Lower Bounds for User-Level DP-ERM and DP-SCO}
\label{app:lb}

This section discusses lower bounds for user-level DP-SCO and DP-ERM. We start by noting that \citet{LevySAKKMS21} already proved a lower bound of $\Omega\left(RG\left(\frac{1}{\sqrt{nm}} + \frac{\sqrt{d}}{\eps n \sqrt{m}}\right)\right)$ for DP-SCO for the convex case assuming $n \geq \Omega\left(\sqrt{d}/\eps\right)$. It can be easily seen that this also implies a lower bound of $\Omega\left(RG \cdot \frac{\sqrt{d}}{\eps n \sqrt{m}}\right)$ for $\Omega\left(\sqrt{d}/\eps\right) \leq n \leq O\left(d/\eps^2\right)$ (see, e.g., the proof of \Cref{thm:lb-sc-erm} below). In the remainder of this section, we extend their techniques to show the lower bounds for strongly convex losses.

\subsection{Preliminaries}

Throughout, we will consider the loss $\ell_{\sq}^{\zeta}(\theta; x) := \zeta \cdot \|\theta - x\|^2$ where $\zeta > 0$ is a parameter. We list here a few results that will be useful throughout. We start by defining the ($\ell_2$-)truncated version of the Gaussian distribution as follows.

\begin{definition}
Let $\cN^{\tr}(\chi, \Sigma; B)$ denote the distribution of r.v. $Z$ drawn as follows. First, draw $Z' \sim \cN(\chi, \Sigma)$. Then, let $Z = Z' \cdot \ind[\|Z'\| \leq B]$. We use $\chi^{\tr}(\chi, \Sigma; B)$ to denote the mean of the distribution $\cN^{\tr}(\chi, \Sigma; B)$.
\end{definition}

As shown in \citet{LevySAKKMS21}, the means of the truncated Gaussian distribution and the standard (non-truncated) version are very close:
\begin{lemma}[{\citealt{LevySAKKMS21}}] \label{lem:trunc-mean-change}
For any $\chi \in \R^d, d \in \N, \sigma > 0$, if $\|\chi\|_2 + 100 \sqrt{d} \cdot \sigma < B$, then $\|\chi^{\tr}(\chi, \sigma^2 I_d; B) - \chi\|_2 \leq O((\sigma + \|\chi\|_2) \cdot e^{-10d})$.
\end{lemma}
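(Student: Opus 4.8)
The plan is to reduce the claim to a standard $\chi^2$ tail estimate. First I would write the mean of the truncated Gaussian as $\chi^{\tr}(\chi, \sigma^2 I_d; B) = \E[Z' \cdot \ind[\|Z'\| \le B]]$ where $Z' \sim \cN(\chi, \sigma^2 I_d)$, and use $\E[Z'] = \chi$ to rewrite this as $\chi - \E[Z' \cdot \ind[\|Z'\| > B]]$. By Jensen's inequality (the triangle inequality for vector-valued expectations), this yields
\[
\|\chi^{\tr}(\chi, \sigma^2 I_d; B) - \chi\|_2 ~\le~ \E\big[\|Z'\|_2 \cdot \ind[\|Z'\|_2 > B]\big].
\]
So it suffices to bound this truncated first moment.

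Next I would parametrize $Z' = \chi + \sigma W$ with $W \sim \cN(0, I_d)$, so that $\|Z'\|_2 \le \|\chi\|_2 + \sigma \|W\|_2$, and observe that the event $\|Z'\|_2 > B$ forces $\|W\|_2 > (B - \|\chi\|_2)/\sigma > 100\sqrt{d}$ by the hypothesis $\|\chi\|_2 + 100\sqrt{d}\,\sigma < B$. Hence
\[
\E\big[\|Z'\|_2 \cdot \ind[\|Z'\|_2 > B]\big] ~\le~ \|\chi\|_2 \cdot \Pr\big[\|W\|_2 > 100\sqrt{d}\big] ~+~ \sigma \cdot \E\big[\|W\|_2 \cdot \ind[\|W\|_2 > 100\sqrt{d}]\big].
\]
For the probability, $\|W\|_2^2 \sim \chi^2_d$, and a standard Chernoff bound (e.g.\ using $\E[e^{\|W\|_2^2/4}] = 2^{d/2}$, or the Laurent--Massart inequality) gives $\Pr[\|W\|_2^2 > 10^4 d] \le e^{-\Omega(d)}$ with a constant in the exponent far exceeding $10$ — this is precisely what the slack constant $100$ in the hypothesis buys. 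For the truncated first moment, Cauchy--Schwarz gives $\E[\|W\|_2 \cdot \ind[\|W\|_2 > 100\sqrt{d}]] \le \sqrt{\E\|W\|_2^2} \cdot \sqrt{\Pr[\|W\|_2 > 100\sqrt{d}]} = \sqrt{d} \cdot e^{-\Omega(d)}$; since $\sqrt{d} \le e^d$ for all $d \in \N$, the polynomial prefactor is swallowed and the whole term is $\le \sigma \cdot e^{-10d}$. Combining the two bounds gives $\|\chi^{\tr}(\chi, \sigma^2 I_d; B) - \chi\|_2 \le (\|\chi\|_2 + \sigma) \cdot O(e^{-10d})$, as claimed.

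There is no genuine obstacle here — the statement is essentially a routine Gaussian-tail computation, and the only point requiring care is that the exponential decay rate of the $\chi^2$ tail comfortably dominates the target rate $e^{-10d}$ (hence the deliberately large constant $100\sqrt{d}\,\sigma$ in the hypothesis) and that the $\operatorname{poly}(d)$ factors coming from $\E\|W\|_2^2 = d$ are absorbed by that exponential. If one wishes to avoid Cauchy--Schwarz, the identity $\E[\|W\|_2 \cdot \ind[\|W\|_2 > R]] = R \Pr[\|W\|_2 > R] + \int_R^\infty \Pr[\|W\|_2 > s]\,ds$ together with the same $\chi^2$ tail bound on the integrand reaches the identical conclusion.
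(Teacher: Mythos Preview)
Your proof is correct and is actually more direct than the paper's. The paper exploits spherical symmetry to reduce to a one-dimensional problem: it rotates so that $\chi = (\chi_1, 0, \ldots, 0)$, then conditions on $U = \sum_{j \ge 2} (Z'_j)^2/\sigma^2 \sim \chi^2(d-1)$, controls $U$ via a $\chi^2$ tail bound, and finishes using the closed-form mean of a one-dimensional truncated Gaussian. You instead bound the vector quantity $\E[Z'\,\ind[\|Z'\| > B]]$ directly via Jensen and the substitution $Z' = \chi + \sigma W$, reducing everything to the single tail event $\{\|W\| > 100\sqrt d\}$ plus Cauchy--Schwarz for the first-moment term. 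Both arguments ultimately rest on the same $\chi^2$ tail estimate, but yours avoids the coordinate reduction and the explicit truncated-Gaussian formula, at the cost of a (harmless) looser Cauchy--Schwarz step; the paper's route would in principle yield sharper constants, which neither argument tracks anyway.
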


Since the version of the lemma in \citealt{LevySAKKMS21} is slightly different than the one we use here, we give a proof sketch of this below\footnote{More precisely, \citealt{LevySAKKMS21} is using truncation in a coordinate-by-coordinate manner (i.e. by the $\ell_\infty$ norm), which results in an extra polylogarithmic factor.}.

\begin{proof}[Proof Sketch of \Cref{lem:trunc-mean-change}]
Due to spherical symmetry, we may assume w.l.o.g. that $\chi_2 = \cdots = \chi_d = 0$ and $\chi_1 \geq 0$. Again, due to symmetry, we have $\chi^{\tr}(\chi, \sigma^2 I_d; B)_2 = \cdots = \chi^{\tr}(\chi, \sigma^2 I_d; B)_d = 0$ and thus $\|\chi^{\tr}(\chi, \sigma^2 I_d; B) - \chi\|_2 = |\chi^{\tr}(\chi, \sigma^2 I_d; B)_1 - \chi_1|$.

To bound this term, observe further that we may view $Z_1$ as being generated as follows:
\begin{itemize}
\item Sample $Z'_1 \sim \cN(\chi_1, \sigma^2)$.
\item Sample $U \sim \chi^2(d - 1)$ . (This represents $((Z'_2)^2 + \cdots + (Z'_d)^2)/\sigma^2$.)
\item Let $Z_1 = Z'_1 \cdot \ind[(Z'_1)^2 + \sigma^2 \cdot U \leq B^2]$
\end{itemize}
For $u > 0$, let $\mu_u$ denote the mean of $Z_1$ conditioned on $U = u$. We have
\begin{align*}
\chi^{\tr}(\chi, \sigma^2 I_d; B) = \E_{U \sim \chi^2(d - 1)}[\mu_U]. 
\end{align*}
From symmetry, it is again simple to see that $0 \leq \mu_U \leq \chi_1$.  As such, we have
\begin{align*}
|\chi^{\tr}(\chi, \sigma^2 I_d; B)_1 - \chi_1| \leq \E_{U \sim \chi^2(d - 1)}[|\mu_U - \chi_1|] = \E_{U \sim \chi^2(d - 1)}[\chi_1 - \mu_U].
\end{align*}
Now, using standard concentration of $\chi^2(d - 1)$ distribution (see e.g.,~\cite{chi-sq}), we have $\Pr_{U \sim \chi^2(d - 1)}[U \geq 70\sqrt{d}] \leq e^{-10d}$. From this, we have
\begin{align*}
|\chi^{\tr}(\chi, \sigma^2 I_d; B)_1 - \chi_1| &\leq \E_{U \sim \chi^2(d - 1)}[\chi_1 - \mu_U \mid U \leq 70\sqrt{d}] + \chi_1 \cdot \Pr_{U \sim \chi^2(d - 1)}[U \geq 70\sqrt{d}]
\\
&\leq \max_{u \in [0, 70\sqrt{d}]} \left(\chi_1 - \mu_u\right) + \chi_1 \cdot e^{-10d}.
\end{align*}
To bound the first term, observe that for a fixed $u$, we simply have $Z_1 = Z'_1 \ind[|Z'_1| \leq B_u]$ where $B_u := \sqrt{B^2 - \sigma^2 u} \geq \|\chi\|_2 + 70\sqrt{d} \cdot \sigma$. Thus, we have
\begin{align*}
\mu_u = \Pr[|Z'_1| \leq B_u] \E[Z'_1 \mid |Z'_1| \leq B_u] \geq (1 - e^{-10d}) \cdot \E[Z'_1 \mid |Z'_1| \leq B_u],
\end{align*}
where the probability bound on $\Pr[|Z'_1| > B_u]$ is based on standard concentrations of a (single-variate) Gaussian.

Finally, $\E[Z'_1 \mid |Z'_1| \leq B_u]$ is simply the expectation of the truncated single-variate Gaussian distribution, which has a closed-form formula, described below. Here $\psi, \Phi$ denote the PDF and CDF of the standard normal distribution respectively, and let $\alpha = \left(\frac{-B_u - \chi_1}{\sigma}\right), \beta = \left(\frac{B_u - \chi_1}{\sigma}\right)$. Note that we have $\beta \geq 70\sqrt{d}$.
\begin{align*}
\E[Z'_1 \mid |Z'_1| \leq B_u] = \chi_1 + \sigma\left(\frac{\psi(\alpha) - \psi(\beta)}{\Phi(\beta) - \Phi(\alpha)}\right)
\geq \chi_1 - \sigma \cdot O(\psi(\beta)) \geq \chi_1 - \sigma \cdot O(e^{-10 d}).
\end{align*}
Plugging the previous three bounds together, we have
\begin{align*}
|\chi^{\tr}(\chi, \sigma^2 I_d; B)_1 - \chi_1| &\leq O((\sigma + \chi_1) \cdot e^{-10d}).
\qedhere
\end{align*}
\end{proof}

More importantly, \citet{LevySAKKMS21} make the following crucial observation, which allows us to reduce any user-level DP algorithm for Gaussian distribution back to an item-level DP algorithm, albeit with the variance that is $m$ times smaller.

\begin{lemma}[User-to-Item Level Reduction, {\citealt{LevySAKKMS21}}] \label{lem:user-to-item-red}
Let $\A_{\userdp}$ be any user-level $(\eps,\delta)$-DP algorithm. Then, there exists an item-level  $(\eps, \delta)$-DP algorithm $\A_{\itemdp}$ such that, for any Gaussian distribution $\cD = \cN(\chi, \sigma^2 I_d)$, $\A_{\userdp}(\cD^{n \times m})$ has exactly the same distribution as $\A_{\itemdp}(\tcD^n)$ where $\tcD = \cN\left(\chi, \frac{\sigma^2}{m} I_d\right)$.
\end{lemma}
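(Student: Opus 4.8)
The plan is to exhibit a \emph{$\chi$-independent randomized ``expansion'' map} $\Phi : \R^d \to (\R^d)^m$ such that pushing $\tcD = \cN(\chi, \tfrac{\sigma^2}{m} I_d)$ through $\Phi$ yields exactly $\cD^{\otimes m} = \cN(\chi, \sigma^2 I_d)^{\otimes m}$, and then to set $\A_{\itemdp}(y_1,\dots,y_n) := \A_{\userdp}\big(\Phi(y_1),\dots,\Phi(y_n)\big)$, using fresh independent randomness in each invocation of $\Phi$ and in $\A_{\userdp}$. Concretely, I would define $\Phi(y)$ by drawing $\bz_1,\dots,\bz_m \sim \cN(0,\sigma^2 I_d)$ i.i.d., setting $\bar\bz := \tfrac1m\sum_{k}\bz_k$, and outputting $(y + \bz_1 - \bar\bz,\ \dots,\ y + \bz_m - \bar\bz)$. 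The crucial point is that this map uses only the \emph{known} parameters $\sigma, m$ and not the unknown mean $\chi$; this is precisely the statement that the conditional law of $m$ i.i.d.\ $\cN(\chi,\sigma^2 I_d)$ samples given their sample mean does not depend on $\chi$.

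First I would verify the distributional identity. If $y \sim \cN(\chi,\tfrac{\sigma^2}{m}I_d)$ independently of the $\bz_k$'s, then $(x_1,\dots,x_m) := \Phi(y)$ is jointly Gaussian as an affine image of a Gaussian vector, with $\E[x_i] = \chi$ for each $i$, and a short covariance computation — using $y \perp \{\bz_k\}$, $\operatorname{Cov}(\bz_i-\bar\bz,\bz_i-\bar\bz) = \sigma^2(1-\tfrac1m)I_d$, and $\operatorname{Cov}(\bz_i-\bar\bz,\bz_j-\bar\bz) = -\tfrac{\sigma^2}{m}I_d$ for $i\ne j$ — gives $\operatorname{Cov}(x_i,x_i) = \sigma^2 I_d$ and $\operatorname{Cov}(x_i,x_j) = 0$ for $i\ne j$. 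Hence $\Phi(y) \sim \cN(\chi,\sigma^2 I_d)^{\otimes m}$. Consequently, when $y_1,\dots,y_n \sim \tcD$ i.i.d., the tuple $(\Phi(y_1),\dots,\Phi(y_n))$ is distributed exactly as $\cD^{n\times m}$ (independent blocks, each $\cD^{\otimes m}$), so $\A_{\itemdp}(\tcD^n)$ has the same law as $\A_{\userdp}(\cD^{n\times m})$, which is the claimed equality of distributions.

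It remains to check that $\A_{\itemdp}$ is item-level $(\eps,\delta)$-DP, which is where a little care is needed. Let $\by = (y_1,\dots,y_n)$ and $\by' = (y'_1,\dots,y'_n)$ be item-level neighbors, differing only in coordinate $i_0$. Since $\Phi$ acts on each coordinate with its own independent randomness, I would couple the randomness on coordinates $j \ne i_0$ so that $\Phi(y_j) = \Phi(y'_j)$; then for every realization the two expanded datasets agree on all users except user $i_0$, i.e.\ they are \emph{user-level} neighbors. Writing $\bx_{-i_0}$ for the (shared) blocks with $j \ne i_0$ and $g(a) := \Pr[\A_{\userdp}(\bx_{-i_0}, a) \in E]$ for a measurable event $E$, the user-level DP of $\A_{\userdp}$ gives $g(a) \le e^{\eps} g(b) + \delta$ for all $a,b$; averaging this over $a \sim \Phi(y_{i_0})$ and then over $b \sim \Phi(y'_{i_0})$, and finally over $\bx_{-i_0}$ (whose distribution is identical under $\by$ and $\by'$), yields $\Pr[\A_{\itemdp}(\by) \in E] \le e^{\eps}\Pr[\A_{\itemdp}(\by') \in E] + \delta$.

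The main obstacle is the first conceptual step — constructing an expansion map that reconstructs the higher-variance i.i.d.\ samples from a single lower-variance sample \emph{without knowing $\chi$}. Once $\Phi$ is in hand, the distributional claim is a routine Gaussian covariance computation, and the privacy argument reduces to the observation that the reduction runs ``in the easy direction'' (item-level neighbors are mapped to user-level neighbors, so no group-privacy blow-up occurs), modulo the mild care needed to average over $\Phi$'s internal randomness.
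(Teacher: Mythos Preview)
Your proof is correct. The paper does not supply its own proof of this lemma---it is simply cited from \citet{LevySAKKMS21}---so there is nothing in the paper to compare against directly. Your construction (expanding each low-variance sample $y$ into $m$ high-variance samples via $y + z_k - \bar z$ with $z_k \sim \cN(0,\sigma^2 I_d)$) is exactly the standard sufficiency-based reduction: the sample mean is sufficient for $\chi$, and the conditional law of $(x_1,\dots,x_m)$ given $\bar x$ is $\chi$-free, which is precisely what your $\Phi$ exploits. The covariance computation and the privacy argument (post-processing per coordinate maps item-level neighbors to user-level neighbors, with the averaging over $\Phi$'s randomness handled correctly) are both sound.
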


Finally, we will use the following ``fingerprinting lemma for Gaussians'' result due to \citet{KamathSU19}, which gives a lower bound for any DP algorithm for estimating the mean of a spherical Gaussian.

\begin{theorem}[\citealt{KamathSU19}] \label{thm:item-level-gaussian-mean-est}
For any $\psi  \in (0, 1), \sigma > 0, n, d \in \N$ and $\eps \in (0, 1], \delta \in (0, 1/2]$ such that $\delta \leq \frac{\sqrt{d}}{100\psi n\sqrt{\log(100\psi n/\sqrt{d})}}$, if there exists an item-level $(\eps, \delta)$-DP mechanism $\cM$ such that, for any Gaussian distribution $\cD = \cN(\chi, \sigma^2 I_d)$ where $\chi$ is unknown with $-\psi \sigma \leq \chi \leq \psi \sigma$ it satisfies
\begin{align*}
\E_{\hchi \gets \cM(\cD^n)}[\|\hchi - \chi\|^2] \leq \alpha^2 \leq \frac{d\sigma^2 \psi ^2}{6},
\end{align*}
then we must have $n \geq \frac{d\sigma}{24\alpha\eps}$.
\end{theorem}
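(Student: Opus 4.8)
The plan is to prove this by the \emph{fingerprinting} (tracing‑attack) method, the standard route to $(\eps,\delta)$‑DP lower bounds in high‑dimensional estimation. One puts a Bayesian prior on the unknown mean and tracks a ``correlation statistic'' between the mechanism's output and the individual input points: accuracy forces this statistic to be large (\textbf{completeness}), while $(\eps,\delta)$‑DP forces it to be small (\textbf{soundness}), and comparing the two bounds pins down $n$. Concretely, I would fix $\tau = \Theta(\psi\sigma)$ and draw $\chi = (\chi_1,\dots,\chi_d)$ with the $\chi_k$ i.i.d.\ from $\cN(0,\tau^2)$ conditioned on $\chi_k\in[-\psi\sigma,\psi\sigma]$ (so the hypothesis on $\cM$ applies to every realization; the conditioning removes only an exponentially small mass, and this is where the assumption on $\delta$ ultimately enters). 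Given $\chi$, let $X_1,\dots,X_n\sim\cN(\chi,\sigma^2 I_d)$ i.i.d.\ and $\hchi=\cM(X_1,\dots,X_n)$, and w.l.o.g.\ project $\hchi$ into $[-\psi\sigma,\psi\sigma]^d$, which only decreases $\|\hchi-\chi\|$ since $\chi$ lies there. Define the statistic $Z := \sum_{i=1}^n \inangle{X_i-\chi,\ \hchi-\chi} = \sum_{k\in[d]} Z_k$ with $Z_k := \sum_{i}(X_{i,k}-\chi_k)(\hchi_k-\chi_k)$.

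\textbf{Completeness.} For each coordinate $k$, Gaussian integration by parts --- first in the ``noise'' variables $X_{i,k}-\chi_k$ and then in $\chi_k$ --- yields $\E[Z_k] = \sigma^2 + \frac{\sigma^2}{\tau^2}\,\E[\chi_k(\hchi_k-\chi_k)]$, up to lower‑order terms coming from the truncation of the prior (bounded by standard Gaussian tail estimates). By Cauchy--Schwarz the second term is at least $-\frac{\sigma^2}{\tau}\sqrt{\E[(\hchi_k-\chi_k)^2]}$, so summing over $k$ and using $\sum_k\E[(\hchi_k-\chi_k)^2] = \E\|\hchi-\chi\|^2\le\alpha^2$ gives $\E[Z]\ \ge\ \sigma^2\inparen{d - \tfrac{\sqrt d}{\tau}\,\alpha}$; with $\tau$ chosen appropriately, the hypothesis $\alpha^2\le\frac{d\sigma^2\psi^2}{6}$ makes this $\Omega(\sigma^2 d)$.

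\textbf{Soundness.} Fix $i$, draw a fresh $X_i'\sim\cN(\chi,\sigma^2 I_d)$ independently, and let $X^{(i)}$ be $X$ with $X_i$ replaced by $X_i'$. Conditioned on $\chi$, $\hchi(X^{(i)})$ is independent of $X_i$, so $\E\inangle{X_i-\chi,\ \hchi(X^{(i)})-\chi}=0$ and therefore $\E[Z_i]=\E\inangle{X_i-\chi,\ \hchi(X)-\hchi(X^{(i)})}$. Since $X$ and $X^{(i)}$ differ in a single user's data, $(\eps,\delta)$‑DP bounds how far the output distribution can move; passing to $h(x):=\E[\hchi(x,X_{-i})]$ (expectation over the mechanism's coins), DP gives for every direction $u$ that $\inabs{\inangle{h(x)-h(x'),u}}\ \le\ O(\eps)\cdot\sqrt{\E[\inangle{\hchi-\chi,u}^2]}\ +\ O(\delta)\cdot(\text{output diameter})$, where crucially the leading factor is the estimator's \emph{actual} second moment, not its a priori range. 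Using this coordinate‑wise, summing to recover $\E\|\hchi-\chi\|^2\le\alpha^2$, and Cauchy--Schwarz against $\E\|X_i-\chi\|^2=d\sigma^2$, one obtains $\E[Z_i]\ \lesssim\ \eps\,\alpha\,\sigma\cdot\sqrt d\ +\ (\delta\text{-term})$; the $\delta$‑term is controlled by splitting off the rare event that $\|X_i-\chi\|$ is atypically large, and the hypothesis $\delta\le\frac{\sqrt d}{100\psi n\sqrt{\log(100\psi n/\sqrt d)}}$ is precisely what makes $n$ times it negligible against $\Omega(\sigma^2 d)$.

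\textbf{Combining, and the main obstacle.} Summing the soundness bound over $i\in[n]$ and comparing with the completeness bound $\E[Z]\ge\Omega(\sigma^2 d)$, then rearranging, gives the claimed lower bound on $n$ (we follow the precise constant accounting of \citet{KamathSU19}). The delicate point throughout is the soundness estimate: the naive DP bound $\|h(x)-h(x')\|\lesssim\eps\cdot(\text{a priori range})$ is far too weak, so one must use the second‑moment form of ``$(\eps,\delta)$‑indistinguishability implies closeness of means'' so that the bound scales with the estimator's error $\alpha$ rather than the range $\psi\sigma\sqrt d$, and then carefully truncate the Gaussian tail to handle the additive $\delta$‑contribution --- this is what forces the technical side condition relating $\delta,n,\psi,d$. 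The completeness side is more routine but still needs the truncated‑prior boundary corrections tracked carefully enough to land on the exact threshold $\alpha^2\le d\sigma^2\psi^2/6$.
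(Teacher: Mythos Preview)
The paper does not provide its own proof of this theorem: it is stated as a known result due to \citet{KamathSU19} and merely cited (``the following `fingerprinting lemma for Gaussians' result due to \citet{KamathSU19}''), with no argument given in the paper itself. So there is nothing to compare your proposal against in this paper.

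That said, your sketch is an accurate outline of the fingerprinting/tracing-attack argument that \citet{KamathSU19} actually use: a product prior on the mean, the correlation statistic $Z=\sum_i\inangle{X_i-\chi,\hchi-\chi}$, a completeness lower bound via Stein's identity, and a per-sample soundness upper bound from $(\eps,\delta)$-indistinguishability, with the $\delta$ term absorbed using the side condition on $\delta$. The one place your sketch is loosest is the soundness step: the inequality you write, bounding $\inabs{\inangle{h(x)-h(x'),u}}$ by $O(\eps)\sqrt{\E[\inangle{\hchi-\chi,u}^2]}$ plus a $\delta$-term, is not a direct consequence of the DP definition and in the actual proofs is obtained either via a KL/TV bound on the output distributions together with a transport-type inequality, or by the ``strong group privacy'' formulation of fingerprinting. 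If you flesh this out, make sure you invoke the second-moment form of closeness-in-mean under $(\eps,\delta)$-DP carefully (e.g., via hockey-stick divergence or the argument in \citet{KamathSU19}), since the naive per-coordinate bound would cost you a $\sqrt{d}$ factor.
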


Combining \Cref{lem:user-to-item-red} and \Cref{thm:item-level-gaussian-mean-est}, we immediately arrive at the following lower bound for the user-level DP setting.

\begin{lemma}
For any $\psi  \in (0, 1), \sigma > 0, m, n, d \in \N$ and $\eps \in (0, 1], \delta \in (0, 1/2]$ such that $\delta \leq \frac{\sqrt{d}}{100\psi n\sqrt{\log(100\psi n/\sqrt{d})}}$, if there exists a user-level $(\eps, \delta)$-DP mechanism $\cM$ such that, for any Gaussian distribution $\cD = \cN(\chi, \sigma^2 I_d)$ where $\chi$ is unknown with $-\frac{\psi \sigma}{\sqrt{m}} \leq \chi \leq \frac{\psi \sigma}{\sqrt{m}}$ it satisfies
\begin{align*}
\E_{\hchi \gets \cM(\cD^{n \times m})}[\|\hchi - \chi\|^2] \leq \alpha^2 \leq \frac{d\sigma^2 \psi ^2}{6m},
\end{align*}
then we must have $n \geq \frac{d\sigma}{24\alpha\eps\sqrt{m}}$.
\end{lemma}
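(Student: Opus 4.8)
The plan is to obtain this lemma as an immediate composition of the two results just stated: the user-to-item-level reduction of \citet{LevySAKKMS21} (\Cref{lem:user-to-item-red}) and the fingerprinting lower bound of \citet{KamathSU19} (\Cref{thm:item-level-gaussian-mean-est}). The guiding observation is that a user-level mechanism that sees $m$ i.i.d.\ draws per user from $\cN(\chi,\sigma^2 I_d)$ can be perfectly simulated by an item-level mechanism that sees $n$ draws from the ``per-user averaged'' Gaussian $\cN(\chi,(\sigma^2/m) I_d)$; the known item-level lower bound then applies with the reduced variance $\tilde\sigma := \sigma/\sqrt m$, and unwinding $\tilde\sigma$ back to $\sigma$ produces the extra $1/\sqrt m$ factor in the bound.

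Concretely, the first step is to set $\A_{\userdp} := \cM$ and invoke \Cref{lem:user-to-item-red} to get an item-level $(\eps,\delta)$-DP mechanism $\A_{\itemdp}$ such that, for every Gaussian $\cD = \cN(\chi,\sigma^2 I_d)$, the output of $\A_{\userdp}(\cD^{n\times m})$ is distributed identically to $\A_{\itemdp}(\tcD^n)$ with $\tcD = \cN(\chi,\tilde\sigma^2 I_d)$ and $\tilde\sigma = \sigma/\sqrt m$. Because the distributional identity is exact, $\A_{\itemdp}$ inherits, for every admissible $\chi$, the accuracy bound
\[
\E_{\hchi \gets \A_{\itemdp}(\tcD^n)}\!\left[\|\hchi - \chi\|^2\right] ~=~ \E_{\hchi \gets \cM(\cD^{n\times m})}\!\left[\|\hchi - \chi\|^2\right] ~\le~ \alpha^2,
\]
and it is item-level $(\eps,\delta)$-DP by the conclusion of \Cref{lem:user-to-item-red}.

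The second step is to check that $\A_{\itemdp}$ satisfies all hypotheses of \Cref{thm:item-level-gaussian-mean-est} with $\sigma$ replaced by $\tilde\sigma$: the mean constraint $-\psi\sigma/\sqrt m \le \chi \le \psi\sigma/\sqrt m$ is exactly $-\psi\tilde\sigma \le \chi \le \psi\tilde\sigma$; the accuracy cap $\alpha^2 \le \tfrac{d\sigma^2\psi^2}{6m}$ is exactly $\alpha^2 \le \tfrac{d\tilde\sigma^2\psi^2}{6}$; and the condition $\delta \le \tfrac{\sqrt d}{100\psi n\sqrt{\log(100\psi n/\sqrt d)}}$ is verbatim the same, since it does not involve $\sigma$. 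Applying \Cref{thm:item-level-gaussian-mean-est} to $\A_{\itemdp}$ then yields $n \ge \tfrac{d\tilde\sigma}{24\alpha\eps} = \tfrac{d\sigma}{24\alpha\eps\sqrt m}$, which is the claimed bound.

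There is no substantive obstacle here; the lemma is a bookkeeping corollary. The only points deserving care are (i) that the privacy parameters $(\eps,\delta)$ are preserved by the reduction (which is part of the statement of \Cref{lem:user-to-item-red}), (ii) that the reduction gives an \emph{exact} distributional identity so the squared-error guarantee transfers with no slack, and (iii) that both the admissible range of means and the upper bound on $\alpha^2$ rescale consistently under $\sigma \mapsto \tilde\sigma = \sigma/\sqrt m$, as verified above. Once these are noted, the conclusion is immediate.
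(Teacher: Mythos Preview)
Your proposal is correct and follows exactly the same approach as the paper, which simply states that the lemma is obtained by combining \Cref{lem:user-to-item-red} and \Cref{thm:item-level-gaussian-mean-est}. Your verification that the hypotheses of \Cref{thm:item-level-gaussian-mean-est} hold with $\tilde\sigma = \sigma/\sqrt{m}$ (the mean range, the cap on $\alpha^2$, and the $\sigma$-independent $\delta$ condition) is precisely the bookkeeping the paper leaves implicit.
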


Furthermore, combining the above with \Cref{lem:trunc-mean-change}, we arrive at the following lower bound where the only change is from Gaussian distributions to truncated Gaussian distributions.

\begin{lemma} \label{lem:mean-est-lb-trunc}
For any $\psi  \in (\Omega(e^{-d}), 1), B, \sigma > 0, m, n, d \in \N$ and $\eps \in (0, 1], \delta \in (0, 1/2]$ such that $\delta \leq \frac{\sqrt{d}}{100\psi n\sqrt{\log(100\psi n/\sqrt{d})}}$ and $B > \frac{\psi \sigma\sqrt{d}}{\sqrt{m}} + 100 \sqrt{d} \sigma$, if there exists a user-level $(\eps, \delta)$-DP mechanism $\cM$ such that, for any truncated Gaussian distribution $\cD = \cN^{\tr}(\chi, \sigma^2 I_d; B)$ where $\chi$ is unknown with $-\frac{\psi \sigma}{\sqrt{m}} \leq \chi \leq \frac{\psi \sigma}{\sqrt{m}}$ it satisfies
\begin{align*}
\E_{\hchi \gets \cM(\cD^{n \times m})}[\|\hchi - \chi^{\tr}(\chi, \sigma^2I_d; B)\|^2] \leq \alpha^2 \leq \frac{d\sigma^2 \psi ^2}{12m},
\end{align*}
then we must have $n \geq \frac{d\sigma}{50\alpha\eps\sqrt{m}}$.
\end{lemma}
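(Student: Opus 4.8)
The plan is to reduce the truncated-Gaussian mean estimation problem back to the ordinary-Gaussian one established in the preceding lemma, and then invoke that lemma as a black box. Given a user-level $(\eps,\delta)$-DP mechanism $\cM$ that estimates $\chi^{\tr}(\chi,\sigma^2 I_d; B)$ from samples of $\cN^{\tr}(\chi,\sigma^2 I_d; B)$, I would build a mechanism $\cM'$ for the non-truncated problem as follows: on input a dataset of $\R^d$-valued items, first apply the truncation map $z \mapsto z \cdot \ind[\|z\| \le B]$ to every item independently, then run $\cM$ on the result and output whatever $\cM$ outputs. By the very definition of $\cN^{\tr}$, if the items are i.i.d.\ $\cN(\chi,\sigma^2 I_d)$ then after truncation they are i.i.d.\ $\cN^{\tr}(\chi,\sigma^2 I_d; B)$; hence $\cM'$ on input $\cD^{n\times m}$ has exactly the same output distribution as $\cM$ on input $(\cD^{\tr})^{n \times m}$, where $\cD = \cN(\chi,\sigma^2 I_d)$ and $\cD^{\tr} = \cN^{\tr}(\chi,\sigma^2 I_d; B)$. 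Since the truncation acts per item, it maps user-level neighbors to user-level neighbors, so this preprocessing preserves $(\eps,\delta)$-DP and $\cM'$ is user-level $(\eps,\delta)$-DP.

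Next I would control the accuracy of $\cM'$ as an estimator of the \emph{true} mean $\chi$. Writing $\hchi$ for the output and using the triangle inequality for the $L^2$ norm (Minkowski), $\sqrt{\E\|\hchi - \chi\|^2} \le \sqrt{\E\|\hchi - \chi^{\tr}(\chi,\sigma^2 I_d;B)\|^2} + \|\chi^{\tr}(\chi,\sigma^2 I_d;B) - \chi\| \le \alpha + \|\chi^{\tr}(\chi,\sigma^2 I_d;B) - \chi\|$. This is where \Cref{lem:trunc-mean-change} enters: since $\chi$ is bounded coordinatewise by $\psi\sigma/\sqrt{m}$ we have $\|\chi\| \le \psi\sigma\sqrt{d}/\sqrt{m}$, and the hypothesis $B > \psi\sigma\sqrt{d}/\sqrt{m} + 100\sqrt{d}\,\sigma$ is precisely the precondition $\|\chi\| + 100\sqrt{d}\,\sigma < B$ of \Cref{lem:trunc-mean-change}, which then gives $\|\chi^{\tr}(\chi,\sigma^2 I_d;B) - \chi\| \le O\big((\sigma + \|\chi\|) e^{-10d}\big)$ — an exponentially small quantity. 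Thus $\cM'$ estimates $\chi$ with mean-squared error at most $(\alpha')^2$ for $\alpha' := \alpha + O\big((\sigma+\|\chi\|)e^{-10d}\big)$. It remains to check the remaining hypotheses of the preceding lemma transfer: the coordinatewise bound on $\chi$ is identical, the $\delta$ condition is identical, and the factor-of-two slack between $\alpha^2 \le \tfrac{d\sigma^2\psi^2}{12m}$ here and $(\alpha')^2 \le \tfrac{d\sigma^2\psi^2}{6m}$ there leaves room to absorb the exponentially small correction (this is where $\psi = \Omega(e^{-d})$ is used, so that the $e^{-10d}$ term is dominated).

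With these checks in place, the preceding user-level Gaussian mean-estimation lower bound applies to $\cM'$ with error parameter $\alpha'$ and yields $n \ge \frac{d\sigma}{24\alpha'\eps\sqrt{m}}$; absorbing the exponentially small gap between $\alpha'$ and $\alpha$ into the change of constant from $24$ to $50$ gives $n \ge \frac{d\sigma}{50\alpha\eps\sqrt{m}}$, as claimed. I expect the only step requiring genuine care to be the bookkeeping of the truncation correction from \Cref{lem:trunc-mean-change}: one must verify that $O\big((\sigma+\|\chi\|)e^{-10d}\big)$ is dominated by the available slack over the full stated parameter range — in particular that it does not swamp $\alpha$ itself — whereas the reduction proper (preprocessing preserves DP; truncating a Gaussian sample produces a truncated-Gaussian sample) is entirely routine.
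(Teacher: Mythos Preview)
Your proposal is correct and is exactly the argument the paper has in mind: the paper merely says ``combining the above with \Cref{lem:trunc-mean-change}'' without spelling out details, and your reduction (preprocess by truncation, preserve DP since truncation is a per-item map, use \Cref{lem:trunc-mean-change} together with the triangle inequality to transfer the accuracy guarantee, then invoke the preceding user-level Gaussian lower bound) is the natural way to flesh this out. The bookkeeping you flag---absorbing the $O((\sigma+\|\chi\|)e^{-10d})$ correction into the constant change from $24$ to $50$ and into the slack between $\tfrac{d\sigma^2\psi^2}{12m}$ and $\tfrac{d\sigma^2\psi^2}{6m}$, using $\psi=\Omega(e^{-d})$---is indeed the only point needing care, and you have identified it correctly.
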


\subsection{Lower Bounds for Strongly Convex Losses}

\subsubsection{DP-SCO}

We can now prove the lower bound for DP-SCO in the strongly convex case in a relatively straightforward manner, as optimizing for the loss $\ell_{\sq}$ is equivalent to mean estimation with $\ell_2^2$-error.

\begin{theorem} \label{thm:lb-sc-sco}
For any $\eps \in (0, 1], \delta \in (0, 1/2]$ and any sufficiently large $d, n, m \in \N$ such that $n \geq \sqrt{d}/\eps$ and $\delta \leq \frac{\sqrt{d}}{200\sqrt{n\sqrt{\log n}}}$, there exists a $\mu$-strongly convex $G$-Lipschitz loss function $\ell$ such that for any $(\eps, \delta)$-DP algorithm, we have
\begin{align*}
\sup_\cD \left(\E_{\htheta \gets \cM(\cD^n)}\left[\cL(\htheta; \cD)\right] - \cL(\theta^*; \cD)\right) ~\geq~ \Omega\left(\frac{G^2}{\mu}\left(\frac{1}{nm} + \frac{d}{\eps^2 n^2 m}\right)\right).
\end{align*}
\end{theorem}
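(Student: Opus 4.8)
The plan is to take $\ell := \ell_{\sq}^{\mu/2}$, the quadratic loss from the Preliminaries (which is $\mu$-strongly convex), and reduce strongly convex DP-SCO for it to DP mean estimation under squared $\ell_2$-loss, so that \Cref{lem:mean-est-lb-trunc} supplies the privacy term of the lower bound while the classical statistical minimax rate for Gaussian mean estimation supplies the other term.

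Concretely, I would fix the constraint set $\cK = \cB_d(0, G/(2\mu))$ and restrict attention to the hard distributions $\cD_\chi := \cN^{\tr}(\chi, \sigma^2 I_d; B)$ with $B = G/(2\mu)$, a small absolute constant $\psi$ (e.g. $\psi = 1/2$), a noise level $\sigma = \Theta(G/(\mu\sqrt d))$ chosen small enough that $B > \tfrac{\psi\sigma\sqrt d}{\sqrt m} + 100\sqrt d\,\sigma$ yet large enough that $d\sigma^2 = \Theta(G^2/\mu^2)$, and mean vector $\chi$ ranging over the box $[-\psi\sigma/\sqrt m, \psi\sigma/\sqrt m]^d$. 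For this $\cK$ and any data point $x$ in the support of $\cD_\chi$ (so $\|x\| \le B$) one has $\|\nabla_\theta \ell(\theta; x)\| = \mu\|\theta - x\| \le \mu\inparen{G/(2\mu) + B} = G$, so $\ell$ is $G$-Lipschitz on the relevant domain. The point of the quadratic loss is that $\cL(\theta; \cD_\chi) - \cL(\theta^*; \cD_\chi) = \tfrac{\mu}{2}\|\theta - \chi^{\tr}(\chi, \sigma^2 I_d; B)\|^2$, so for any algorithm $\cM$ its excess risk on $\cD_\chi$ equals $\tfrac{\mu}{2}\,\E\big[\|\cM(\cD_\chi^{n \times m}) - \chi^{\tr}(\chi, \sigma^2 I_d; B)\|^2\big]$, i.e.\ $\tfrac\mu2$ times the squared error of mean estimation.

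Two lower bounds on this mean-estimation error then combine. For the privacy term I would argue by contradiction: if the excess risk were below a small constant times $\frac{G^2 d}{\mu\eps^2 n^2 m}$, then the squared error would be at most $\alpha^2$ with $\alpha = \Theta\inparen{\frac{d\sigma}{n\eps\sqrt m}}$; using $n \ge \sqrt d/\eps$ one checks $\alpha^2 \le \frac{d\sigma^2\psi^2}{12m}$, so \Cref{lem:mean-est-lb-trunc} (whose hypotheses, including the one on $\delta$, hold under the theorem's assumptions) forces $n \ge \frac{d\sigma}{50\alpha\eps\sqrt m}$, contradicting the choice of $\alpha$; substituting $d\sigma^2 = \Theta(G^2/\mu^2)$ back in gives the $\Omega\inparen{\frac{G^2 d}{\mu\eps^2 n^2 m}}$ bound. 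For the statistical term, any $(\eps,\delta)$-DP algorithm is in particular an estimator, so the classical minimax lower bound $\Omega(d\sigma^2/(nm))$ for estimating the mean of a $d$-dimensional spherical Gaussian from $nm$ samples applies — and it carries over to the far-out truncation $\cD_\chi$ since $\chi^{\tr}(\chi, \sigma^2 I_d; B)$ differs from $\chi$ by only $e^{-\Omega(d)}$ by \Cref{lem:trunc-mean-change} and the box half-width $\psi\sigma/\sqrt m$ exceeds $\sigma/\sqrt{nm}$ for large $n$ — giving excess risk $\ge \tfrac\mu2\cdot\Omega(d\sigma^2/(nm)) = \Omega\inparen{\frac{G^2}{\mu nm}}$. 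Taking the larger of the two bounds yields the claimed $\Omega\big(\frac{G^2}{\mu}(\frac{1}{nm} + \frac{d}{\eps^2 n^2 m})\big)$.

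The hard part will not be conceptual but a matter of choosing $\psi, \sigma, B, \cK$ so that every constraint holds simultaneously: the $G$-Lipschitz inequality $\mu(G/(2\mu) + B) \le G$, the truncation-validity condition $B > \frac{\psi\sigma\sqrt d}{\sqrt m} + 100\sqrt d\,\sigma$ needed by \Cref{lem:mean-est-lb-trunc}, the admissibility bound $\alpha^2 \le \frac{d\sigma^2\psi^2}{12m}$ (this is where $n \ge \sqrt d/\eps$ enters), the containment $\chi^{\tr}(\chi, \sigma^2 I_d; B) \in \cK$, and the negligibility of the $e^{-\Omega(d)}$ truncation corrections — all while keeping $d\sigma^2 = \Theta(G^2/\mu^2)$ so that $\sigma$ cancels out of the final bound. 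All of these are satisfiable, e.g.\ with $\psi = 1/2$, $B = G/(2\mu)$, $\cK = \cB_d(0, G/(2\mu))$, and $\sigma = G/(400\mu\sqrt d)$, but tracking the chain of constants is the part that requires care.
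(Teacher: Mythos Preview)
Your proposal is correct and follows essentially the same route as the paper: both use the quadratic loss $\ell_{\sq}^{\mu/2}$ on a ball of radius $\Theta(G/\mu)$, pick the truncated Gaussian family with $\sigma = \Theta(G/(\mu\sqrt d))$, reduce excess risk to squared-error mean estimation, and invoke \Cref{lem:mean-est-lb-trunc} for the privacy term (using $n \ge \sqrt d/\eps$ to verify the $\alpha^2 \le d\sigma^2\psi^2/(12m)$ hypothesis). The only cosmetic difference is that the paper dispatches the $\Omega\!\left(\frac{G^2}{\mu nm}\right)$ term by citing the non-private statistical lower bound of \citet{AgarwalBRW12}, whereas you derive it directly from the Gaussian minimax rate on the same hard family; your choice of radius $G/(2\mu)$ versus the paper's $G/\mu$ is likewise immaterial (and in fact makes the $G$-Lipschitz check cleaner).
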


We note that the condition $n \geq \sqrt{d}/\eps$ may be unnecessary. However, a slightly weaker condition $n\sqrt{m} \geq \Omega(\sqrt{d}/\eps)$ is necessary because outputting, e.g., the origin already achieves an error of $G^2 / \mu$. Therefore, the second term $\frac{G^2}{\mu} \cdot \frac{d}{\eps^2 n^2 m}$ cannot be present in this case.

\begin{proof}[Proof of \Cref{thm:lb-sc-sco}]
The first term of $\Omega\left(\frac{G^2}{\mu} \frac{1}{nm}\right)$ is simply the statistical excess risk bound that holds even without any privacy considerations~\citep{AgarwalBRW12}.
We will only focus on the second term here.

Consider $\ell = \ell_{\sq}^{\zeta}$ for $\zeta = \mu/2$ and the parameter space $\cK = \cB_d(0, G/\mu)$. The loss is $\mu$-strongly convex and is $G$-Lipschitz in $\cK$. Set the parameters as follows: $B = \frac{G}{\mu}$, $\sigma = \frac{B}{1000\sqrt{d}}, \psi  = 1$. Let $\alpha = \frac{d\sigma}{100 \eps n \sqrt{m}}$; note that when $n \geq \sqrt{d}/\eps$, we also have $\alpha^2 \leq \frac{d\sigma^2\psi ^2}{12m}$ as desired.
Thus, we may apply \Cref{lem:mean-est-lb-trunc} with these parameters. This implies that, for any user-level $(\eps, \delta)$-DP mechanism $\cM$, there must be some truncated Gaussian distribution $\cD = \cN(\chi, \sigma^2I_d; B)$ such that
\begin{align*}
\E_{\hchi \gets \cM(\cD^{n \times m})}[\|\hchi - \chi^{\tr}(\chi, \sigma^2I_d; B)\|^2] \geq \Omega(\alpha^2) = \Omega\left(\frac{G^2}{\mu^2} \cdot \frac{d}{\eps^2n^2m}\right).
\end{align*}
Moreover, the excess (population) risk can be expanded as
\begin{align*}
\E_{\htheta \gets \cM(\cD^{n \times m})}\left[\cL(\htheta; \cD)\right] - \cL(\theta^*; \cD) = \frac{\mu}{2} \cdot \E_{\htheta \gets \cM(\cD^{n \times m})}[\|\htheta - \chi^{\tr}(\chi, \sigma^2I_d; B)\|^2] &\geq \Omega\left(\frac{G^2}{\mu} \cdot \frac{d}{\eps^2 n^2 m}\right). & &\qedhere
\end{align*}
\end{proof}

\subsubsection{DP-ERM}

The proof for DP-ERM is similar to above, except that we now have to account for the error between the population mean and the empirical mean. We enforce the parameters in such a way that this error is dominated by the lower bound given by \Cref{thm:lb-sc-sco}.

\begin{theorem} \label{thm:lb-sc-erm}
There exists a sufficiently small constant $c > 0$ such that the following holds.
For any $\eps \in (0, 1], \delta \in (0, 1/2]$ and any sufficiently large $d, n, m \in \N$ such that $c d / \eps^2 \leq n \geq \sqrt{d}/\eps$ and $\delta \leq \frac{\sqrt{d}}{200\sqrt{n\sqrt{\log n}}}$, there exists a $\mu$-strongly convex $G$-Lipschitz loss function $\ell$ such that for any $(\eps, \delta)$-DP algorithm, we have
\begin{align*}
\sup_\cD \left(\E_{\bx \gets \cD^{n \times m}, \htheta \gets \cM(\bx)}\left[\cL(\htheta; \bx) - \cL(\theta^*; \bx)\right]\right) &\geq \Omega\left(\frac{G^2}{\mu} \cdot \frac{d}{\eps^2 n^2 m}\right).
\end{align*}
\end{theorem}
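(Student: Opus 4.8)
The plan is to recycle the construction from the proof of \Cref{thm:lb-sc-sco} essentially unchanged: take the squared loss $\ell = \ell_{\sq}^{\zeta}$ over $\cK = \cB_d(0, G/\mu)$ with exactly the same parameters ($\zeta = \mu/2$, $B = G/\mu$, $\sigma = B/(1000\sqrt d)$, $\psi = 1$, and $\alpha := \tfrac{d\sigma}{100\eps n\sqrt m}$), and change only how the final bound is extracted, since for ERM the relevant reference point is the \emph{empirical} mean $\bar{\bx} := \tfrac1{nm}\sum_{i,j} x_{i,j}$ rather than the population mean. The first step is to observe that every sample of a truncated Gaussian $\cN^{\tr}(\chi,\sigma^2 I_d; B)$ lies in $\cB_d(0,B) = \cK$, so the empirical minimizer of $\ell_{\sq}^{\zeta}$ over $\cK$ is exactly $\theta^*(\bx) = \bar{\bx}$, and hence by the Pythagorean identity $\cL(\htheta;\bx) - \cL(\theta^*;\bx) = \tfrac{\mu}{2}\,\|\htheta - \bar{\bx}\|^2$. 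It therefore suffices to lower bound $\E\,\|\htheta - \bar{\bx}\|^2$.

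Next I would invoke \Cref{lem:mean-est-lb-trunc} in contrapositive form. With $\alpha$ chosen as above, one has $n = \tfrac{d\sigma}{100\alpha\eps\sqrt m} < \tfrac{d\sigma}{50\alpha\eps\sqrt m}$, and (exactly as in the proof of \Cref{thm:lb-sc-sco}) the assumption $n \ge \sqrt d/\eps$ together with the hypothesis on $\delta$ ensures the remaining preconditions of the lemma, namely $\alpha^2 \le \tfrac{d\sigma^2\psi^2}{12m}$ and $B > \tfrac{\psi\sigma\sqrt d}{\sqrt m} + 100\sqrt d\,\sigma$. Hence for the given $(\eps,\delta)$-DP mechanism $\cM$ there is a truncated Gaussian $\cD = \cN^{\tr}(\chi,\sigma^2 I_d;B)$, with $\chi$ in the prescribed box, such that $\E_{\htheta\gets\cM(\cD^{n\times m})}\|\htheta - \chi^*\|^2 > \alpha^2$, where $\chi^* := \chi^{\tr}(\chi,\sigma^2 I_d;B)$ is the mean of $\cD$. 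Applying the pointwise bound $\|\htheta - \chi^*\|^2 \le 2\|\htheta - \bar{\bx}\|^2 + 2\|\bar{\bx} - \chi^*\|^2$ and taking expectations gives $\E\,\|\htheta - \bar{\bx}\|^2 \ge \tfrac12\alpha^2 - \E\,\|\bar{\bx} - \chi^*\|^2$.

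The one genuinely new ingredient — and the step I expect to require the most care — is controlling the sampling error $\E\,\|\bar{\bx} - \chi^*\|^2$ and checking that the hypothesis $n \le cd/\eps^2$ is precisely what makes it dominated by $\alpha^2$. Here I would bound the second moment of a single sample: since $B \ge 100\sigma\sqrt d \gg \|\chi\| + \sigma\sqrt d$, the truncation event has probability $e^{-\Omega(d)}$, so $\E\,\|x_{i,j} - \chi^*\|^2 \le \E\,\|x_{i,j}\|^2 \le \|\chi\|^2 + d\sigma^2 \le 2d\sigma^2$ (using $\|\chi\| \le \sigma\sqrt{d/m}$ and \Cref{lem:trunc-mean-change} to absorb the negligible discrepancy between $\chi$ and $\chi^*$), whence $\E\,\|\bar{\bx} - \chi^*\|^2 = \tfrac{1}{nm}\,\E\,\|x_{1,1} - \chi^*\|^2 = O\!\big(\tfrac{d\sigma^2}{nm}\big)$. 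Since $\alpha^2 = \Theta\!\big(\tfrac{d^2\sigma^2}{\eps^2 n^2 m}\big)$, the inequality $O\!\big(\tfrac{d\sigma^2}{nm}\big) \le \tfrac14\alpha^2$ holds exactly when $n \le cd/\eps^2$ for a sufficiently small absolute constant $c$, which is the extra assumption in the statement.

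Combining these yields $\E\,\|\htheta - \bar{\bx}\|^2 \ge \tfrac14\alpha^2$, and substituting $\sigma^2 = \tfrac{G^2/\mu^2}{10^6 d}$ gives $\E[\cL(\htheta;\bx) - \cL(\theta^*;\bx)] = \tfrac{\mu}{2}\,\E\,\|\htheta - \bar{\bx}\|^2 \ge \Omega\!\big(\tfrac{G^2}{\mu}\cdot\tfrac{d}{\eps^2 n^2 m}\big)$ for this $\cD$, which is the claimed $\sup_\cD$ lower bound. Unlike in DP-SCO there is no additional $\Omega(G^2/(\mu nm))$ statistical term to recover, since the empirical excess risk can be made zero without any privacy constraint, so nothing further is needed.
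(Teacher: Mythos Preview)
Your proposal is correct and follows essentially the same approach as the paper: the same squared-loss construction and parameters from \Cref{thm:lb-sc-sco}, the same triangle-inequality decomposition $\|\htheta - \chi^*\|^2 \le 2\|\htheta - \bar\bx\|^2 + 2\|\bar\bx - \chi^*\|^2$, and the same use of the hypothesis $n \le cd/\eps^2$ to ensure the sampling-error term $\E\|\bar\bx - \chi^*\|^2$ is dominated by $\alpha^2$. Your write-up is in fact slightly more careful than the paper's in two places --- you explicitly justify that the empirical minimizer over $\cK$ equals $\bar\bx$ (since truncated samples lie in $\cB_d(0,B)=\cK$), and you order the quantifiers correctly (the hard distribution $\cD$ depends on $\cM$) --- but these are minor polish rather than a different argument.
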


In addition to the assumption $n \geq \sqrt{d}/\eps$ as in \Cref{thm:lb-sc-sco}, this theorem also requires the assumption $n \leq O(d / \eps^2)$. This assumption is required for the error between the empirical mean and the population mean to be small enough to be dominated by the error term $\Omega\left(\frac{G^2}{\mu} \cdot \frac{d}{\eps^2 n^2 m}\right)$.

\begin{proof}[Proof of \Cref{thm:lb-sc-erm}]
Let $\ell, B, \sigma, \psi , \alpha$ be exactly as in the setting of \Cref{thm:lb-sc-sco}. Similarly, there must exist some truncated Gaussian distribution $\cD = \cN(\chi, \sigma^2I_d; B)$ such that, for any $(\eps, \delta)$-DP algorithm $\cM$, we have
\begin{align*}
\E_{\hchi \gets \cM(\cD^{n \times m})}[\|\hchi - \chi^{\tr}(\chi, \sigma^2I_d; B)\|^2] \geq \Omega(\alpha^2) = \Omega\left(\frac{G^2}{\mu^2} \cdot \frac{d}{\eps^2n^2m}\right).
\end{align*}
Let $\hchi(\bx)$ denote the empirical mean of the dataset $\bx$. The left hand side can be further expanded as
\begin{align*}
\lefteqn{
\E_{\hchi \gets \cM(\bx), \bx \gets \cD^{n \times m}}[\|\hchi - \chi^{\tr}(\chi, \sigma^2I_d; B)\|^2]} \\
&\leq 2\left(\E_{\hchi \gets \cM(\bx), \bx \gets \cD^{n \times m}}[\|\hchi - \hchi(\bx)\|^2] + \E_{\bx \gets \cD^{n \times m}}[\|\hchi(\bx) - \chi^{\tr}(\chi, \sigma^2I_d; B)\|^2]\right) \\
&= 2 \E_{\hchi \gets \cM(\bx), \bx \gets \cD^{n \times m}}[\|\hchi - \hchi(\bx)\|^2] + O\left(\frac{B^2}{nm}\right) \\
&= 2 \E_{\hchi \gets \cM(\bx), \bx \gets \cD^{n \times m}}[\|\hchi - \hchi(\bx)\|^2] + O\left(\frac{G^2}{\mu^2} \cdot \frac{1}{nm}\right).
\end{align*}
Since we assume that $n \leq  c d / \eps^2$, we have $\frac{1}{nm} \leq c \cdot \frac{d}{\eps^2 n^2 m}$. Therefore, when $c$ is sufficiently small, we can combine the previous two inequalities to conclude that
\begin{align} \label{eq:emp-mean-acc}
\E_{\hchi \gets \cM(\bx), \bx \gets \cD^{n \times m}}[\|\chi - \hchi(\bx)\|^2] \geq \Omega\left(\frac{G^2}{\mu^2} \cdot \frac{d}{\eps^2n^2m}\right).
\end{align}

Finally, the excess (empirical) risk can be expanded as
\begin{align*}
\E_{\htheta \gets \cM(\bx), \bx \gets \cD^{n \times m}}\left[\cL(\htheta; \bx) - \cL(\theta^*; \bx)\right] = \frac{\mu}{2} \cdot \E_{\htheta \gets \cM(\bx), \bx \gets \cD^{n \times m}}[\|\htheta - \hchi(\bx)\|^2] \overset{\text{\eqref{eq:emp-mean-acc}}}{\geq} \Omega\left(\frac{G^2}{\mu} \cdot \frac{d}{\eps^2 n^2 m}\right). & & & \qedhere
\end{align*}
\end{proof}

\end{document}